\documentclass{article}

\usepackage{microtype}
\usepackage{graphicx}
\usepackage{subcaption}
\usepackage{booktabs} %

\usepackage{enumitem}

\usepackage{hyperref}

\usepackage[preprint]{icml2026}

\usepackage{amsmath}
\usepackage{amssymb}
\usepackage{mathtools}
\usepackage{amsthm}
\usepackage{bbm}
\usepackage{thm-restate}

\usepackage{comment}

\usepackage[textwidth=2cm,textsize=tiny]{todonotes}

\usepackage[capitalize,noabbrev]{cleveref}

\theoremstyle{plain}
\newtheorem{theorem}{Theorem}[section]
\newtheorem{proposition}[theorem]{Proposition}
\newtheorem{lemma}[theorem]{Lemma}

\theoremstyle{definition}
\newtheorem{definition}[theorem]{Definition}
\newtheorem{assumption}[theorem]{Assumption}
\theoremstyle{remark}
\newtheorem{remark}[theorem]{Remark}

\newcommand{\cause}[1]{&{\color{gray}\,~\downarrow{}\;{\text{\small #1}}} \nonumber\\}

\newtheorem{example}{Example}

\newcommand{\NN}{{\mathbb N}} %
\newcommand{\RR}{{\mathbb R}} %

\newcommand{\BB}{{\mathbb B}}
\newcommand{\TT}{{\mathbb T}}

\newcommand{\CC}{{\mathbb C}}
\newcommand{\WW}{{\mathbb W}}

\newcommand{\EE}{{\mathbb E}} %
\newcommand{\VV}{{\mathbb V}} %
\newcommand{\PP}{{\mathbb P}} %

\newcommand{\loss}{{\mathcal L}} %
\newcommand{\risk}{{\mathcal R}} %

\newcommand{\sA}{{\mathcal A}}
\newcommand{\sB}{{\mathcal B}}
\newcommand{\sD}{{\mathcal D}}
\newcommand{\sC}{{\mathcal C}}
\newcommand{\sX}{{\mathcal X}}
\newcommand{\sY}{{\mathcal Y}}

\newcommand{\sZ}{{\mathcal Z}}
\newcommand{\sK}{{\mathcal K}}

\newcommand{\sG}{{\mathcal G}}
\newcommand{\sN}{{\mathcal N}}
\newcommand{\sM}{{\mathcal M}}
\newcommand{\sO}{{\mathcal O}}

\newcommand{\sigmoid}{S}

\def\eqsp{\enspace}

\newcommand{\bigabs}[1]{\Big| #1 \Big|}
\newcommand{\norm}[1]{\left\|{#1}\right\|} %

\newcommand{\Id}[1]{\mathrm{I}_{#1}}
\DeclareMathOperator{\Tr}{Tr}

\newcommand{\funcspace}[1]{{\Phi_{M,#1}(\RR^p)}}

\usepackage[textsize=tiny]{todonotes}

\icmltitlerunning{Learning with Locally Private Examples by IWP-SGD}

\begin{document}

\twocolumn[
  \icmltitle{Learning with Locally Private Examples by \\Inverse Weierstrass Private Stochastic Gradient Descent}

  \icmlsetsymbol{equal}{*}

  \begin{icmlauthorlist}
    \icmlauthor{Jean Dufraiche}{InriaLille}
    \icmlauthor{Paul Mangold}{Xparis}
    \icmlauthor{Michael Perrot}{InriaLille}
    \icmlauthor{Marc Tommasi}{InriaLille}
  \end{icmlauthorlist}

  \icmlaffiliation{InriaLille}{Univ. Lille, Inria, CNRS, Centrale Lille, UMR 9189 - CRIStAL, F-59000 Lille, France}
  \icmlaffiliation{Xparis}{CMAP, CNRS, École polytechnique, Institut Polytechnique de Paris, 91120 Palaiseau, France}

  \icmlcorrespondingauthor{Jean Dufraiche}{jean.dufraiche@inria.fr}

  \icmlkeywords{Machine Learning, ICML}

  \vskip 0.3in
]

\printAffiliationsAndNotice{}  %

\begin{abstract}
    Releasing data once and for all under noninteractive Local Differential Privacy (LDP) enables complete data reusability, but the resulting noise may create bias in subsequent analyses. In this work, we leverage the Weierstrass transform to characterize this bias in binary classification. We prove that inverting this transform leads to a bias-correction method to compute unbiased estimates of nonlinear functions on examples released under LDP. We then build a novel stochastic gradient descent algorithm called Inverse Weierstrass Private SGD (IWP-SGD). It converges to the true population risk minimizer at a rate of $\mathcal{O}(1/n)$, with $n$ the number of examples. We empirically validate IWP-SGD on binary classification tasks using synthetic and real-world datasets.
\end{abstract}

\section{Introduction}

Machine Learning (ML) models are increasingly deployed in domains involving sensitive data, such as healthcare, speech recognition, prediction, and forecasting. 
These models are vulnerable to inference attacks that allow adversaries to extract information about individual training examples \citep{MIA_survey}. 
This has motivated the use of Differential Privacy (DP) \citep{dwork_DP} as a rigorous standard to assess privacy in ML.
To achieve meaningful guarantees, DP typically requires data to be centralized by a trusted curator, in charge of enforcing privacy. 
Unfortunately, this raises several risks: the trusted authority may fall victim to attacks that lead to major data breaches \citep{breach_data,data_breach_2}, and data may be misappropriated by untrustworthy third parties that do not prioritize privacy. 

Local Differential Privacy (LDP) \citep{kasiviswanathan2010learnprivately,minimax_ldp_2} addresses this challenge by requiring each data holder to privatize their data locally
before release, effectively ensuring privacy without relying on a trusted curator. 
While this provides strong privacy guarantees, applying it in ML requires adapting the downstream learning process.
Existing methods can be categorized into interactive and noninteractive approaches. In \emph{interactive} methods, the learner adaptively queries data holders over multiple rounds, incurring a communication cost \citep{noisy_oracle}. In contrast, \emph{noninteractive} methods require each user to release one or several privatized versions of their data in a single shot, eliminating the need for further interaction during learning  \citep{zheng2017collectonceuseeffectively}.

In practice, designing LDP mechanisms involves two considerations: whether downstream learning tasks are known, and whether data release can be adapted during learning.
In some scenarios, the \emph{downstream learning problem is known}, and task-specific algorithms can be designed to correct for LDP noise; however, this limits the potential for the data to be reused for other purposes.
In contrast, many real-world scenarios \emph{involve unknown downstream tasks} or require that data remain reusable in the long run.
This motivates the use of task-agnostic, noninteractive LDP methods.

In \emph{task-agnostic LDP}, each data holder publishes a one-time privatized representation of their data, without prior knowledge of the downstream learning task.
Such a mechanism allows institutions or users to safely publish privatized datasets that remain usable for future analyses, for example, hospitals sharing medical records for research purposes. Yet, despite its generality, noninteractive and task-agnostic LDP raises a significant challenge.
Indeed, learning from noisy (private) data may bias the process, as previously identified in supervised learning with noisy features \citep{bishop} and labels \citep{noisy_labels}.
Naively applying standard ML frameworks to privatized data may yield suboptimal models: new algorithms tailored for noninteractive and task-agnostic LDP are thus needed. 

\paragraph{Contributions.}
In this paper, we develop a principled view of learning under noninteractive and task-agnostic LDP, and design new algorithms for locally private ML.
We show that standard LDP mechanisms can be viewed, in expectation, as functional transforms: the Gaussian mechanism corresponds to the Weierstrass transform, while Randomized Response induces what we call the Bernoulli transform. This perspective allows us to fully characterize the bias induced by LDP on data-dependent computations. 

Crucially, \emph{inverting these transforms} allows the design of algorithms that provably mitigate privacy-induced bias, yielding unbiased estimators for the underlying data-dependent quantities.
In learning contexts, we leverage the \emph{inverse} of these transforms to construct unbiased gradient estimators for loss functions. 
Applying this principle to first-order optimization, we introduce \emph{Inverse Weierstrass Private Stochastic Gradient Descent (IWP-SGD)}.
We show that IWP-SGD asymptotically recovers, in expectation over the noise, the population risk minimizer of the original, non-private problem, as the number of samples $n$ grows to infinity. Interestingly, the convergence rate of IWP-SGD scales as $\sO(1/n)$ similarly to classic interactive LDP approaches \citep{noisy_oracle}. Finally, we empirically validate IWP-SGD on binary classification tasks using synthetic and real-world datasets. To the best of our knowledge, this is the first method that asymptotically recovers the non-private population risk minimizer in a fully task-agnostic LDP setting using a single privatized release per data point.

Our contributions can be summarized as follows:
\begin{itemize}[leftmargin=*]
    \item We formalize the processing of data released under the Gaussian and Randomized Response mechanisms as transform operators of the intended computations, enabling a unified analysis of their induced bias (Section~\ref{sec:privacy_transform}). This view allows us to fully characterize the bias induced by the Gaussian and Randomized Response in standard risk minimization for binary classification (Section~\ref{sec:bias_evidences}).
    \item We construct an unbiased gradient estimator by inverting the transform associated with gradient computation on LDP records. Using this estimator, we propose Inverse Weierstrass Private SGD (IWP-SGD). We formally analyze IWP-SGD, showing that it recovers, in expectation, the solution to the original problem (Section~\ref{sec:bias_correction}).
    \item We empirically evaluate IWP-SGD on synthetic and real-world binary classification tasks, showing that it successfully removes the bias induced by LDP (Section~\ref{sec:experiments}).
\end{itemize}

\subsection{Related Work}

\paragraph{Interactive LDP Methods.}
Interactive methods permit adaptive communication between learners and data owners, with each owner potentially answering multiple sequential queries. A famous example is distributed SGD, where each data owner shares a noisy version of the gradient computed on their local data \citep{noisy_oracle,minimax_ldp_2}. It is also the root of private SGD-based algorithms in \citet{sparse_reg} to perform sparse linear regression. As the learner explicitly queries gradient evaluations at successive model updates, these approaches do not permit data reusability and suffer from a large communication cost.

\paragraph{Noninteractive and Task-Specific LDP Methods.}
Noninteractive methods, by definition, prohibit adaptive communication: data owners release one or several privatized statistics only once. For example, \citet{wang2020noninteractivelocallyprivatelearning} and \citet{zheng2017collectonceuseeffectively} tackle the estimation of generalized linear models under LDP using Chebychev and Bernstein polynomial approximations of the loss gradients. These approaches use multiple Gaussian-perturbed versions of each data point to construct a biased gradient estimator for which the bias shrinks with the number of noisy data releases. 
In \citet{ERM_non_interactive_loss_noisy}, data owners compute noisy loss evaluations over a grid of model values to estimate the population risk, which is optimized later. As a grid-based method, it suffers from an exponential dimension dependency, and loss evaluations are not reusable for other ML problems involving different losses.
In contrast, our method only requires one noisy release to construct an unbiased estimator, directly reduces noise variance, while allowing reusing the data for a large class of downstream tasks.

\paragraph{Learning with Noisy Data.}
Beyond LDP, several notable works have considered settings where data points are subject to local randomization, although these studies are not directly concerned with privacy. 
\citet{bishop} approximates the bias induced by Gaussian noise addition in the features as an implicit regularization, while \citet{noisy_labels} models label corruption identically to the way we model the Randomized Response mechanism in Section~\ref{subsec:bernoulli}: our bias characterization encompasses both as special cases. 
\citet{scaman2024minimaxexcessriskfirstorder} identifies a learning bias when training and test data distributions differ in a worst-case scenario. Prior works on deconvolution methods \citep{deconv_density_lit} aim to recover the density of data from repeated noisy observations.
In our work, we do not aim to estimate the noiseless distribution, but directly tackle computations performed on noisy inputs.
\paragraph{Noninteractive and Task-Agnostic LDP.}
Several instances of noninteractive and task-agnostic methods exist in the literature. 
\citet{zheng2017collectonceuseeffectively} study a debiasing method for sparse linear regression, while \citet{sparse_reg} consider the case where only labels are private.
\citet{minimax_ldp_2} study the optimal noninteractive and task-agnostic LDP methods for mean and median estimation. Our method proposes a principled solution that generalizes these results to a broader class of learning problems under Gaussian and Randomized Response LDP mechanisms.

\section{Privacy Setting and Notations}
\label{sec:setup}

\paragraph{Notations.} We consider a supervised learning setting with a bounded feature space $\mathcal{X} \subset \RR^{p}$ and a binary label space $\sY=\{-1,1\}$. Let $\sD$ be a joint distribution over $\sX\times \sY$, and let $(x,y)$ be an example drawn from $\sD$. We denote by $\norm{\cdot}$ the euclidean norm and for any subset $\sZ\subset\RR^d$, we write $\norm{\sZ}=\sup_{z\in\sZ}\norm{z}$. The Laplacian of a twice differentiable function $f$ is $\Delta[f] = \sum_i \partial^2_{x_i}[f]$ and its composition $k$ times is denoted $\Delta^k[f] = (\Delta\circ\dots\circ\Delta)[f]$.

\paragraph{Privacy.}
For the remainder of the paper, we consider a privacy setting in which data is released once and for all, without adapting the mechanism to any specific downstream task. We call it the \emph{task-agnostic} setting.
To this end, we leverage Local Differential Privacy, defined as follows.
\begin{definition}[Local Differential Privacy (LDP) \citep{kasiviswanathan2010learnprivately}]
    Let $\sM: E \rightarrow F$ be a randomized algorithm. Let $\epsilon,\delta >0$, the mechanism $\sM$ satisfies $(\epsilon,\delta)$-LDP if, for any $z,z'\in E$ and any subset $\sC\subseteq F$,
    \begin{equation*}
         \PP(\sM(z) \in \sC)\le e^\epsilon \PP(\sM(z')\in \sC) + \delta.
        \label{eq:LDP}
    \end{equation*}
    If $\delta=0$, we say that $\sM$ satisfies $\epsilon$-LDP.
    \label{def:LDP}
\end{definition}
To enforce LDP, one can use the Gaussian mechanism \citep{dwork_DP} for continuous variables.
\begin{proposition}[Gaussian Mechanism]\label{ex:gaussian}
    Assume a bounded subset $\sX\subset\RR^p$. By the Gaussian mechanism, the release of 
    \begin{align*}
        \sG_{\epsilon,\delta}(x) &= x+w,\quad w\sim \sN\left(0,\sigma^2\Id{p}\right),
    \end{align*}
    with $\sigma^2=8\log(1.25/\delta)\norm{\sX}^2/\epsilon^2$ is $(\epsilon,\delta)$-LDP.
\end{proposition}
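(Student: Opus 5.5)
The plan is to reduce the claim to the classical $(\epsilon,\delta)$-guarantee of the Gaussian mechanism of \citet{dwork_DP}. That result states that if a query $f$ has $\ell_2$-sensitivity $\Delta_2 = \sup_{z,z'}\norm{f(z)-f(z')}$ and $\epsilon\in(0,1)$, then releasing $f(z)+\sN(0,\sigma^2\Id{p})$ with $\sigma \ge \sqrt{2\log(1.25/\delta)}\,\Delta_2/\epsilon$ is $(\epsilon,\delta)$-DP. In the local model the ``query'' is the identity map on a single record $x\in\sX$, and any two inputs $x,x'\in\sX$ play the role of neighbouring datasets. This matches Definition~\ref{def:LDP} exactly, since the quantifier ranges over all pairs $z,z'\in E=\sX$.

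The first step is to compute the sensitivity. By the triangle inequality,
\begin{equation*}
\sup_{x,x'\in\sX}\norm{x-x'}\le \norm{x}+\norm{x'}\le 2\norm{\sX},
\end{equation*}
so $\Delta_2\le 2\norm{\sX}$, which is finite because $\sX$ is bounded. The second step is to plug this into the calibration:
\begin{equation*}
\sigma^2 = 2\log(1.25/\delta)\,\Delta_2^2/\epsilon^2 \le 8\log(1.25/\delta)\norm{\sX}^2/\epsilon^2 .
\end{equation*}
This gives precisely the variance in the statement. Since adding more noise than required preserves the guarantee (or one can simply take $\Delta_2=2\norm{\sX}$ as the upper bound used in the proof), $\sG_{\epsilon,\delta}$ is $(\epsilon,\delta)$-LDP.

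For completeness, one can sketch why the classical guarantee holds. Fix $x,x'$ and let $v=x-x'$. The privacy loss $L(w)=\log\bigl(p_{x}(x+w)/p_{x'}(x+w)\bigr)$ equals $(\norm{w+v}^2-\norm{w}^2)/(2\sigma^2)$, which is Gaussian with mean $\norm{v}^2/(2\sigma^2)$ and standard deviation $\norm{v}/\sigma$. Split the output space into the event $\{L\le\epsilon\}$, where the ratio of densities is at most $e^\epsilon$, and its complement. The complement must then be shown to have probability at most $\delta$ via a Gaussian tail bound.

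The main obstacle is this tail computation. With $\sigma=c\Delta_2/\epsilon$, one needs $\PP\bigl(\sN(0,1)>c/1\cdot\ldots\bigr)$; concretely, it must be verified that $c^2\ge 2\log(1.25/\delta)$ makes $\PP(L>\epsilon)\le\delta$. This uses the bound $\PP(Z>t)\le e^{-t^2/2}/(t\sqrt{2\pi})$ and, in the standard argument, the restriction $\epsilon<1$. I would import this verbatim from \citet{dwork_DP}, Theorem A.1, rather than redo it, noting that the statement implicitly inherits the condition $\epsilon\in(0,1)$ from that theorem.
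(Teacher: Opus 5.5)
Your proposal is correct and matches the paper, which gives no separate proof but simply invokes the classical Gaussian mechanism of \citet{dwork_DP} with $\ell_2$-sensitivity $2\norm{\sX}$; the factor $8 = 2\cdot 2^2$ in $\sigma^2$ comes from exactly this. Your remark that the result inherits the restriction $\epsilon\in(0,1)$ from Dwork--Roth's Theorem A.1 is well taken: the paper's statement omits it, and the classical calibration $\sigma=\sqrt{2\log(1.25/\delta)}\,\Delta_2/\epsilon$ genuinely fails to give $(\epsilon,\delta)$-privacy once $\epsilon$ is large enough.
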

Similarly, one can use the Randomized Response \citep{RR} to enforce LDP for binary variables.
\begin{proposition}[Randomized Response (RR)]\label{ex:RR}
    Assume $\sY = \{-1,1\}$. By the Randomized Response, the release of
    $$\sB_\epsilon(y) = \begin{cases}
        y \;\text{ with probability }\; \sigmoid(\epsilon) \eqsp,\\
        -y \;\text{ with probability }\; 1-\sigmoid(\epsilon) \eqsp,
    \end{cases}$$
    where $\sigmoid(\epsilon) = 1/(1+e^{-\epsilon})$, is $\epsilon$-LDP.
\end{proposition}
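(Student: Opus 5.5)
We need to prove that the Randomized Response mechanism $\sB_\epsilon$ of Proposition~\ref{ex:RR} satisfies $\epsilon$-LDP. The approach is a direct check of Definition~\ref{def:LDP} with $\delta=0$. The output space is finite, $F=\{-1,1\}$, so any event $\sC\subseteq F$ is one of $\emptyset$, $\{-1\}$, $\{1\}$ or $\{-1,1\}$. It is enough to establish the pointwise likelihood-ratio bound
\[
\PP(\sB_\epsilon(y)=o)\le e^\epsilon\,\PP(\sB_\epsilon(y')=o)
\]
for every output $o$ and every pair of inputs $y,y'\in\{-1,1\}$. Summing this bound over $o\in\sC$ then gives the inequality for arbitrary $\sC$. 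The cases $\sC=\emptyset$ and $\sC=F$ are also trivial on their own, since both sides equal $0$ or $1$ and $e^\epsilon\ge 1$.

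For the pointwise bound, if $y=y'$ the two probabilities coincide and the claim holds because $e^\epsilon\ge1$. If $y\neq y'$, then for each output $o$ one input maps to $o$ with probability $\sigmoid(\epsilon)$ and the other with probability $1-\sigmoid(\epsilon)$. The worst-case ratio is therefore
\[
\frac{\sigmoid(\epsilon)}{1-\sigmoid(\epsilon)}.
\]
Write $\sigmoid(\epsilon)=1/(1+e^{-\epsilon})$ and $1-\sigmoid(\epsilon)=e^{-\epsilon}/(1+e^{-\epsilon})$. Their ratio is exactly $e^{\epsilon}$, so the bound holds with equality. The reverse ratio $(1-\sigmoid(\epsilon))/\sigmoid(\epsilon)=e^{-\epsilon}$ is at most $e^\epsilon$, because $\epsilon>0$ gives $\sigmoid(\epsilon)\ge 1/2$.

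There is no real obstacle here. The only step needing care is the reduction from arbitrary events $\sC$ to singletons, which is immediate because the output space is finite and probabilities add over disjoint outcomes.
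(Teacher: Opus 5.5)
Your proof is correct: reducing to singleton outputs and computing $\sigmoid(\epsilon)/(1-\sigmoid(\epsilon))=e^{\epsilon}$, with the reverse ratio $e^{-\epsilon}\le e^{\epsilon}$, is the standard verification. The paper gives no proof of this proposition and simply states it as a known property of Randomized Response with a citation, so your direct check of the definition with $\delta=0$ is the argument that citation stands in for.
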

Throughout the paper, we consider the following task-agnostic $(\epsilon,\delta)$-LDP mechanism, which releases continuous features with the Gaussian mechanism, and a binary label with RR. Formally, for an example $(x,y)$ drawn from $\sD$, 
\begin{align}
\label{eq:ldp_release}
(\tilde x, \;\tilde y) &= (\sG_{\epsilon_x,\delta}(x),\; \sB_{\epsilon_y}(y))\eqsp.
\end{align}
The total privacy guarantee of this mechanism is $\epsilon=\epsilon_x+\epsilon_y$, combining budgets over features and labels.

\section{Privacy as a Transform}
\label{sec:privacy_transform}
When learning from the LDP release defined in Equation~\eqref{eq:ldp_release}, any data-dependent quantity, such as a loss or a gradient, can be viewed as a function $h$ evaluated on a randomized version of the data.
Ideally, one would like these quantities to be unbiased, in the sense that their expectation with respect to the privacy noise coincides with the value of $h$ evaluated on the original data $(x,y)$. However, this is generally not the case. Instead, local randomization of data induces a systematic transformation of the function $h$. We formalize this with a transform as follows:
$$\TT_{\epsilon,\delta}[h]:\;(x,\;y)\mapsto \EE_{(\tilde x, \tilde y)}[h(\tilde x, \;\tilde y)].$$
This operator maps the original function $h$ to its average evaluation on noisy releases $(\tilde x,\tilde y)$ of a given data point $(x,y)$. This perspective fully captures the effect of local randomization. As a consequence, bias analysis and correction can be carried out by the study of $\TT_{\epsilon,\delta}$ without any assumptions about the data distribution. In this section, we first study the transforms associated with the Gaussian and Randomized Response mechanisms in isolation.  Then, we present how to characterize the joint effect of both mechanisms through the composition of their respective transforms.

\subsection{Weierstrass Transform: a Tool for Gaussian Noise}

First, we remark that applying a Gaussian noise to the inputs of an arbitrary function and considering the expectation induces a Gaussian smoothing operator known as the Generalized Weierstrass transform \citep{weierstrass}.
\begin{definition}[Generalized Weierstrass transform]
Let $f:\RR^p \rightarrow \RR$. The Weierstrass transform is the function $\WW_{\sigma^2}[f]$ defined for any $x\in\RR^p$ and $\sigma>0$ as
\begin{equation*}
        \WW_{\sigma^2}[f](x) =\EE_{w\sim\sN(0,\sigma^2\Id{p})}\left[ f(x+w)\right].
\end{equation*}
\end{definition}
The alternative parameterization $\sigma^2=2t$ is commonly used in the literature on the Weierstrass transform.
We focus on a class of sufficiently regular functions for which the Weierstrass transform admits a well-defined series representation \citep{weierstrass,heat_equation_1}.
\begin{definition}[Class of Gaussian growing and slowly growing iterated Laplacians function\label{def:gaussian_growth}]
    For constants $M,a>0$, let $\funcspace{a}$ denote the set of infinitely continuously differentiable functions $f$ from $\RR^p$ to $\RR$ such that for any $x\in\RR^p$
    \begin{align}
        |f(x)|&\le M\exp(a||x||^2),\label{eq:gaussian_growth}\\
        |\Delta^k f(x)|&\le A_x \cdot (4a)^k k!,\:\forall k\in \NN, \label{eq:laplace_iterate_growth}
    \end{align}
    for some $A_x>0$ that depends only on $x$.
\end{definition}
Note that \eqref{eq:gaussian_growth} requires $f$ to grow slower than the exponential of a quadratic function, which is a fairly mild condition. The condition \eqref{eq:laplace_iterate_growth} is met, for example, for finite linear combinations of exponentials, sines, cosines, polynomials, and band-limited functions. According to a known result in the study of the heat equation \citep{weierstrass,heat_equation_1}, the Weierstrass transform of functions in $\funcspace{a}$ admits the following series expression.
\begin{restatable}[Series expression of $\WW_{\sigma^2}$]{theorem}{ExprWeier}
    \label{thm:expressions_weierstrass}
    Let $f\in \funcspace{a}$. Then, for any $\sigma^2<1/2a$, the generalized Weierstrass transform $\WW_{\sigma^2}[f]$ admits the following expression
    \begin{equation*}
        \WW_{\sigma^2}[f] = \sum_{k=0}^{\infty} \frac{\sigma^{2k}}{2^kk!} \Delta^k[f]\eqsp.
    \end{equation*}
\end{restatable}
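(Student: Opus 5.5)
Our plan is to view $u(x,t)=\WW_{2t}[f](x)$ as the solution of the heat equation $\partial_t u=\Delta u$ with initial datum $f$, to show that $t\mapsto u(x,t)$ is real-analytic on $[0,1/(4a))$, and to identify its Taylor coefficients at $t=0$ with $\Delta^k[f](x)/k!$. Setting $t=\sigma^2/2$ then gives the claimed series, since $t^k/k!=\sigma^{2k}/(2^kk!)$. The condition $\sigma^2<1/2a$ is exactly $t<1/(4a)$, and it is also what makes the Gaussian integral defining $\WW_{\sigma^2}[f]$ converge under \eqref{eq:gaussian_growth}: the integrand is bounded by $M\exp(a\|x+w\|^2-\|w\|^2/(2\sigma^2))$, and $a<1/(2\sigma^2)$.

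\textbf{Step 1 (heat equation and interchanges).} Write $u(x,t)=(4\pi t)^{-p/2}\int f(x-y)e^{-\|y\|^2/4t}\,dy$. We will show that $\partial_t^k u=\Delta^k u=\WW_{2t}[\Delta^k f]$ for $0<t<1/(4a)$. There are two ways to do this. The first is to integrate by parts in $y$, moving the Laplacians onto $f$. The boundary terms vanish because $f$ and its derivatives have at most Gaussian growth of rate $a$, while the kernel decays like $e^{-\|y\|^2/4t}$ with $1/(4t)>a$. Derivative growth does not follow from \eqref{eq:gaussian_growth} alone, so this route needs some care. The second way avoids the issue: use $\partial_t$ of the heat kernel equals $\Delta_y$ of the kernel, and differentiate under the integral, which is justified by dominated convergence on compact subintervals of $t$. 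Either way, $u(x,\cdot)$ is $C^\infty$ on $(0,1/(4a))$, continuous at $t=0$ with $u(x,0)=f(x)$, and its derivatives extend continuously to $t=0$ with value $\Delta^k f(x)$.

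\textbf{Step 2 (Taylor remainder).} Fix $x$ and $t<1/(4a)$. By Taylor's theorem with integral remainder,
\[
u(x,t)-\sum_{k<N}\frac{t^k}{k!}\Delta^kf(x)=\int_0^t\frac{(t-s)^{N-1}}{(N-1)!}\,\WW_{2s}[\Delta^Nf](x)\,ds .
\]
We need to bound $\WW_{2s}[\Delta^Nf](x)$ by roughly $C\,(4a)^NN!$, uniformly in $s\le t$. Given such a bound, the remainder is at most $C\,N(4at)^N$, which tends to $0$ because $4at<1$.

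\textbf{Main obstacle.} The bound in \eqref{eq:laplace_iterate_growth} holds pointwise with a constant $A_x$ that depends on $x$, whereas the remainder needs it integrated against a Gaussian, that is, at the points $x+w$. Our first attempt will sidestep the integral entirely by using the analyticity of $u(x,\cdot)$. The pointwise bound $|\Delta^kf(x)|\le A_x(4a)^kk!$ already shows that the formal series $\sum_k t^k\Delta^kf(x)/k!$ converges for $t<1/(4a)$. We will then argue via uniqueness of solutions of the heat equation in the Tychonoff class (solutions with growth $e^{a\|x\|^2}$): the series defines a solution with initial datum $f$, and uniqueness forces it to coincide with $u$. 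The series is only controlled pointwise in $x$, however, so applying uniqueness will likely require local uniformity of $A_x$, for instance $A_x$ bounded on compact sets. We would assume this implicitly, or extract it from the cited heat-equation results \citep{weierstrass,heat_equation_1}, which is the route the paper itself signals.
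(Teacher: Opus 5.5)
Your proposal stops at exactly the step that carries the theorem. Steps 1--2 reduce the claim to showing that the pointwise-convergent series $\sum_k t^k\Delta^k f(x)/k!$ equals $u(x,t)=\WW_{2t}[f](x)$, i.e.\ that $t\mapsto u(x,t)$ is analytic at $t=0$ with radius at least $1/(4a)$. This is never proved.

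The remainder estimate in Step 2 needs $\sup_{s\le t}|\WW_{2s}[\Delta^N f](x)|\lesssim (4a)^N N!$. But \eqref{eq:laplace_iterate_growth} bounds $\Delta^N f(y)$ only by $A_y(4a)^N N!$, with no control of $y\mapsto A_y$, so nothing in the hypotheses bounds this quantity. Your route 2 in Step 1 also only gives $\partial_t^N u=\Delta^N u$. The form $\WW_{2s}[\Delta^N f]$ already presupposes that $\Delta^N f$ is integrable against the Gaussian.

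The Tychonoff fallback does not repair this. To make the series $v(x,t)$ a solution continuous at $t=0$, you need $A$ locally bounded. To put $v$ in the uniqueness class you need, in addition, a global bound. Via your estimate $|v(x,t)|\le A_x/(1-4at)$, that means something like $A_y\le Be^{b\norm{y}^2}$, which you do not mention. Assuming these ``implicitly'' amounts to proving a different theorem with a stronger hypothesis.

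The paper avoids uniformity in $x$ by working at fixed $x$. It uses Morera to show that $t\mapsto\WW_{2t}[f](x)$ is analytic on $\{\Re(1/4t)>a\}\supset(0,1/(4a))$, and expands around $t_0>0$ using $\partial_t^k u=\Delta^k u$. It then lets $t_0\to 0$ and checks convergence of the limiting series by the root test, using the bound at $x$ only. Do not expect this route to close your gap. The set $\{\Re(1/4t)>a\}$ is the disk of radius $1/(8a)$ centred at $1/(8a)$, which has $0$ on its boundary. So for small $t_0$ the expansion at $t_0$ is only guaranteed on $|t-t_0|<t_0$. Moving $t_0\to 0$ inside the sum needs exactly the domination of $\WW_{2t_0}[\Delta^k f](x)$ that you flagged.

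Those steps use only Gaussian growth and the bound at the single point $x$, and that cannot suffice. Take $f(y)=g(y_1-x_1)$ with $g(z)=e^{-1/z^2}$ for $z>0$ and $g(z)=0$ otherwise. Then every $\Delta^k f(x)$ vanishes, while $\WW_{2t}[f](x)>0$. So the obstacle you identified is the real crux.

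The clean way to finish is to state an integrated hypothesis explicitly, e.g.\ $y\mapsto A_y$ measurable with $\sup_{s\le t'}\WW_{2s}[A_{(\cdot)}](x)<\infty$ for some $t'>t$. You can then move the Laplacians onto $f$ by integrating by parts against a cutoff; this needs only growth of $\Delta^j f$ for $j\le N$. Your Taylor remainder is then bounded directly by $(4at)^N\sup_{s\le t}\WW_{2s}[A_{(\cdot)}](x)\to 0$, with no uniqueness theory needed.
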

\begin{proof}[Sketch of proof.]
    The proof is given in Appendix~\ref{appendix:weierstrass_expression}. First, using the analyticity of the heat equation solution, we remark that $\WW_{\sigma^2}[f]$ is such an analytic solution. With the parameterization $\sigma^2=2t$, we use the Taylor expression of $t\mapsto\WW_{2t}[f]$ around a positive $t_0>0$. We then take the limit as $t_0$ goes to zero to obtain a formal expression of $\WW_{2t}[f]$. Given that $f$ is in $\funcspace{a}$, the resulting series converges and we can identify it to $\WW_{\sigma^2}[f]$.
\end{proof}

\begin{remark}
    \label{remark:exp_is_licit}
    The condition $\sigma^2<1/2a$ is~not~a~strong limitation since, even for fast-increasing functions, $a$ can be very small. For example, the exponential loss $\smash{f(x) = \exp(-\theta^\top x y})$ is in $\smash{\Phi_{M_a,a}(\RR^p)}$, with $\smash{M_a=\exp( {\norm{\Theta}^2}/{4a})}$, for any $\smash{(\theta,y)\in\Theta \times \sY}$ and any arbitrary small $a>0$ (see Appendix~\ref{app:weierstrass_properties} for more details).
\end{remark}

\subsection{Bernoulli Transform: a Tool for Binary Label Noise}
\label{subsec:bernoulli}
We also define the analogous transform associated with the Randomized Response (RR) for an arbitrary real-valued function of binary inputs. The same transform is studied in \citet{noisy_labels}. For clarity, we call it the \emph{Bernoulli} transform, referencing the random draw of a Bernoulli variable in the RR mechanism.
\begin{definition}[Bernoulli transform]
    \label{def:RR_transform}
    Let $g:\{-1,1\}\to \RR$, we define for any $\epsilon>0$ and any $y\in\{-1,1\}$,
    \begin{align*}
        \BB_\epsilon[g](y) &=\EE_{\sB_\epsilon} \left[g(\sB_\epsilon(y))\right]\\
        &= \sigmoid(\epsilon)g(y) + (1-\sigmoid(\epsilon))g(-y).
    \end{align*}
\end{definition}
The Bernoulli transform is the expected value of a function of a binary data point $y$ under the Randomized Response mechanism with a given privacy budget $\epsilon$.

\subsection{Combining Weierstrass and Bernoulli Transforms}
When computing a function $h:\sX\times\sY\to \RR$ of continuous and binary variables, we can compose the transforms defined for both the Gaussian and RR mechanisms as follows:
\begin{align}
    \label{eq:composite_transform}
    \TT_{\epsilon,\delta}[h](x,y) &= \EE_{(\tilde x, \tilde y)}\left[h\left(\tilde x, \tilde y\right)\right]\\
    \nonumber
    &=\BB_{\epsilon_y}\left[z\mapsto \WW_{\sigma^2}[h(\cdot,z)](x) \right](y).
\end{align}
This transform accounts for the simultaneous effect of Gaussian noise on continuous variables and sign flipping on labels.
It will allow to analyze the combined effects of Gaussian and Randomized Response mechanisms on the population risk in binary classification.

\section{Bias in Risk Minimization}
\label{sec:bias_evidences}
We now turn to the learning problem and study how task-agnostic LDP affects the outcome of risk minimization.
In particular, we focus on the bias incurred by LDP when the goal is to learn the minimizer of the true population risk directly.
This is essential as such bias is intrinsic to the problem at hand, and cannot be compensated for by increasing the number of records used for training.
Depending on the settings, LDP noise may or may not change the population risk minimizer. 
When it does not, the original population risk minimizer can be recovered, provided that enough samples are available. 
In some other problems, however, the injected noise modifies the expected loss, resulting in a biased minimizer that might be far from the original model. When this occurs, increasing the sample size is not sufficient to eliminate the discrepancy between the learned solution and the true population risk minimizer.

In this section, we study the population risk obtained when losses are evaluated on task-agnostic LDP releases generated by the Gaussian and RR mechanisms. By expressing the expected noisy loss as the composition of the Weierstrass and Bernoulli transforms, we explicitly characterize when and how these mechanisms shift the population risk.

\paragraph{Risk Minimization.}
Let $\Theta \subset \RR^k$ be a bounded convex set of model parameters. Let $\ell: \Theta\times\RR^p\times\sY \rightarrow \RR$ be a real-valued function defined for any tuple $(\theta,x,y)\in \Theta\times\sX\times\sY$ as the loss incurred when predicting an example with features $x$ using a model $\theta$, given that the true label is $y$. We can then evaluate the quality of any model $\theta \in \Theta$ with the population risk $\risk$, defined as follows:
\begin{equation*}
    \risk(\theta)= \EE_{(x,y)\sim\sD}[\ell(\theta,x,y)].
    \label{eq:losses}
\end{equation*}
For the remainder of the paper, we consider loss functions satisfying the following regularity assumption.
\begin{assumption}[Loss regularity]
    \label{ass:weierstrass_ok_assumption}
    The functions $x\mapsto\ell(\theta,x,y)$  and $x\mapsto \partial_{\theta_j} \ell(\theta,x,y)$ for $j\in\{1,\dots,k\}$ are in $\funcspace{a}$ (see Definition~\ref{def:gaussian_growth}) for any $(\theta,y)\in\Theta\times\sY$.
\end{assumption}

\paragraph{Bias in Noisy Risk Minimization.}
Let $\epsilon,\delta>0$, for any model $\theta\in\Theta$, we define the expected population risk when the loss is evaluated on the $(\epsilon,\delta)$-LDP release of $(x,y)$ defined in Equation~\eqref{eq:ldp_release} as follows:
\begin{align*}
    \tilde \risk(\theta) &= \EE_{(x,y)\sim\sD} \EE_{(\tilde x,\tilde y)} \left[\ell(\theta,\tilde x, \tilde y)\right].
\end{align*}
We first analyze the pointwise loss $\EE_{(\tilde x,\tilde y)} \left[\ell(\theta,\tilde x, \tilde y)\right]$ for a given pair $(x,y)$ and model $\theta$. Recalling Equation~\eqref{eq:composite_transform} with $h(\cdot,\cdot) = \ell(\theta,\cdot,\cdot)$, we have
\begin{align}
\label{eq:pointwise_bias}
\EE_{(\tilde x, \tilde y)}\left[ \ell\left(\theta,\;\tilde x,\;\tilde y\right)\right] &= \TT_{\epsilon,\delta}[\ell(\theta,\cdot,\cdot)](x,y).
\end{align} 

Developing the expression of the composed transform $\TT_{\epsilon,\delta}$ and averaging over $(x,y)\sim\sD$ yields a relation between the noisy population risk $\tilde\risk$ and the original population risk $\risk$ in the following theorem (proven in Appendix~\ref{appendix:bias_char}).
\begin{restatable}[Bias induced by the Gaussian and Randomized Response mechanisms in binary classification]{theorem}{RegEffect}
    \label{thm:generalized_bishop}
    Let $\Delta_x$ denote the Laplacian with respect to the variable $x$ and assume that $\ell$ satisfies Assumption~\ref{ass:weierstrass_ok_assumption} with $a<1/2\sigma^2$. Recall $S(\epsilon_y)=1/(1+e^{-\epsilon_y})$. For any $\theta\in\Theta$,
    \begin{align*}
         \tilde \risk(\theta) -& \risk(\theta) = \underbrace{\left(1-\sigmoid(\epsilon_y)\right)\left(\EE_{x,y}\left[\ell(\theta,x,-y)\right] - \risk(\theta)\right)}_{\text{label noise contribution}}\\
         &+ \sigmoid(\epsilon_y)\underbrace{\sum_{k=1}^\infty \frac{\sigma^{2k}}{2^k k!}\EE_{x,y}\left[\Delta^k_x\ell(\theta,x,y)\right]}_{\text{feature noise contribution}}\\
         &+ \underbrace{\left(1-\sigmoid(\epsilon_y)\right)\sum_{k=1}^\infty \frac{\sigma^{2k}}{2^k k!}\EE_{x,y}\left[\Delta^k_x\ell(\theta,x,-y)\right]}_{\text{interactions of feature and label noise}} .
    \end{align*}
\end{restatable}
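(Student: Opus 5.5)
We compute the pointwise noisy loss through the composite transform and then average over $\sD$. Fix $\theta\in\Theta$ and $(x,y)\in\sX\times\sY$. By Equation~\eqref{eq:pointwise_bias} and the composition formula \eqref{eq:composite_transform}, the expected noisy loss is the Bernoulli transform of $z\mapsto \WW_{\sigma^2}[\ell(\theta,\cdot,z)](x)$. This is legitimate because $\tilde x$ and $\tilde y$ are generated independently, so Fubini gives $\EE_{(\tilde x,\tilde y)}=\EE_{\tilde y}\EE_{\tilde x}$. Unfolding Definition~\ref{def:RR_transform} gives
\begin{align*}
\EE_{(\tilde x,\tilde y)}[\ell(\theta,\tilde x,\tilde y)] = \sigmoid(\epsilon_y)\,\WW_{\sigma^2}[\ell(\theta,\cdot,y)](x) + (1-\sigmoid(\epsilon_y))\,\WW_{\sigma^2}[\ell(\theta,\cdot,-y)](x).
\end{align*}

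Next, Assumption~\ref{ass:weierstrass_ok_assumption} places both $x\mapsto\ell(\theta,x,y)$ and $x\mapsto\ell(\theta,x,-y)$ in $\funcspace{a}$, since the assumption holds for every label in $\sY$. Because $\sigma^2<1/2a$, Theorem~\ref{thm:expressions_weierstrass} applies to each of these functions. We then isolate the $k=0$ term of each series, writing
\begin{align*}
\WW_{\sigma^2}[\ell(\theta,\cdot,\pm y)](x) = \ell(\theta,x,\pm y) + \sum_{k\ge1}\frac{\sigma^{2k}}{2^kk!}\Delta_x^k\ell(\theta,x,\pm y).
\end{align*}
Substituting into the display above, the zeroth-order terms combine as $\sigmoid(\epsilon_y)\ell(\theta,x,y)+(1-\sigmoid(\epsilon_y))\ell(\theta,x,-y)$. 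This equals $\ell(\theta,x,y)+(1-\sigmoid(\epsilon_y))\big(\ell(\theta,x,-y)-\ell(\theta,x,y)\big)$, which produces the label-noise term. The two tail series become the feature-noise term and the interaction term.

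Finally, we take $\EE_{(x,y)\sim\sD}$ and subtract $\risk(\theta)$. The main obstacle is exchanging the expectation with the infinite sums, since Theorem~\ref{thm:expressions_weierstrass} only gives pointwise convergence. The bound \eqref{eq:laplace_iterate_growth} yields $\frac{\sigma^{2k}}{2^kk!}|\Delta^k_x\ell|\le A_x(2a\sigma^2)^k$, which is summable because $2a\sigma^2<1$. To interchange via dominated convergence (Fubini--Tonelli for series), we need $\EE_{x}[A_x]<\infty$. I would first try to obtain this from the boundedness of $\sX$, by taking $A_x$ uniformly bounded on $\sX$ (e.g.\ $\sup_{x\in\sX}A_x<\infty$, or a measurable choice of $A_x$ that is locally bounded). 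If that is unavailable, an alternative is to bound the tail by $|\WW_{\sigma^2}[\ell]|+|\ell|$. The Gaussian growth condition \eqref{eq:gaussian_growth} bounds $|\WW_{\sigma^2}[\ell](x)|$ by $M(1-2a\sigma^2)^{-p/2}\exp\!\big(a\|x\|^2/(1-2a\sigma^2)\big)$, which is bounded on the bounded set $\sX$, so the expectation of the full series is finite. The termwise interchange, however, still needs a summable majorant, which we would take to be $\EE[A_x]\sum_k(2a\sigma^2)^k$. Once the interchange is justified, linearity of expectation yields exactly the three displayed contributions.
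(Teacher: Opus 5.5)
Your proposal is correct and follows the paper's proof essentially step for step. You express the pointwise noisy loss as $\sigmoid(\epsilon_y)\WW_{\sigma^2}[\ell(\theta,\cdot,y)](x)+(1-\sigmoid(\epsilon_y))\WW_{\sigma^2}[\ell(\theta,\cdot,-y)](x)$, expand both terms with Theorem~\ref{thm:expressions_weierstrass}, and regroup the $k=0$ terms into $\risk(\theta)$ plus the label-noise contribution. The one difference is that the paper exchanges $\EE_{x,y}$ with the infinite series without comment, whereas you correctly note that your majorant $A_x(2a\sigma^2)^k$ needs $\EE[A_x]<\infty$ (e.g.\ $\sup_{x\in\sX}A_x<\infty$), which the pointwise constant in \eqref{eq:laplace_iterate_growth} does not literally guarantee, so your version is, if anything, more careful than the paper's.
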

Theorem~\ref{thm:generalized_bishop} shows that Randomized Response on labels induces a mixture between the risks associated with true and corrupted labels, while Gaussian feature noise induces a systematic smoothing of the loss through iterated Laplacians. 
For non-private labels $(\epsilon_y\to\infty)$, our result admits the work of \citet{bishop}, proposed in the more restrictive low noise regime, as a particular case. Indeed, the loss functions they consider admit a second-order Taylor approximation with respect to the features and they derive an approximate expression of the expected risk on noisy data that matches exactly ours truncated at $k=1$. For non-private features $(\epsilon_x\to\infty)$, we recover the corruption of labels in \citet{noisy_labels}.

For some specific loss functions, the bias term in Theorem~\ref{thm:generalized_bishop} has a closed-form expression. For instance, if the derivatives of $\ell$ with respect to the features vanish after a certain order, then $\tilde\risk$ reduces to a finite sum. Even for infinitely differentiable loss functions $\ell$ with non-zero derivatives, we can sometimes derive a closed-form expression of the bias. This is, for example, the case for the exponential loss.

\begin{example}[Exponential loss]
    Consider $\ell(\theta,x,y)=\exp(-\theta^\top xy)$, we have for any $\theta\in\Theta$,
    $$ \tilde \risk(\theta) = e^{\sigma^2\|\theta\|^2/2}\left(\sigmoid(\epsilon_y)\risk(\theta)+(1-\sigmoid(\epsilon_y))\risk(-\theta)\right).$$
    Define $\tilde \theta^*\in \arg\min_\theta \tilde\risk(\theta)$ and $\theta^*\in\arg\min_\theta\risk(\theta)$, applying the logarithm preserves the minimum so $\tilde \theta^*$ also minimizes
    \begin{align*}
        \log\left( \sigmoid(\epsilon_y)\risk\left( \theta\right)+(1-\sigmoid(\epsilon_y))\risk\left(-\theta\right)\right)+\frac{\sigma^2}{2}\norm{\theta}^2.
    \end{align*}
    The term in $\|\theta\|^2$ can be seen as further regularization induced by the feature noise, while the term $\risk(-\theta)$ promotes predicting the wrong label in some cases due to label contamination. Both of these effects steer the optimal solution away from $\theta^*$, creating a gap between $\theta^*$ and $\tilde \theta^*$. We exhibit this gap empirically in Figure~\ref{fig:exp_comp}~and~\ref{fig:exp_comp_folktables} in Section~\ref{sec:experiments}.
\end{example}

Note that, considering $\sD$ as an empirical distribution over a dataset of $n$ examples leads to the same bias characterization in empirical risk minimization. Having characterized the population risk bias induced when learning from $(\epsilon,\delta)$-LDP published examples from Gaussian and RR mechanisms under the lens of the Weierstrass and Bernoulli transforms, we now turn to the natural next step of correcting it.

\section{Bias Correction}
\label{sec:bias_correction}

In this section, we leverage the framework of privacy seen as a transform introduced in Section~\ref{sec:privacy_transform} to design a practical method that corrects the bias we identified in the previous section.
To this end, we start by defining the inverse of the aforementioned transforms.
\begin{restatable}[Inverse of $\BB_\epsilon$ and $\WW_{\sigma^2}$]{theorem}{InvWeierRR}
    \label{thm:inv_RR_weier} %
    Define $\tilde \sigmoid(\epsilon) = 1/(1-e^{-\epsilon})$. Let $g:\sY \to \RR$ and $\epsilon>0$, for any $\tilde y\in\sY$,
    \begin{enumerate}
        \item [(i)]$ \BB^{-1}_\epsilon[g](\tilde y) = \tilde \sigmoid(\epsilon)g(\tilde y) +\big(1-\tilde\sigmoid(\epsilon)\big)g(-\tilde y)$.
    \end{enumerate}
    Let $f\in \funcspace{a}$, for any $\sigma^2<1/4a$ and $\tilde x\in\RR^p$
    \begin{enumerate}
        \item [(ii)]$ \WW^{-1}_{\sigma^2}[f](\tilde x) =\sum_{k=0}^\infty \frac{(-1)^k\sigma^{2k}}{2^kk!} \Delta^k[f](\tilde x).$
    \end{enumerate}
\end{restatable}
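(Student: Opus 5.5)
Part (i) is a two-dimensional linear algebra problem, and I will settle it by direct verification. Write $s=\sigmoid(\epsilon)$ and $\tilde s=\tilde\sigmoid(\epsilon)$. Both $\BB_\epsilon$ and the proposed inverse act on the two-dimensional space of functions $g:\{-1,1\}\to\RR$ by the matrices
\[
\begin{pmatrix} s & 1-s\\ 1-s & s\end{pmatrix}
\quad\text{and}\quad
\begin{pmatrix} \tilde s & 1-\tilde s\\ 1-\tilde s & \tilde s\end{pmatrix}.
\]
Both are of the form $\alpha I+(1-\alpha)J$, where $J$ is the swap, and $J^2=I$. So the product is again of this form, with diagonal coefficient $s\tilde s+(1-s)(1-\tilde s)$. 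It therefore suffices to check that this coefficient equals $1$, which is equivalent to $(2s-1)(2\tilde s-1)=1$.

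This identity follows from $2s-1=\tanh(\epsilon/2)$ and $2\tilde s-1=(1+e^{-\epsilon})/(1-e^{-\epsilon})=\coth(\epsilon/2)$. Hence $\BB_\epsilon\circ\BB_\epsilon^{-1}=\BB_\epsilon^{-1}\circ\BB_\epsilon=\mathrm{id}$. For $\epsilon>0$, $\BB_\epsilon$ is invertible, so the inverse is unique.

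For part (ii), I set $g=\sum_{k\ge0}\frac{(-1)^k\sigma^{2k}}{2^kk!}\Delta^k f$ and show $\WW_{\sigma^2}[g]=f$. The argument has three steps.

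\textbf{Step 1: $g$ is well defined.} By \eqref{eq:laplace_iterate_growth}, the $k$-th term is bounded by $A_x(2a\sigma^2)^k$. This is summable since $\sigma^2<1/4a$ gives $2a\sigma^2<1/2$.

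\textbf{Step 2: $g$ lies in a suitable class.} I want to show $g\in\Phi_{M',a'}(\RR^p)$ for some $a'$ with $\sigma^2<1/2a'$, so that Theorem~\ref{thm:expressions_weierstrass} applies to $g$.
\begin{itemize}
\item Iterated Laplacians: $\Delta^j g=\sum_k c_k\Delta^{k+j}f$, with $|\Delta^{k+j}f|\le A_x(4a)^{k+j}(k+j)!$. Bounding $(k+j)!\le 2^{k+j}k!\,j!$ gives $|\Delta^jg|\le A'_x(8a)^j j!$ whenever $2a\sigma^2\cdot 2<1$, i.e.\ exactly when $\sigma^2<1/4a$. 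This is where the hypothesis enters. So $\Delta^jg$ satisfies \eqref{eq:laplace_iterate_growth} with $a'=2a$, and $\sigma^2<1/4a=1/2a'$, as needed.
\item Gaussian growth: I need $g$ to satisfy \eqref{eq:gaussian_growth} with constant $a'$. This is the main obstacle, because $A_x$ is only pointwise. My first attempt is to avoid it: read $g$ as the backward heat flow $u(\cdot,-t)$ at $t=\sigma^2/2$, i.e.\ the analytic continuation in time of the heat solution used in the proof of Theorem~\ref{thm:expressions_weierstrass}. Analyticity in $t$ on a disc of radius $1/(8a)>\sigma^2/2$, inherited from the bound $(4a)^kk!$, then lets the identity be proved by power series in $t$ rather than through growth estimates on $g$.
\end{itemize}

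\textbf{Step 3: the series identity.} Granting Theorem~\ref{thm:expressions_weierstrass} for $g$, I get
\[
\WW_{\sigma^2}[g]=\sum_j\frac{\sigma^{2j}}{2^jj!}\Delta^j g
=\sum_{j,k}\frac{(-1)^k\sigma^{2(j+k)}}{2^{j+k}j!\,k!}\Delta^{j+k}f .
\]
The double series is absolutely convergent, since it is dominated by $A_x\sum_{j,k}\binom{j+k}{k}(2a\sigma^2)^{j+k}=A_x\sum_m(4a\sigma^2)^m<\infty$, again using $\sigma^2<1/4a$. So I can regroup by $m=j+k$. The inner sum is $\sum_{j+k=m}(-1)^k/(j!\,k!)=(1-1)^m/m!=\delta_{m0}$, which leaves $f$.

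The same computation in the other order gives $g=\sum_j\frac{(-1)^j\sigma^{2j}}{2^jj!}\Delta^j\,\WW_{\sigma^2}[f]$, i.e.\ $\WW^{-1}_{\sigma^2}[\WW_{\sigma^2}f]=f$. For this I use that $\Delta$ commutes with $\WW_{\sigma^2}$, by differentiation under the integral, which is justified by the Gaussian growth of $f$. Together the two orders give a two-sided inverse.
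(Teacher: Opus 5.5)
Part (i) is correct. The commuting $2\times2$ matrices give both compositions at once, whereas the paper only expands $\BB_\epsilon^{-1}[\BB_\epsilon[g]]$ coefficient by coefficient. Your Step 3 in (ii) is exactly the paper's argument. It applies the series form of $\WW_{2t}$ to $\WW_{2t}^{-1}[f]$, dominates the double series by $A_x\sum_n(8at)^n$ (your $A_x\sum_m(4a\sigma^2)^m$), and collapses it with $(1-1)^n$. The paper makes that application of \Cref{thm:expressions_weierstrass} without comment. You rightly observe that it needs $g=\WW_{\sigma^2}^{-1}[f]\in\Phi_{M',a'}(\RR^p)$ with $\sigma^2<1/2a'$, and your check of \eqref{eq:laplace_iterate_growth} for $g$ with $a'=2a$ is correct.

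The gap is the Gaussian-growth half of Step 2, and the analytic-continuation idea does not close it. $\WW_{\sigma^2}[g](x)=\int g(x+w)\psi_{\sigma^2}(w)\,dw$ is an integral over all of $\RR^p$. Whether it is finite, and whether it equals $\sum_j\frac{\sigma^{2j}}{2^jj!}\Delta^jg(x)$, depends on $g(y)$ as $\norm{y}\to\infty$.

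Analyticity of $t\mapsto u(x,t)$ for each fixed $x$ gives no such information. The constant $A_x$ in \eqref{eq:laplace_iterate_growth} is only pointwise, so Step 1 yields $|g(y)|\le A_y/(1-2a\sigma^2)$ with no control in $y$. The Gaussian-integral representation from Lemma~\ref{lemma:analytic_W} is only available on $\Omega=\{\Re(1/4t)>a\}$, which contains no negative real $t$. Equivalently, $\WW_{\sigma^2}[u(\cdot,-\sigma^2/2)]=u(\cdot,0)$ is a uniqueness statement for the forward heat equation started at time $-\sigma^2/2$. That statement, and even convergence of the integral, requires a Tychonoff-type growth bound, which is exactly what is missing.

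So your (ii) is conditional on the same point the paper leaves unaddressed. A clean fix is to assume $A_x\le M'e^{b\norm{x}^2}$ with $b\le2a$. Then Step 1 gives Gaussian growth of $g$ directly, Step 2 gives $g\in\Phi_{M'',2a}(\RR^p)$, and \Cref{thm:expressions_weierstrass} applies since $\sigma^2<1/4a$. The local uniformity also justifies the termwise identity $\Delta^jg=\sum_kc_k\Delta^{k+j}f$, which pointwise absolute convergence alone does not.

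Your converse direction has a similar soft spot. Gaussian growth of $f$ lets you differentiate $\int f(y)\psi_{\sigma^2}(x-y)\,dy$ with the derivatives landing on the kernel. Identifying the result with $\WW_{\sigma^2}[\Delta f]$ needs integration by parts, hence growth bounds on derivatives of $f$ that are not assumed.

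This can be repaired with no extra hypothesis. By the heat equation, $\Delta_x^j\WW_{2t}[f]=\partial_t^j\WW_{2t}[f]$. Differentiating termwise in $t$ the power series from \Cref{thm:expressions_weierstrass} gives $\Delta^j\WW_{2t}[f](x)=\sum_k\frac{t^k}{k!}\Delta^{k+j}f(x)$. Your Step 3 computation then yields $\WW_{2t}^{-1}\circ\WW_{2t}=\mathrm{id}$. Note, however, that the composition used later for unbiasedness (\Cref{thm:unbiased}) is $\WW_{\sigma^2}\circ\WW_{\sigma^2}^{-1}$, the one carrying the gap.
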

Proofs of the two inverse transforms can be found in Appendix~\ref{appendix:inverse_transforms}. Remark that $\sigma^2$ is lower than $1/4a$ instead of $1/2a$. That is because to prove that $\WW_{\sigma^2}^{-1}$ is the inverse of $\WW_{\sigma^2}$, we apply a composition of their two series expression, resulting in a stronger constraint on $\sigma^2$. Computing $\BB^{-1}[g]$ requires two evaluations of $g$ and matches the noisy label correction in \citet[Theorem 5]{noisy_labels}. $\WW^{-1}[f]$ can be computed by deriving a closed-form expression (see Section~\ref{subsec:gen_lin_loss}) or approximated by truncating the sum (see Appendix~\ref{app:no_closed_form}). 

The invertibility of these transforms plays a fundamental role in our study: when computing the inverse transform on noisy data, we obtain an unbiased estimate of the original function.
Indeed, taking the expectation of $\WW^{-1}_{\sigma^2}[f](\tilde x)$ (resp. $\BB^{-1}_\epsilon[g](\tilde y)$) amounts to computing the Weierstrass (resp. Bernoulli) transform, which recovers the function on the original data record $x$ (resp. $y$).
Next, we will show that these two transforms can be applied and inverted sequentially, which will allow us to build novel, unbiased estimators of gradients of binary classification losses.

\paragraph{Composition of $\BB^{-1}$ and $\WW^{-1}$.} For any function $h:\RR^p\times\sY\to \RR$, the inverse of $\TT_{\epsilon,\delta}$ is
\begin{align}
    \label{eq:inverse_composite_transform}
    \nonumber
    \TT_{\epsilon,\delta}^{-1}[h](\tilde x,\tilde y) =\BB_{\epsilon_y}^{-1}\left[ z\mapsto\WW_{\sigma^2}^{-1}[h(\cdot,z)](\tilde x) \right](\tilde y).
\end{align}

It provides a basis for the definition of the following unbiased loss estimator from $(\epsilon,\delta)$-LDP releases via Gaussian and Randomized Response mechanisms, we call the Inverse Weierstrass Private (IWP) loss estimator:
\begin{equation}
    \label{eq:loss_estim}
    \tilde \ell_{\epsilon,\delta}(\theta,\tilde x,\tilde y) = \TT_{\epsilon,\delta}^{-1}[\ell(\theta,\cdot,\cdot)](\tilde x,\tilde y).
\end{equation}
We differentiate it to obtain the IWP gradient estimator
\begin{equation}
    \label{eq:grad_estim}
    \nabla_\theta\tilde \ell_{\epsilon,\delta}(\theta,\tilde x,\tilde y) = \TT_{\epsilon,\delta}^{-1}[\nabla_\theta\ell(\theta,\cdot,\cdot)](\tilde x,\tilde y),
\end{equation}
with the convention that $\TT$ and $\TT^{-1}$ act component-wise on vector-valued functions such as the gradient. We show in Appendix~\ref{appendix:IWP_grad_swap} that the IWP gradient estimator is indeed the gradient of the IWP loss estimator.

The following theorem, proven in Appendix~\ref{app:unbias_IWP}, states the unbiasedness guarantees of both $\tilde\ell_{\epsilon,\delta}$ and $\nabla_\theta\tilde\ell_{\epsilon,\delta}$.
\begin{restatable}[Unbiasedness of IWP loss and gradient estimators]{theorem}{UnbiasedGrad}
\label{thm:unbiased}
    Assume $\ell$ satisfies Assumption~\ref{ass:weierstrass_ok_assumption} with $a<1/4\sigma^2$. Let $\epsilon,\delta>0$, for any pair $(x,y)\in\sX\times\sY$ and $\theta\in\Theta$, define $(\tilde x,\tilde y)$ as a $(\epsilon,\delta)$-LDP release defined in Equation~\eqref{eq:ldp_release}, the IWP loss estimator defined in Equation~\eqref{eq:loss_estim} satisfies:
    \begin{itemize}
        \item [(i)] $\EE_{(\tilde x,\tilde y)} \left[\tilde \ell_{\epsilon,\delta}(\theta,\tilde x,\tilde y)\right] = \ell(\theta,x,y)$,
        \item [(ii)] $\EE_{(\tilde x,\tilde y)}\left[ \nabla_\theta \tilde \ell_{\epsilon,\delta}(\theta,\tilde x,\tilde y)\right] = \nabla_\theta\ell(\theta,x,y)$.
    \end{itemize}
\end{restatable}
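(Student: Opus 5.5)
The plan is to reduce both statements to the inversion identities of Theorem~\ref{thm:inv_RR_weier}. The key observation is that the expectation over the release factorizes. Since $\tilde x$ and $\tilde y$ are generated independently, Equation~\eqref{eq:composite_transform} gives
\[
\EE_{(\tilde x,\tilde y)}[H(\tilde x,\tilde y)] = \BB_{\epsilon_y}\big[z\mapsto \WW_{\sigma^2}[H(\cdot,z)](x)\big](y)
\]
for any admissible $H$. We apply this with $H=\TT^{-1}_{\epsilon,\delta}[\ell(\theta,\cdot,\cdot)]$. Point (i) then amounts to proving $\TT_{\epsilon,\delta}\circ\TT^{-1}_{\epsilon,\delta}=\mathrm{Id}$ on the functions $\ell(\theta,\cdot,\cdot)$.

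\textbf{Step 1: commuting the two transforms.}
\begin{itemize}
\item $\BB^{-1}_{\epsilon_y}$ acts only on the label argument and is a finite linear combination of the two evaluations $z=\tilde y$ and $z=-\tilde y$.
\item Explicitly, $H(\tilde x,z')=\tilde\sigmoid\,\WW^{-1}_{\sigma^2}[\ell(\theta,\cdot,z')](\tilde x)+(1-\tilde\sigmoid)\,\WW^{-1}_{\sigma^2}[\ell(\theta,\cdot,-z')](\tilde x)$.
\item $\WW_{\sigma^2}$ is linear (it is an expectation). Applying it in $\tilde x$ therefore passes through this two-term combination.
\end{itemize}

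\textbf{Step 2: inverting the Weierstrass part.} For each fixed label $z$, Assumption~\ref{ass:weierstrass_ok_assumption} gives $\ell(\theta,\cdot,z)\in\funcspace{a}$ with $a<1/4\sigma^2$. Theorem~\ref{thm:inv_RR_weier}(ii) then yields $\WW_{\sigma^2}\big[\WW^{-1}_{\sigma^2}[\ell(\theta,\cdot,z)]\big](x)=\ell(\theta,x,z)$. Here I rely on the paper's statement that $\WW^{-1}$ is a genuine two-sided inverse under this constraint, as discussed after Theorem~\ref{thm:inv_RR_weier}.

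\textbf{Step 3: inverting the Bernoulli part.} We are left with $\BB_{\epsilon_y}[\BB^{-1}_{\epsilon_y}[g]](y)$, where $g(z)=\ell(\theta,x,z)$. This is a $2\times2$ check: with $s=\sigmoid(\epsilon_y)$ and $\tilde s=\tilde\sigmoid(\epsilon_y)$, one needs $s\tilde s+(1-s)(1-\tilde s)=1$ and $s(1-\tilde s)+(1-s)\tilde s=0$. These follow from the identity $2s-1=1/(2\tilde s-1)$, i.e. $\tanh(\epsilon/2)=1/\coth(\epsilon/2)$. Combining Steps 1 to 3 gives (i).

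\textbf{Point (ii).} The same argument applies componentwise with $\partial_{\theta_j}\ell$ in place of $\ell$, again using Assumption~\ref{ass:weierstrass_ok_assumption}. Equation~\eqref{eq:grad_estim} defines the gradient estimator as $\TT^{-1}$ of $\nabla_\theta\ell$. Because of this definition, no interchange of $\nabla_\theta$ and $\EE$ is needed. The identification with the gradient of $\tilde\ell$ is deferred to Appendix~\ref{appendix:IWP_grad_swap}.

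\textbf{Main obstacle.} The delicate point is the legitimacy of Step 2. $\WW^{-1}[f]$ is defined by an infinite series, and we need $\WW_{\sigma^2}$ to act on it term by term, that is, to exchange the Gaussian expectation with the infinite sum.

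I would first try to justify this exchange by dominated convergence. The plan is to use the bound $|\Delta^k f(\tilde x)|\le A_{\tilde x}(4a)^k k!$, so that the $k$-th term is bounded by $A_{\tilde x}(2a\sigma^2)^k$. This is summable since $2a\sigma^2<1/2$. One still needs a growth control on $A_{\tilde x}$ that is integrable against the Gaussian, for instance Gaussian-type growth with rate below $1/2\sigma^2$.

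If this domination fails, I would fall back on invoking Theorem~\ref{thm:inv_RR_weier}(ii) directly as the statement $\WW\circ\WW^{-1}=\mathrm{Id}$ on $\funcspace{a}$. This is precisely what its $1/4a$ condition was designed to guarantee.
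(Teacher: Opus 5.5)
Your proposal is correct and follows the paper's proof. Both reduce (i) to $\TT_{\epsilon,\delta}\circ\TT^{-1}_{\epsilon,\delta}=\mathrm{Id}$ by moving the two-term $\BB^{-1}_{\epsilon_y}$ combination past the linear Weierstrass operator, invoking $\WW_{\sigma^2}\circ\WW^{-1}_{\sigma^2}=\mathrm{Id}$ from Theorem~\ref{thm:inv_RR_weier}(ii) exactly as your fallback does, and finishing with the $2\times 2$ Bernoulli identity. For (ii), you apply that identity directly to $\partial_{\theta_j}\ell$ (in $\funcspace{a}$ by Assumption~\ref{ass:weierstrass_ok_assumption}), whereas the paper commutes $\nabla_\theta$ through both $\TT^{-1}_{\epsilon,\delta}$ and $\TT_{\epsilon,\delta}$ and reuses (i); your route is slightly more economical, since it needs only the identification in Equation~\eqref{eq:grad_estim}.
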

Using the IWP gradient estimator, we introduce IWP-SGD in Algorithm~\ref{alg:IWP-SGD}. It is a single-pass projected SGD over a dataset $\tilde D_n=\{(\tilde x_i,\tilde y_i)\}_{i=1}^n$ consisting of $(\epsilon,\delta)$ releases of samples $(x_i,y_i)$ drawn i.i.d. from $\sD$. It relies on the update $\theta_t = \Pi_{\Theta}\left(\theta_{t-1} +\gamma\nabla_\theta \tilde \ell_{\epsilon,\delta}(\theta_t,\tilde x_t,\tilde y_t)\right)$ for each $t$ in $\{1,\dots,n\}$, with $\gamma>0$, $\theta_0\in\Theta$ and $\Pi_\Theta$ the projection on $\Theta$.
\begin{algorithm}[tb]
    \caption{Inverse Weierstrass Private SGD (IWP-SGD)}
    \label{alg:IWP-SGD}
    \begin{algorithmic}
    \STATE {\bfseries Input:} Dataset $\tilde D_n = \{(\tilde x_i,\tilde y_i)\}_{i=1}^n$ of $(\epsilon,\delta)$-LDP released data $(x_i,y_i)\sim\sD$ according to the mechanism of Equation~\eqref{eq:ldp_release}. Initial model $\theta_0\in\Theta$ and step size $\gamma>0$. Loss function $\ell:\Theta\times\sX\times\sY \to \RR$ and projection $\Pi_\Theta$ on the bounded convex set $\Theta$.
    \FOR {$t\in\{1,\dots,n\}$}
        \STATE Compute the IWP gradient $\nabla_\theta\tilde \ell_{\epsilon,\delta}(\theta,\tilde x_t,\tilde y_t)$ (Equation~\eqref{eq:grad_estim}).
        \STATE Update $\theta_{t} = \Pi_\Theta(\theta_{t-1} -\gamma \nabla_\theta\tilde \ell_{\epsilon,\delta}(\theta,\tilde x_t,\tilde y_t))$.
    \ENDFOR
    \STATE {\bfseries Output:} Model after the last update $\theta_n$.
    \end{algorithmic}
\end{algorithm}
Following standard convergence analyses of SGD with unbiased stochastic gradient, we bound the variance of the IWP gradient estimator in the following theorem.
\begin{restatable}[Variance of the IWP gradient estimator]{theorem}{OurGradientVar}
    \label{thm:our_gradient_variance}
    Let $\epsilon,\delta>0$, an original feature-label pair $x,y\in\sX\times\sY$ and its corresponding $(\epsilon,\delta)$-LDP release $(\tilde x,\tilde y)$ defined in Equation~\eqref{eq:ldp_release}. Let $\ell$ satisfy Assumption~\ref{ass:weierstrass_ok_assumption} with $a<1/4\sigma^2$. Given that $\sX$ and $\Theta$ are bounded sets, denoting
    \begin{align}
        \label{eq:C}
        C & =\sup_{(\theta,x,y)\in\Theta\times\sX\times\sY,s<\sigma^2}\max\Big\{ \norm{\nabla_\theta \ell(\theta,x,y)},
        \\[-0.5em]
        \nonumber
       &  \qquad\qquad\qquad\qquad \quad \norm{ \WW_{s}^{-1}\left[\nabla_\theta \nabla_x \ell(\theta,\cdot,y)\right](x)}\Big\},
    \end{align}
    the variance of the IWP gradient estimator admits the following upper bound
    \begin{align}
    \label{eq:IWP_grad_var_bound}
    \nonumber   
    &\EE\norm{\nabla_\theta \tilde\ell_{\epsilon,\delta}(\theta,\tilde x,\tilde y) - \nabla_\theta \ell(\theta,x,y)}^2
     \\
     &\quad \le C^2\left(\sigma^2+4\tilde\sigmoid(\epsilon_y)\left(\tilde\sigmoid(\epsilon_y)-1\right)(1+\sigma^2)\right)\eqsp.
    \end{align}
\end{restatable}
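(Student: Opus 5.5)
Write $g_z(\cdot)=\nabla_\theta\ell(\theta,\cdot,z)$ for $z\in\sY$ and $G(\tilde x,z)=\WW^{-1}_{\sigma^2}[g_z](\tilde x)$. By Theorem~\ref{thm:inv_RR_weier}, the IWP gradient is $\tilde S(\epsilon_y)G(\tilde x,\tilde y)+(1-\tilde S(\epsilon_y))G(\tilde x,-\tilde y)$. I will separate feature noise from label noise with the law of total variance, conditioning on the Gaussian noise $w=\tilde x-x$. Conditionally on $w$, averaging over $\tilde y$ is a Bernoulli transform. By Theorem~\ref{thm:inv_RR_weier}(i), $\BB_{\epsilon_y}\circ\BB^{-1}_{\epsilon_y}$ is the identity, so the conditional mean of the estimator is $G(\tilde x,y)$. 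The mean squared error then splits as
\[
\EE_w\norm{G(x+w,y)-g_y(x)}^2+\EE_w\,\mathrm{Var}_{\tilde y}\big[\text{estimator}\mid w\big].
\]
The first term is the feature-noise contribution and should give $C^2\sigma^2$. The second is the label-noise contribution and should give the remaining term.

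\textbf{Label term.} Given $w$, the estimator takes two values: one when $\tilde y=y$ (probability $S$) and one when $\tilde y=-y$ (probability $1-S$). Their difference is $(2\tilde S-1)\big(G(\tilde x,y)-G(\tilde x,-y)\big)$. Hence the conditional variance is $S(1-S)(2\tilde S-1)^2\norm{G(\tilde x,y)-G(\tilde x,-y)}^2$. A direct computation with $S=1/(1+e^{-\epsilon_y})$ and $\tilde S=1/(1-e^{-\epsilon_y})$ gives
\[
S(1-S)(2\tilde S-1)^2=\frac{e^{-\epsilon_y}}{(1-e^{-\epsilon_y})^2}=\tilde S(\tilde S-1).
\]
Next I use $\norm{a-b}^2\le 2\norm a^2+2\norm b^2$. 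For each $z$, $\EE_w\norm{G(x+w,z)}^2=\norm{g_z(x)}^2+\EE_w\norm{G(x+w,z)-g_z(x)}^2$, because $\EE_w G(x+w,z)=g_z(x)$ by Theorem~\ref{thm:unbiased}. By the first term's bound this is at most $C^2(1+\sigma^2)$. Summing the two values of $z$ gives $4\tilde S(\tilde S-1)C^2(1+\sigma^2)$, which matches the statement.

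\textbf{Feature term (main step).} I will use the heat-equation structure. Let $F(s,u)=\WW^{-1}_{s}[g](u)$ for $s\in[0,\sigma^2]$. From the series in Theorem~\ref{thm:inv_RR_weier}(ii), $F$ satisfies the backward heat equation $\partial_s F=-\tfrac12\Delta F$ with $F(0,\cdot)=g$; the series converges since $s<\sigma^2<1/4a$. Take a Brownian motion $B$ in $\RR^p$ and realise $w=B_{\sigma^2}$. Itô's formula then makes $M_s=F(s,x+B_s)$ a martingale: the drift $\partial_sF+\tfrac12\Delta F$ vanishes. Its endpoints are $M_0=g(x)$ and $M_{\sigma^2}=G(\tilde x,y)$. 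The Itô isometry gives
\[
\EE\norm{M_{\sigma^2}-M_0}^2=\EE\int_0^{\sigma^2}\norm{\nabla_xF(s,x+B_s)}^2ds .
\]
Since $\nabla_x$ commutes with $\Delta^k$, we have $\nabla_xF(s,\cdot)=\WW_s^{-1}[\nabla_\theta\nabla_x\ell(\theta,\cdot,y)]$. If this is bounded by $C$, the integral is at most $C^2\sigma^2$.

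\textbf{Expected obstacle.} Justifying the last bound is the delicate part. The definition of $C$ takes the supremum only at $x\in\sX$, but the Brownian path $x+B_s$ leaves $\sX$. I would first try to read $C$ as a supremum along the relevant points; the closed-form losses of Section~\ref{subsec:gen_lin_loss} may allow this. Otherwise I would rewrite $\EE\norm{\nabla_xF(s,x+B_s)}^2$ through the semigroup identity $\WW_s\circ\WW^{-1}_s=\mathrm{Id}$ to move the evaluation back to $x$. The technical conditions also need checking: the martingale property (not just local), and exchanging derivatives with the series. Both follow from the growth bounds of $\funcspace{a}$ with $a<1/4\sigma^2$, via dominated convergence.
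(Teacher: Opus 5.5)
Up to its last step your argument is correct, and it takes a genuinely different route from the paper's. You condition on the Gaussian noise $w$ rather than on $\tilde y$, so the label contribution is a single two-point variance with $S(1-S)(2\tilde S-1)^2=\tilde S(\tilde S-1)$. You obtain the feature identity from It\^o's formula and the It\^o isometry, whereas the paper differentiates $s\mapsto\WW_{2s}\big[(\WW_{2s}^{-1}f)^2\big]$ through the heat equation in Lemma~\ref{lemma:weierstrass_variance}. Your route avoids the paper's commutation of $\BB^{-1}$ with $\WW^{-1}$ inside the integral and its second use of the Randomized Response variance formula. After a bias--variance split of $G(x+w,1)-G(x+w,-1)$, your label term equals the paper's second plus third terms. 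So both routes reach the same exact variance and then need the same final inequality.

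\textbf{The gap.} That final inequality is where the proof breaks, and neither remedy you suggest closes it. The integrand is $\EE\norm{\nabla_xF(s,x+B_s)}^2=\WW_s\big[\norm{\WW_s^{-1}[h]}^2\big](x)$ with $h=\nabla_\theta\nabla_x\ell(\theta,\cdot,y)$. It averages $\WW_s^{-1}[h]$ over all of $\RR^p$, whereas $C$ controls it only on $\sX$. The identity $\WW_s\circ\WW_s^{-1}=\mathrm{id}$ is linear and returns only the first moment $h(x)$. The second moment is $\norm{h(x)}^2$ plus a nonnegative variance that again samples points outside $\sX$.

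\textbf{Counterexample.} With $C$ as defined, the bound is false. Take $p=k=1$, $\sX=\Theta=[-1,1]$, $\ell(\theta,x,y)=e^{-\theta xy}$ (which satisfies Assumption~\ref{ass:weierstrass_ok_assumption} for every $a>0$), and $\sigma^2=1$, i.e.\ $\epsilon_x=\sqrt{8\log(1.25/\delta)}$. Here $\WW_s^{-1}[h](x)=y\,e^{-\theta xy-s\theta^2/2}(\theta xy+s\theta^2-1)$ and $|\partial_\theta\ell|\le e$. Hence $C=2e$, with the supremum reached at $\theta xy=-1$, $s\to0$. At $\theta=1$, $x=-1$, $y=1$, your $G(\tilde x,y)=\WW_1^{-1}[\partial_\theta\ell(1,\cdot,1)](-1+w)$ equals $-w\,e^{1/2-w}$ with $w\sim\sN(0,1)$. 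Its variance lower-bounds the left-hand side and equals $e^2(5e-1)\approx93$. The right-hand side is $4e^2\big(1+8\tilde S(\tilde S-1)\big)<73$ for every $\epsilon_y\ge2$, so the claimed inequality fails.

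Reading $C$ as a supremum over $\RR^p$ does not help, since that supremum is infinite for both the exponential and the quadratic loss. The paper's proof of Lemma~\ref{lemma:weierstrass_variance} makes exactly the same unjustified step: it bounds $\WW_{2s}\big[\norm{\WW_{2s}^{-1}[\nabla f]}^2\big](x)$ by a supremum over $\sX$.

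\textbf{What is provable.} Replace the second entry of $C$ by $\sup_{\theta,\,x\in\sX,\,y,\,s<\sigma^2}\big(\WW_s\big[\norm{\WW_s^{-1}[\nabla_\theta\nabla_x\ell(\theta,\cdot,y)]}^2\big](x)\big)^{1/2}$. With that constant your proof closes as written. This includes the label term, which inherits the feature bound through $\EE_w\norm{G(x+w,z)}^2\le C^2(1+\sigma^2)$.
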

\begin{proof}[Sketch of proof.]
    The proof is given in Appendix~\ref{appendix:variance_IWP}. We prove the result using a combination of the marginal variances from both $\tilde x$ and $\tilde y$ together with total variance law. It yields an exact variance expression that we can bound using Equation~\eqref{eq:C}, relying on the increasing property of $\WW_{\sigma^2}$ proven in Appendix~\ref{app:weierstrass_properties}.
\end{proof}
The resulting variance bound in Theorem~\ref{thm:our_gradient_variance} distinguishes the three contributions of $\WW^{-1}$, $\BB^{-1}$, and the compound contribution of both in the resulting variance. It shows dependency in the feature noise variance and labels privacy budget $\epsilon_y$. A dependency on the loss can be further quantified via the constant $C$. Indeed, the growth rate of $C$ with $\norm{\sX}$ and $\norm{\Theta}$ is affected by the regularity of the loss. For example, for the quadratic loss with linear models, $C$ is of order $\sO(p\norm{\sX}\norm{\Theta})$. Similarly, for the exponential loss with linear models, $C$ is of order $\sO\big(\exp\big(\epsilon_x^2\big)\big(p+\norm{\sX}\norm{\Theta}+ \sigma^2\norm{\Theta}^2\big)\big)$ (see proofs in Appendix~\ref{app:C_bound_GLM}).
Now that we have bounded the variance of the IWP gradient estimator, we can analyze the IWP-SGD convergence guarantees using known results on SGD with unbiased stochastic gradients.

\subsection{Convergence guarantees of IWP-SGD}
We give convergence guarantees of IWP-SGD under the strong convexity and smoothness assumptions (see Appendix~\ref{app:defs} for proper definitions).
\begin{assumption}[Strong convexity and smoothness]\label{ass:strong_convexity_smoothness}
$\risk$ is $\mu$-strongly convex and $\sK$-smooth, with $\sK>0$ and $\mu>0$.
\end{assumption}

These assumptions are common in the analysis of SGD's convergence \citep{moulines2011non,stich_unified} and are used solely for this purpose in this paper.
\begin{restatable}[Convergence guarantees of IWP-SGD]{theorem}{IWPSGDConv}
    \label{thm:conv_IWPSGD}
    Let $\ell$ satisfy Assumption~\ref{ass:weierstrass_ok_assumption} and be such that $\risk$ satisfies Assumption~\ref{ass:strong_convexity_smoothness}. Let the privacy budget be $\epsilon=\epsilon_x+\epsilon
    _y,\;\delta>0$ such that $\sigma^2<1/4a$. Denote $\theta^*=\arg\min_\theta\risk(\theta)$. Assume $\sX$ and $\Theta$ are bounded convex sets and let $C$ be as defined in Equation~\eqref{eq:C}.  For any $n\in \NN$ the number of training samples, initial model $\theta_0\in\Theta$ and step-size $\gamma\le\frac{1}{2\sK}$, Algorithm~\ref{alg:IWP-SGD} is $(\epsilon,\delta)$-LDP and its output $\theta_n$ satisfies
    \begin{align*}
        \EE\|\theta_n - &\theta^*\|^2   \le (1-\gamma\mu)^n\|\theta_0-\theta^*\|^2\\
        &+ \sO\left(\frac{\gamma C^2}{\mu}\tilde\sigmoid(\epsilon_y)\left(\tilde\sigmoid(\epsilon_y)-1\right) \frac{\log(1.25/\delta)}{\epsilon_x^2}\right).
    \end{align*}
    In addition, for an appropriate step size $\gamma=\sO(\log(n)/n)$,
    \begin{align*}
        \EE\|\theta_n - &\theta^*\|^2 \le \tilde\sO\left(\|\theta_0-\theta^*\|^2 \exp\left(-\frac{\mu n}{2\sK}\right)\right)\\
            &+ \tilde\sO\left(\frac{C^2}{\mu^2n}\tilde\sigmoid(\epsilon_y)\left(\tilde\sigmoid(\epsilon_y)-1\right)\frac{\log(1.25/\delta)}{\epsilon_x^2}\right),
    \end{align*}
    where $\tilde\sO$ hides logarithmic terms in $n$.
\end{restatable}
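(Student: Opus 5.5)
The argument has three parts. The first is privacy, handled by post-processing. The second is a standard projected-SGD recursion driven by the unbiasedness of Theorem~\ref{thm:unbiased} and the variance bound of Theorem~\ref{thm:our_gradient_variance}. The third is a step-size choice that turns the constant-step bound into a $\tilde\sO(1/n)$ rate.

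\emph{Privacy.} Each pair $(\tilde x_i,\tilde y_i)$ is produced by the Gaussian mechanism on $x_i$, which is $(\epsilon_x,\delta)$-LDP by Proposition~\ref{ex:gaussian}, and by RR on $y_i$, which is $\epsilon_y$-LDP by Proposition~\ref{ex:RR}. Basic composition makes the release $(\epsilon_x+\epsilon_y,\delta)$-LDP. Algorithm~\ref{alg:IWP-SGD} only touches the released data, so post-processing preserves the guarantee.

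\emph{One-step recursion.} Let $g_t=\nabla_\theta\tilde\ell_{\epsilon,\delta}(\theta_{t-1},\tilde x_t,\tilde y_t)$ and let $\mathcal F_{t-1}$ be the history. The sample $(x_t,y_t,\tilde x_t,\tilde y_t)$ is independent of $\theta_{t-1}$, so Theorem~\ref{thm:unbiased}(ii) followed by averaging over $(x_t,y_t)\sim\sD$ gives $\EE[g_t\mid\mathcal F_{t-1}]=\nabla\risk(\theta_{t-1})$. Swapping gradient and expectation is justified by boundedness of $\nabla_\theta\ell$ on $\Theta\times\sX$.

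Projection onto the convex set $\Theta$, which contains $\theta^*$, is nonexpansive. I assume the interior minimizer coincides with the constrained one, or else I use the variational inequality. Expanding the square gives
\[\EE\|\theta_t-\theta^*\|^2\le \EE\|\theta_{t-1}-\theta^*\|^2-2\gamma\EE\langle\nabla\risk(\theta_{t-1}),\theta_{t-1}-\theta^*\rangle+\gamma^2\EE\|g_t\|^2.\]
I split $\EE\|g_t\|^2\le 2\EE\|g_t-\nabla_\theta\ell(\theta_{t-1},x_t,y_t)\|^2+2\EE\|\nabla_\theta\ell(\theta_{t-1},x_t,y_t)-\nabla\risk(\theta^*)\|^2$, using $\nabla\risk(\theta^*)=0$. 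Theorem~\ref{thm:our_gradient_variance} bounds the first term by $V:=C^2(\sigma^2+4\tilde\sigmoid(\epsilon_y)(\tilde\sigmoid(\epsilon_y)-1)(1+\sigma^2))$.

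The second term is the step I expect to be the main obstacle. The standard approach uses smoothness of each $\ell$ and the co-coercivity bound $\EE\|\nabla\ell(\theta)-\nabla\ell(\theta^*)\|^2\le 2\sK(\risk(\theta)-\risk(\theta^*))$, but Assumption~\ref{ass:strong_convexity_smoothness} only concerns $\risk$. The simplest workaround is to bound the term crudely by $4C^2$ via the definition of $C$ and absorb it into $V$. This keeps the $(1-\gamma\mu)^n$ contraction, since strong convexity gives $\langle\nabla\risk(\theta),\theta-\theta^*\rangle\ge\mu\|\theta-\theta^*\|^2$. Alternatively, if per-sample smoothness is available, co-coercivity together with $\gamma\le1/(2\sK)$ cancels the optimality-gap term, as in \citet{stich_unified}.

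In either case the recursion becomes $r_t\le(1-\gamma\mu)r_{t-1}+2\gamma^2 V'$. Unrolling it and summing the geometric series gives $r_n\le(1-\gamma\mu)^n r_0+2\gamma V'/\mu$. Substituting $\sigma^2=8\log(1.25/\delta)\norm{\sX}^2/\epsilon_x^2$ and keeping the dominant factor $\tilde\sigmoid(\epsilon_y)(\tilde\sigmoid(\epsilon_y)-1)\sigma^2$ inside the $\sO$ produces the first displayed bound.

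\emph{Step-size choice.} Take $\gamma=\min\{1/(2\sK),\,2\log(n)/(\mu n)\}$. If the minimum is $1/(2\sK)$, the bias term is $\exp(-\mu n/(2\sK))r_0$. Otherwise $(1-\gamma\mu)^n\le n^{-2}$, and the variance term becomes $\tilde\sO(V/(\mu^2 n))$. Adding the two cases gives the second bound.
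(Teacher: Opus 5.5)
Your proof is correct. It has the same skeleton as the paper's: post-processing, unbiasedness via Theorem~\ref{thm:unbiased}, nonexpansive projection, an unrolled recursion, and a capped logarithmic step size. The difference lies in how you control $\EE\|g_t\|^2$.

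The paper writes $\EE\|g_t\|^2=\|\nabla\risk(\theta_t)\|^2+\EE\|g_t-\nabla\risk(\theta_t)\|^2$ and bounds $\|\nabla\risk(\theta_t)\|^2\le 2\sK(\risk(\theta_t)-\risk^\star)$. It then uses $\langle\nabla\risk(\theta_t),\theta_t-\theta^\star\rangle\ge\frac{\mu}{2}\|\theta_t-\theta^\star\|^2+\risk(\theta_t)-\risk^\star$, so that $\gamma\le 1/(2\sK)$ cancels the optimality gap as in \citet{stich_unified}. This needs only smoothness of $\risk$, so the obstacle you anticipated (no per-sample smoothness) does not actually arise.

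Your crude bound by $C^2$ needs no smoothness at all; the cap on $\gamma$ only serves to keep $\gamma\mu\le 1/2$. It also holds verbatim for a boundary minimizer. The price is an additive $\sO(\gamma C^2/\mu)$ that is not multiplied by the privacy factor.

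That price is real, not an artifact of your method. $\EE\|g_t-\nabla\risk(\theta_t)\|^2$ also contains the sampling variance $\EE_{(x,y)}\|\nabla_\theta\ell(\theta_t,x,y)-\nabla\risk(\theta_t)\|^2$. The paper's proof substitutes only the privacy-noise variance of Theorem~\ref{thm:our_gradient_variance}, so it drops this term; restoring it gives the paper's route the same extra contribution. Both arguments therefore reach the displayed $\sO(\cdot)$ only under the paper's implicit convention that $\tilde\sigmoid(\epsilon_y)(\tilde\sigmoid(\epsilon_y)-1)\sigma^2$ dominates $\sO(1)$ quantities. The paper's own constant $A$ already contains $C^2\sigma^2$ and $4C^2\tilde\sigmoid(\epsilon_y)(\tilde\sigmoid(\epsilon_y)-1)$ as separate terms.

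One detail to repair in your step-size case analysis. When $\gamma=2\log(n)/(\mu n)$, the bias term $n^{-2}\|\theta_0-\theta^\star\|^2$ is not $\sO(\|\theta_0-\theta^\star\|^2e^{-\mu n/(2\sK)})$, so it must be absorbed into the $1/n$ term. Strong convexity together with the optimality condition gives $\mu\|\theta_0-\theta^\star\|\le\|\nabla\risk(\theta_0)\|\le C$. Hence $n^{-2}\|\theta_0-\theta^\star\|^2\le C^2/(\mu^2 n)$, which your variance term already dominates. Alternatively, take the paper's choice $\gamma=\frac{1}{\mu n}\log\max\bigl(2,\mu^2\|\theta_0-\theta^\star\|^2 n/A\bigr)$, which bounds the bias by $A/(\mu^2 n)$ by construction.
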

\begin{proof}[Sketch of proof.]
    The proof, given in Appendix~\ref{appendix:convergence_IWP_SGD}, is a direct application of \citet{stich_unified} with our unbiased noisy gradient estimator and its variance bound given in Theorem~\ref{thm:our_gradient_variance}. We also use their derivation for the appropriate step-size $\gamma=\sO(\log(n)/n)$.
\end{proof}

Theorem~\ref{thm:conv_IWPSGD} shows that the last iterate of IWP-SGD is converging to the population risk minimizer when the number of examples grows to infinity. The convergence rate of IWP-SGD matches the dependency on $n$ of locally private SGD in \citet[Theorem~20, item 4]{noisy_oracle}. However, depending on the choice of loss, we may, through $C$, suffer from a higher dependency on the dimensions of $\sX$ and $\Theta$ than \citet{noisy_oracle}, which, for 1-Lipschitz, smooth and strongly-convex losses, show a linear dependency on the dimension of the model space $\Theta$. In practice, one can obtain estimators with lower variance by considering batches of examples at each iteration. 
However, it does not change the fact that each example can only be used once in the optimization process, and, thus, has limited impact on the total number of records required to approximate the solution.

The general framework of IWP-SGD can be instantiated in various settings. In particular, when applied to generalized linear models, the method admits a more tractable form, which we develop in the following subsection.

\subsection{Application to Generalized Linear Models}
\label{subsec:gen_lin_loss}
Let $\ell(\theta,x,y)=f\left(\theta^\top xy\right)$ be a loss that satisfies the generalized linear loss assumption.
In this case, by Equation~\eqref{eq:loss_estim}, the IWP loss estimators becomes
\begin{align*}
        \tilde\ell_{\epsilon,\delta}(\theta,\;\tilde x,\;\tilde y) 
        &=\tilde \sigmoid(\epsilon_y)\WW_{\sigma^2 \norm{\theta}^2}^{-1}[f]\left(\theta^\top \tilde x \tilde y\right)\\
        &+\left(1-\tilde\sigmoid(\epsilon_y)\right)\WW_{\sigma^2 \norm{\theta}^2}^{-1}[f]\left(-\theta^\top \tilde x \tilde y\right),
\end{align*}
and the IWP gradient estimator can be expressed as
\begin{align*}
        \nabla_\theta\tilde\ell_{\epsilon,\delta}(\theta,&\tilde x, \tilde y) = \tilde \sigmoid(\epsilon_y)\nabla_\theta\WW_{\sigma^2 \norm{\theta}^2}^{-1}[f]\left(\theta^\top \tilde x \tilde y\right)\\
        &+\left(1-\tilde\sigmoid(\epsilon_y)\right)\nabla_\theta\WW_{\sigma^2 \norm{\theta}^2}^{-1}[f]\left(-\theta^\top \tilde x \tilde y\right).
\end{align*}
This form involves the Weierstrass transform of the scalar valued function $f$ instead of $\ell(\theta,\cdot,y)$, which simplifies the IWP gradient estimator expression. It further yields a closed-form expression for some losses $f$ such as the quadratic or exponential losses.
\begin{example}[Quadratic loss]
    Let $f\left(\theta^\top xy\right) =\frac{1}{2}(\theta^\top xy-1)^2$ for any $(\theta,x,y)\in\Theta\times\sX\times\sY$, the IWP loss is
    \begin{align*}
        \tilde\ell_{\epsilon,\delta}(\theta,\tilde x, \tilde y) &= \tilde \sigmoid(\epsilon_y)f(\theta^\top \tilde x \tilde y)\\
        &+\left(1-\tilde\sigmoid(\epsilon_y)\right)f(-\theta^\top \tilde x \tilde y)- \frac{\sigma^2}{2}\|\theta\|^2,
    \end{align*}
    with the corresponding IWP gradient estimator
    \begin{align*}
        \nabla_\theta &\tilde\ell_{\epsilon,\delta}(\theta,\tilde x, \tilde y) = \tilde \sigmoid(\epsilon_y)\nabla_\theta f(\theta^\top \tilde x \tilde y) \\
        &+\left(1-\tilde\sigmoid(\epsilon_y)\right)\nabla_\theta f(-\theta^\top \tilde x \tilde y) - \sigma^2\theta.
    \end{align*}
\end{example}
In that case, the IWP gradient estimator is acting like $\ell_2$ regularization with the negative constant $-\sigma^2$ and requires two gradient evaluations. 
\begin{example}[Exponential loss]
    Let $f\left(\theta^\top xy\right) =e^{-\theta^\top xy}$ for any $(\theta,x,y)\in\Theta\times\sX\times\sY$, the IWP loss is
    \begin{align*}
        \tilde\ell_{\epsilon,\delta}(\theta,\tilde x, \tilde y) &= e^{-\sigma^2\|\theta\|^2/2}\tilde \sigmoid(\epsilon_y)f(\theta^\top \tilde x \tilde y)\\
        &+e^{-\sigma^2\|\theta\|^2/2}\left(1-\tilde\sigmoid(\epsilon_y)\right)f(-\theta^\top \tilde x \tilde y),
    \end{align*}
    with the corresponding IWP gradient estimator
    \begin{align*}
         \nabla_\theta &\tilde\ell_{\epsilon,\delta}(\theta,\tilde x, \tilde y) = e^{-\sigma^2\|\theta\|^2/2} \tilde \sigmoid(\epsilon_y)\nabla_\theta f(\theta^\top \tilde x \tilde y)\\
         &+e^{-\sigma^2\|\theta\|^2/2}\left(1-\tilde\sigmoid(\epsilon_y)\right)\nabla_\theta f(-\theta^\top \tilde x \tilde y)\\
         &- \sigma^2 e^{-\sigma^2\|\theta\|^2/2}\tilde \sigmoid(\epsilon_y) f(\theta^\top \tilde x \tilde y)\theta\\
        &- \sigma^2 e^{-\sigma^2\|\theta\|^2/2}\left(1-\tilde\sigmoid(\epsilon_y)\right) f(-\theta^\top \tilde x \tilde y)\theta.
    \end{align*}
\end{example}
There, the IWP gradient estimator requires two gradient evaluations and two loss evaluations. As in the previous example, there is a similar term to $\ell_2$ regularization with a negative constant of order $-\sigma^2\exp(\epsilon_y+\epsilon_x^2)$.

\section{Experiments}
\label{sec:experiments}

This section empirically validates the claimed guarantees of convergence and absence of bias of Section~\ref{sec:bias_correction}. We compare three different SGD approaches: (i) SGD - real data: on the original dataset without noise, (ii) SGD - noisy data: on the $(\epsilon,\delta)$-LDP released dataset via the mechanism of Equation~\eqref{eq:ldp_release} and (iii) IWP-SGD: Algorithm~\ref{alg:IWP-SGD} on the same $(\epsilon,\delta)$-LDP released dataset. All the experiments are binary classification problems with linear models minimizing the exponential loss with $\ell_2$ regularization. It is thus a strongly convex problem having a unique minimizer. We average 100 random draws of data and noise of the LDP mechanism for the synthetic data, and noise only for the real data. Across all three methods, we report the empirical risk on a test dataset defined as $\loss(\theta)=\frac{1}{n}\sum_i \ell(\theta,x_i,y_i)$. Additional details on the experimental setup are provided in Appendix~\ref{appendix:experiments}.

\paragraph{Synthetic Data.} We study two synthetic binary classification problems in dimension $p=2$ and $p=10$ generated with the \verb|make_classification| routine of \verb|scikit-learn| \citep{pedregosa2011scikit} having features within $[-1,1]^p$. We conduct the experiments on $n=10^6$ samples for two privacy guarantees : $(2,10^{-5})$-LDP for $p=2$ and $(5,10^{-5})$-LDP for $p=10$. 

\begin{figure}
    \centering
        \includegraphics[trim=0 0 0 0, clip,width=0.235\textwidth]{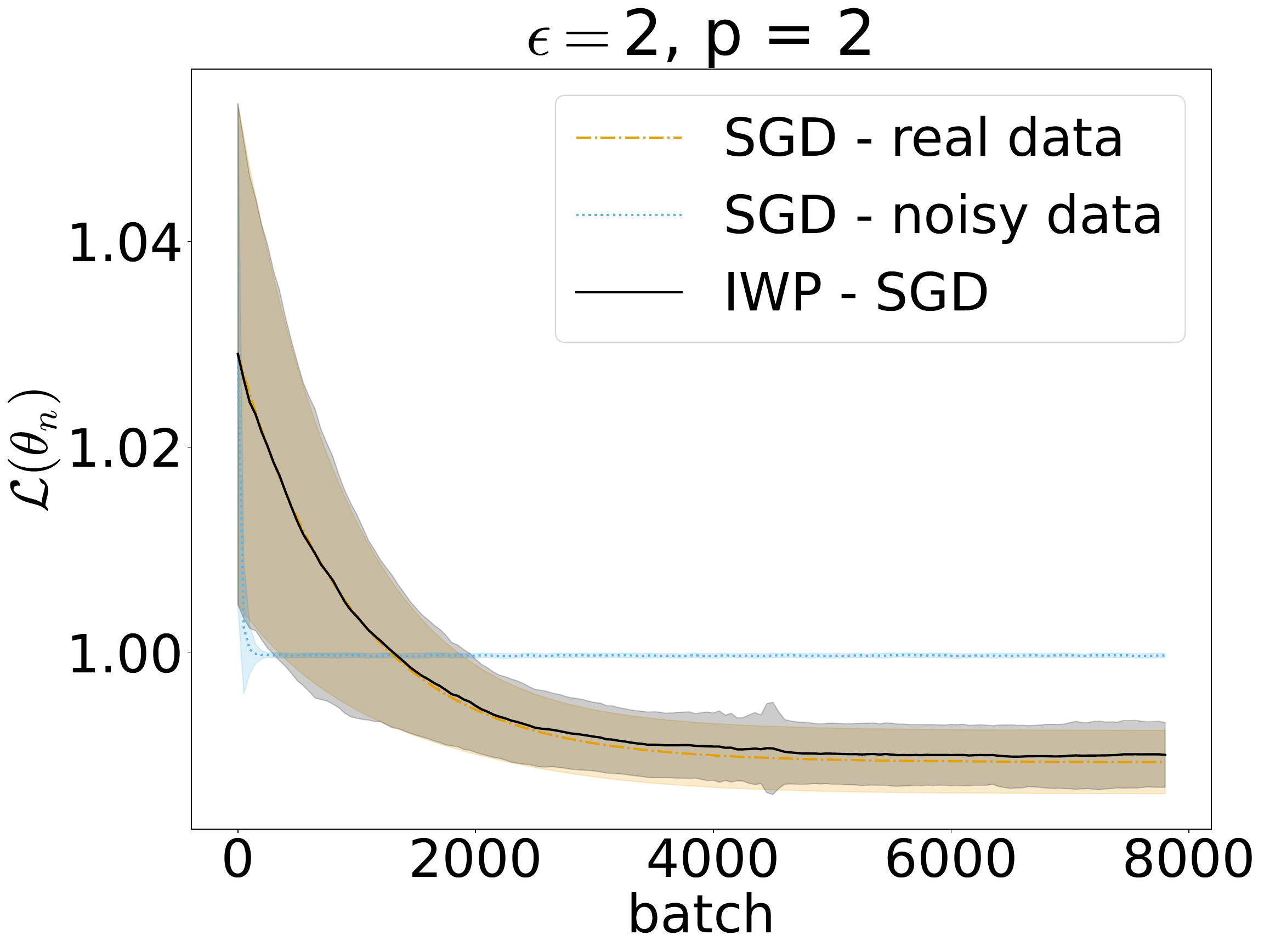}
        \includegraphics[trim=0 0 0 0, clip,width=0.235\textwidth]{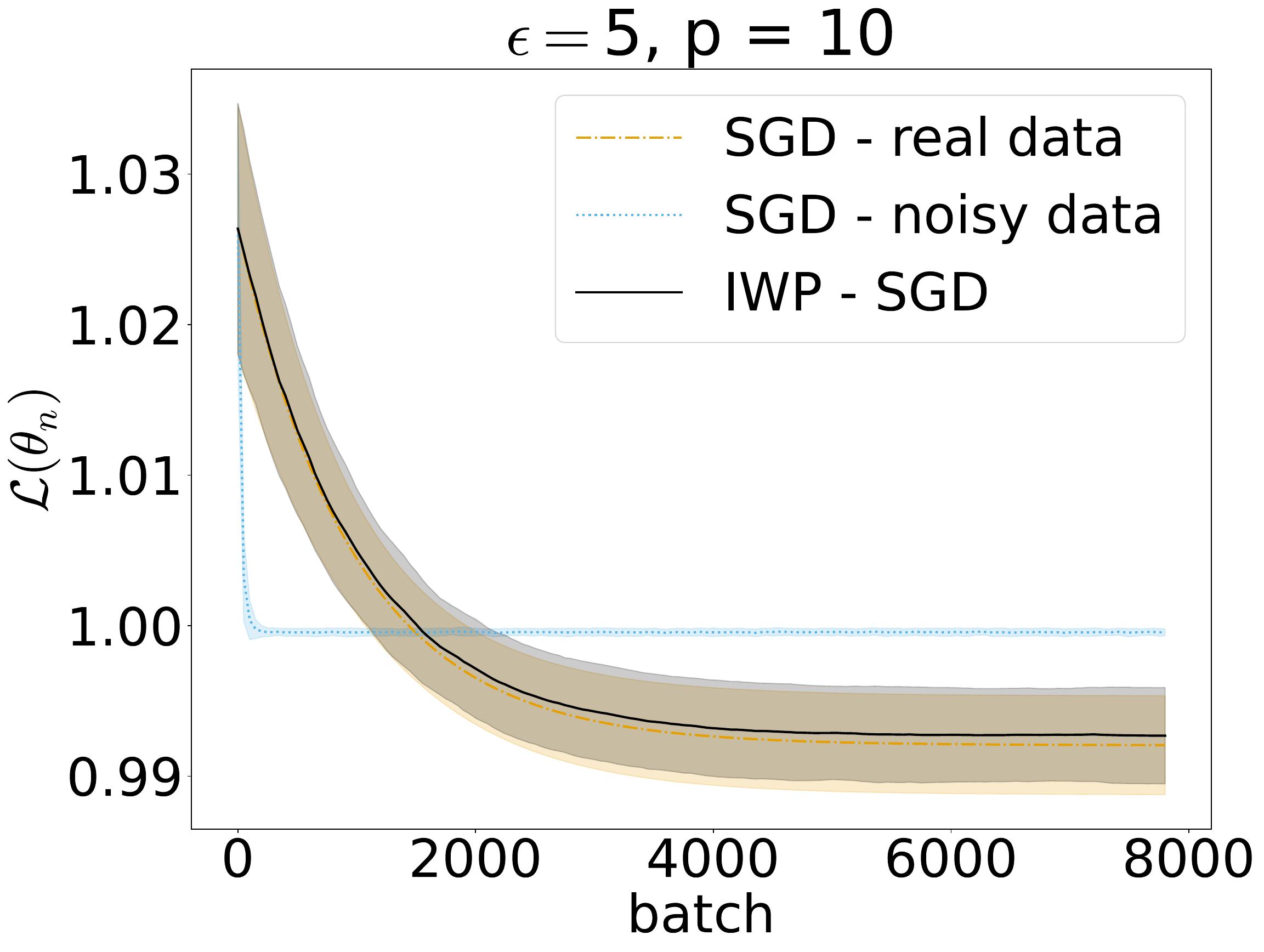}
    \captionof{figure}{Comparison of SGD convergence of the exponential loss under $(2,10^{-5})$-LDP for the 2-dimensional synthetic data and $(5,10^{-5})$-LDP for the 10-dimensional synthetic data.}
    \label{fig:exp_comp}
\end{figure}

The constant loss over the batches in Figure~\ref{fig:exp_comp} shows models fitted via SGD on noisy data converge to a different model than models fitted via SGD on the real data. 
Whereas the loss of models fitted via IWP-SGD follow the one of the models fitted on the real data. It illustrates the absence of bias for IWP-SGD and its presence for SGD on noisy data.

\paragraph{Real Data.} We study the ACSIncome and ACSPublicCoverage problems of the Folktables dataset \citep{folktables}. ACSIncome consists of predicting whether an individual’s income is above \$50~000 and ACSPublicCoverage consists of predicting individual coverage from health insurance. For both problems, we select the three variables \emph{AGEP} (age in years), \emph{SEX} and \emph{SCHL} (educational attainment). For ACSIncome we add \emph{WKHP} (usual hours worked per week over the past year) and for ACSPublicCoverage we add \emph{PINCP} (total annual income). We employ the Gaussian mechanism, which is suitable for continuous and ordinal variables. Although suboptimal for binary variables, we also apply it to the \emph{SEX} attribute for consistency and practicality. We merge the data of the five largest states yielding datasets of respectively 668~859 rows and 883~984 rows for ACSIncome and ACSPublicCoverage. The data is then randomly split into training (80\%) and test (20\%) sets.

\begin{figure}
   \centering
    \begin{subfigure}[b]{0.5\textwidth}
        \centering
        \begin{minipage}[b]{0.48\textwidth}
        \includegraphics[trim=0 0 0 0, clip,width=0.95\textwidth]{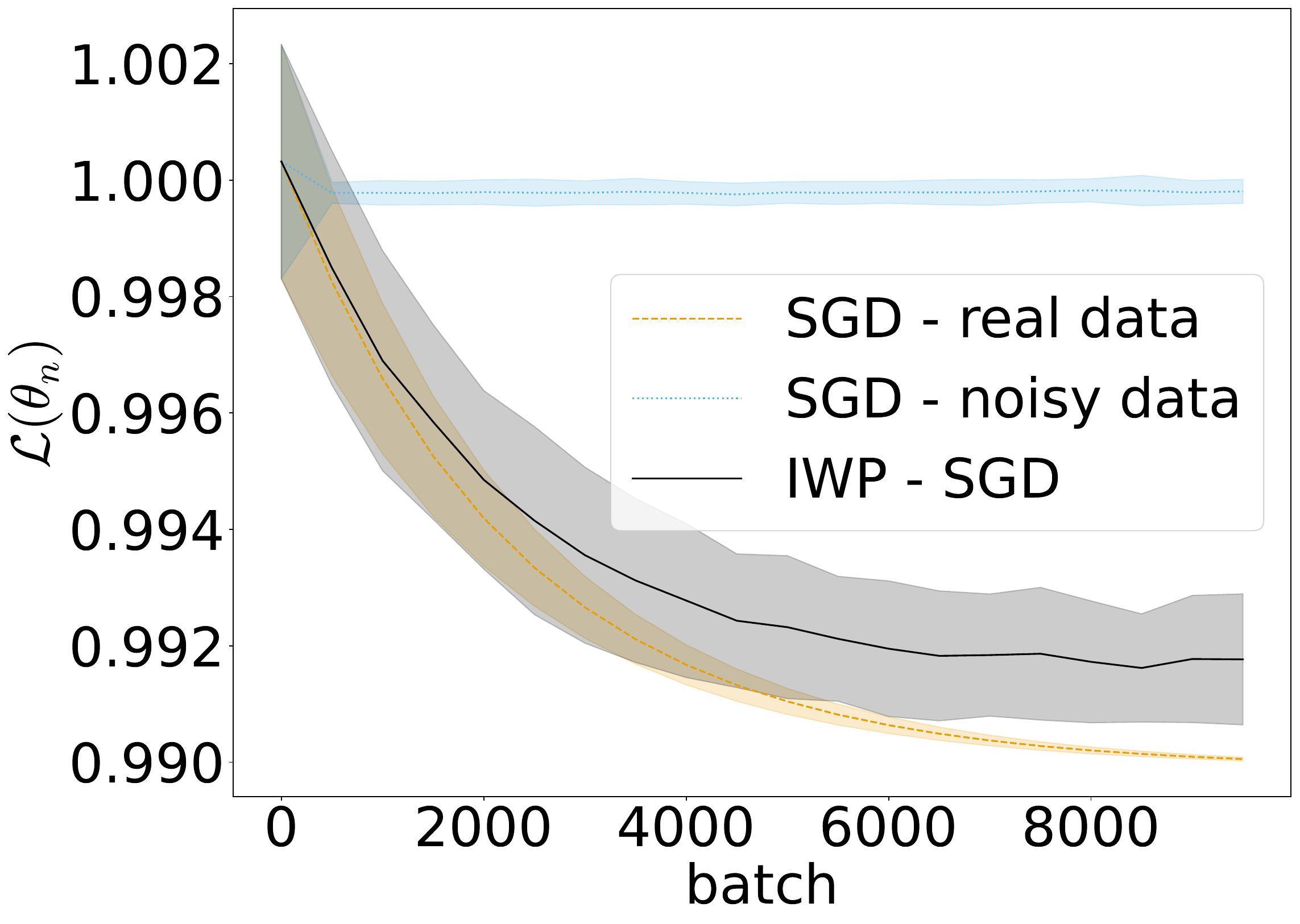}
        \end{minipage}
        \begin{minipage}[b]{0.48\textwidth}
        \includegraphics[trim=0 0 0 0, clip,width=0.95\textwidth]{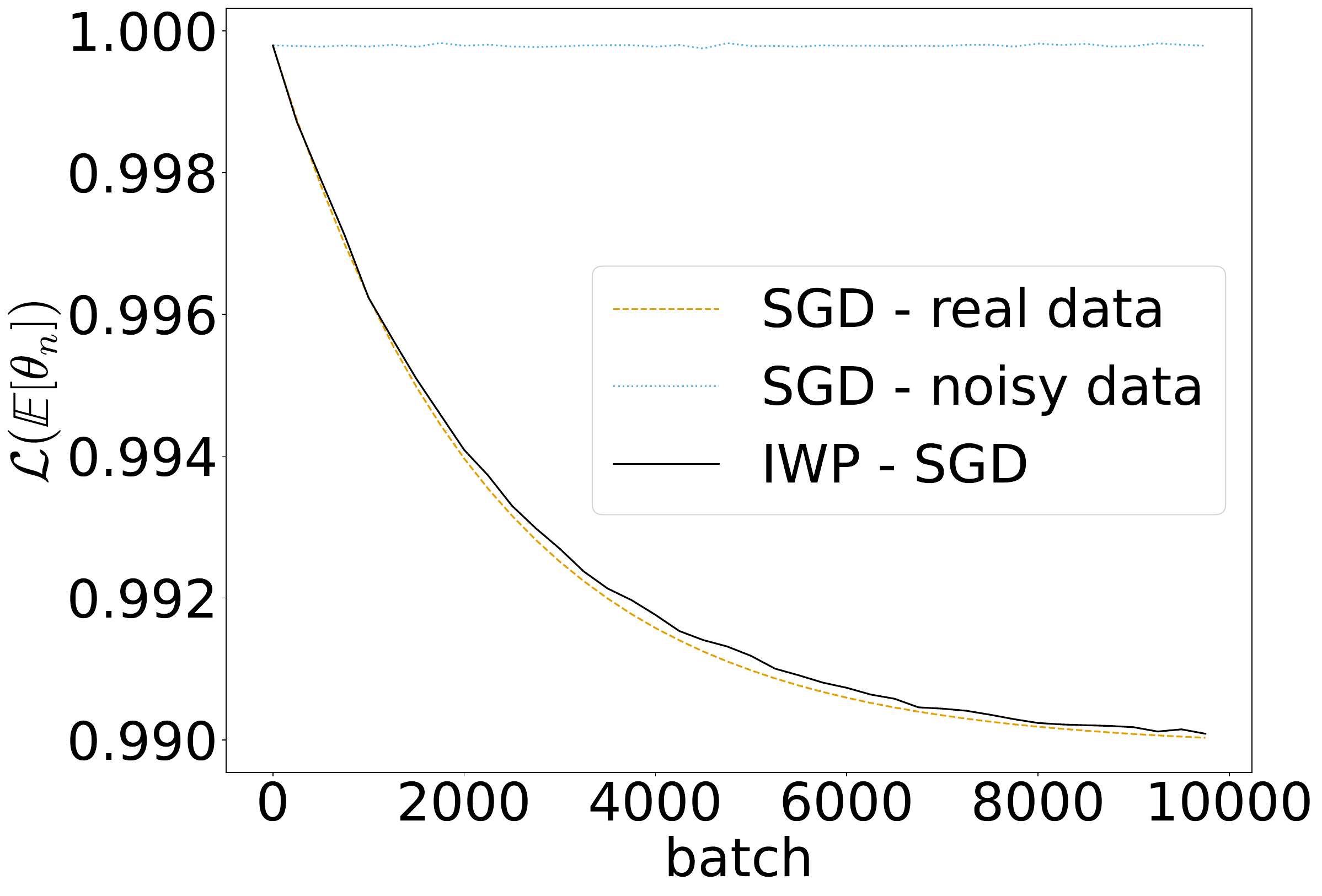}
        \end{minipage}
        \caption{ACSPublicCoverage}
        \label{fig:exp_comp_folktables_ACSPublicCoverage}
    \end{subfigure}
    \begin{subfigure}[b]{0.5\textwidth}
        \centering
        \begin{minipage}[b]{0.48\textwidth}
        \includegraphics[trim=0 0 0 0, clip,width=0.95\textwidth]{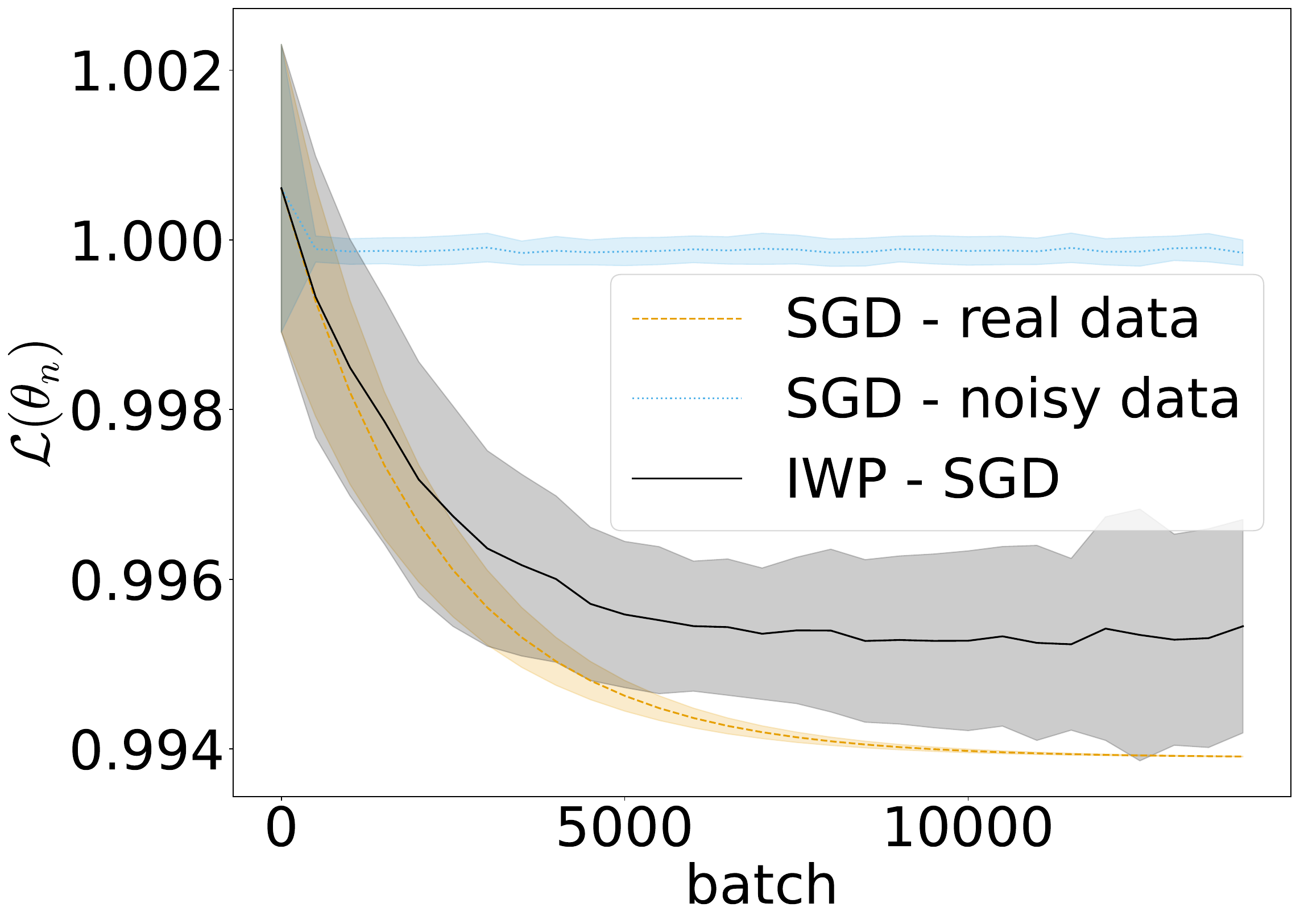}
        \end{minipage}
        \begin{minipage}[b]{0.48\textwidth}
        \includegraphics[trim=0 0 0 0, clip,width=0.95\textwidth]{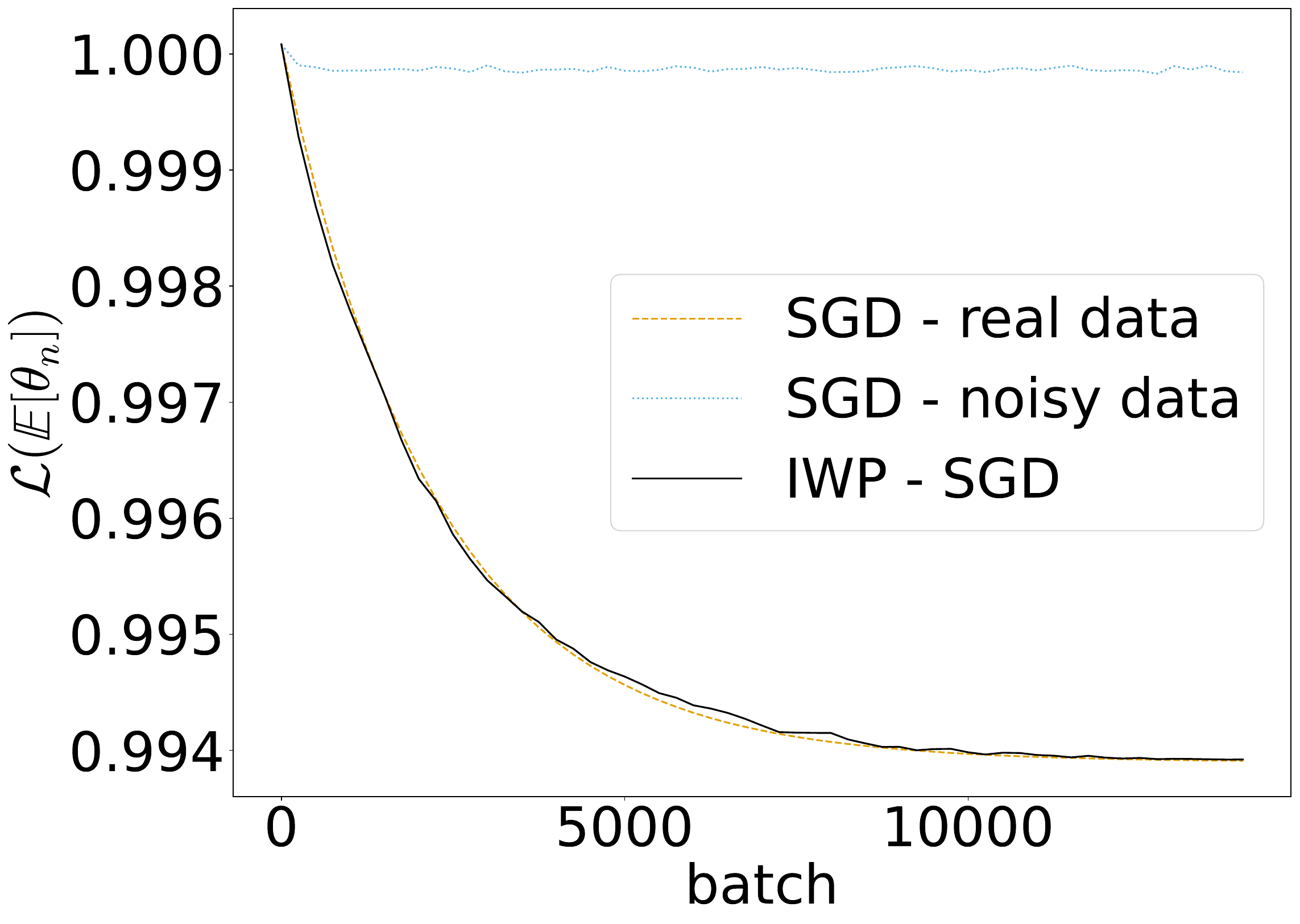}
        \end{minipage}
        \caption{ACSIncome}
        \label{fig:exp_comp_folktables_ACSIncome}
    \end{subfigure}

    \caption{Comparison of SGD convergence of the exponential loss under $(2,10^{-5})$-LDP on ACSPublicCoverage and ACSIncome. Left hand plots show the averaged loss of fitted model $\left(\theta_n\right)$ while right hand plots are showing the loss of the averaged model $\left(\EE\theta_n\right)$ over random draws.}
    \label{fig:exp_comp_folktables}
\end{figure}

Figure~\ref{fig:exp_comp_folktables} shows the average loss over fitted models $\left(\theta_n\right)$ and also the loss of the averaged model $\left(\EE\theta_n\right)$ over the random draws. It allows us to distinguish the remaining excess risk resulting from bias or variance of the IWP-SGD outputs. As with synthetic data, we remark that the constant loss of models fitted via SGD on noisy data with the number of batches illustrates the presence of bias in this setting. Whereas the IWP-SGD outputs are showing a decrease to a remaining low loss close to the one on real data. We can interpret this remaining gap as a consequence of the variance of IWP-SGD. Indeed, on the second plot, the averaged output model $\left(\EE\theta_n\right)$ for IWP-SGD is showing a null difference with the loss of the optimal model obtained through SGD.
\section{Conclusion}

In this paper, we characterized the bias that occurs when learning from an LDP-released dataset using Gaussian and Randomized Response mechanisms. Linking these mechanisms with transform operators, we derived an expression of the bias on the population risk under these LDP mechanisms. This view of privacy as a transform yielded the construction of a theoretically-grounded debiasing technique, which takes the form of a variant of SGD called IWP-SGD.

Our results show, theoretically and empirically, that the bias induced by the use of private noisy examples for SGD can be avoided at the cost of a higher variance of the noisy gradient estimator, illustrating a bias-variance tradeoff. This opens a pathway to study LDP through the lens of transform operators. Extending the framework of this paper to other locally private mechanisms presents promising avenues for future exploration.

\section*{Impact Statement}

This paper presents work that can help in the design of Machine Learning projects using locally private examples. It can help argue against the idea that directly learning from noisy examples in Differential Privacy is a problem that is too hard to be solved using practical algorithms. Future societal consequences might be the publication of locally private data for later use in fields where no public dataset exist.

\section*{Acknowledgments}
We acknowledge the support of the French National Research Agency (ANR) through the grant ANR-23-CE23-0011-01 (Project FaCTor) and ANR 22-PECY-0002 IPOP (Interdisciplinary Project on Privacy) project of the Cybersecurity PEPR.

\bibliography{icml_ref}
\bibliographystyle{icml2026}

\newpage
\appendix
\onecolumn

\section{Definitions and Theorems}
\label{app:defs}

\paragraph{Notations.}
We recall the main notations of the paper:
\begin{itemize}
    \item Data lies in $\mathcal{X} \subset \RR^{p}$, $\sY=\{-1,1\}$, parameters in $\Theta \subset \RR^k$, and $\sD$ is a joint distribution over $\sX\times \sY$.
    \item The loss function is $\ell: \Theta\times\RR^p\times\sY \rightarrow \RR$ and the associated risk $\risk(\theta)= \EE_{(x,y)\sim\sD}[\ell(\theta,x,y)]$.
    \item We denote $\norm{\cdot}$ the euclidean norm, and for any subset $\sZ\subset\RR^d$, we write $\norm{\sZ}=\sup_{z\in\sZ}\norm{z}$.
    \item For $a,b:\NN\to \RR^+$, we write $a=\sO(b)$ if there exists $C>0$ such that for all $n\in\NN$, $a(n)\leq Cb(n)$.
    \item For a complex number $z=\alpha+i\beta$ with $(\alpha,\beta)\in\RR^2$ and $i^2=-1$, we denote $\Re(z)=\alpha$ its real part, $\Im(z) =\beta$ its imaginary part, and $|z|=\sqrt{\alpha^2+\beta^2}$ its modulus.
\end{itemize}
\paragraph{Subsets of $\boldsymbol{\RR^d}$.}
We also recall some definitions on subsets of $\RR^d$ for completeness.
\begin{definition}[Convex subset of $\RR^d$]
    A subset $\sZ\subset\RR^d$ is convex if for any $z,z'\in\sZ$ and $t\in[0,1]$,
    $tz + (1-t)z' \in \sZ$.
\end{definition}
\begin{definition}[Open and closed subsets of $\RR^d$]
    A subset $\sZ\subset\RR^d$ is open if for any $z\in\sZ$, there exist $\delta>0$ such that for any $\tilde z\in\RR^d$
    such that $\norm{z-\tilde z}<\delta$,  $\tilde z \in \sZ$.
    The set
    $\sZ$ is said to be closed if $\{z\in\RR^d\:|\:z\notin\sZ\}$ is open.
\end{definition}
\begin{definition}[Compact subset of $\RR^d$]
    A subset $\sZ\subset\RR^d$ is said to be compact if it is closed and bounded ($\norm{\sZ}<\infty$).
\end{definition}
\paragraph{Regularity of functions.}
We now define classical regularity definitions for real-valued functions.
\begin{definition}[Convexity and strong convexity]
    A differentiable function $f:\RR^p\to\RR$ is $\mu$-strongly convex (where $\mu>0$) if $\forall x,y\in\RR^p$, we have $f(x)-f(y)\geq \langle \nabla f(y),x-y\rangle +\frac{\mu}{2}\|x-y\|^2$, and convex if this holds for $\mu=0$.
\end{definition}
\begin{definition}[Smoothness]
    A differentiable function $f:\RR^p\to\RR$ is $\sK$-smooth if $\forall x,x'\in\RR^p$, we have $\norm{\nabla f(x)-\nabla f(x')}\le\sK\|x-x'\|$.
\end{definition}
\begin{definition}[Laplacian]
The Laplacian of a twice differentiable function $f$ is $\Delta[f] = \sum_i \partial^2_{x_i}[f]$ and its composition $k$ times is denoted $\Delta^k[f] = (\Delta\circ\dots\circ\Delta)[f]$.
\end{definition}
\paragraph{Analytic functions.}
We define real and complex analytic functions as follows.
\begin{definition}[Real analytic functions \citep{analytic_def_2}]
    A function $f:\RR\to\RR$ is real analytic on a subset $\Omega\subset\RR$ if it is infinitely continuously differentiable on $\Omega$ and for any compact $K\subset\Omega$, there exist $A>0$ such that for any $k\in\NN^*$,
    $$\sup_{x\in K}\left| \frac{d^k f}{dx^k}(x)\right| \le A^{k+1} k!\eqsp.$$
\end{definition}
\begin{definition}[Complex analytic function \citep{complex_analysis}]
    A function $f:\CC\to\CC$ is complex analytic on a subset $K\subset\CC$ if for any $x_0\in\Omega$, it admits a convergent power series in a neighborhood of $x_0$.
\end{definition}
Two key results from complex analysis are Morera's theorem \citep[Page 122]{complex_analysis} and Cauchy's integral theorem \citep[Theorem 2, Page 109]{complex_analysis}. These will allow us to give explicit expressions of the Weierstrass transform.
\begin{theorem}[Morera's theorem]
    \label{thm:Morera}
    Let $f:\CC\to\CC$ be a complex-valued function that is continuous on $K\subset\CC$ an open set in the complex plane. If
    $$\oint_\Gamma f(x)dx = 0$$
    for every closed piecewise smooth contour $\Gamma$ in $K$, then $f$ is complex analytic.
\end{theorem}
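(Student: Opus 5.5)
The plan is to reduce to a local statement on disks, build a primitive of $f$ there using the vanishing of contour integrals, and then invoke the fact that holomorphic functions are complex analytic (via Cauchy's integral formula) together with the fact that derivatives of analytic functions are analytic.

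\emph{Localization.} Complex analyticity is a local property: it asks for a convergent power series near each point. Fix $z_0\in K$. Since $K$ is open, choose $r>0$ such that the open disk $D=\{z:|z-z_0|<r\}$ lies in $K$. It then suffices to show that $f$ admits a convergent power series on $D$.

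\emph{Building a primitive.} For $z\in D$, define
\[
F(z)=\int_{[z_0,z]} f(\zeta)\,d\zeta ,
\]
the integral along the straight segment from $z_0$ to $z$; this segment lies in $D$ by convexity. Take $h$ small enough that $z+h\in D$. The triangle with vertices $z_0, z, z+h$ is a closed piecewise smooth contour contained in $D\subset K$, so by hypothesis its integral vanishes. This gives
\[
F(z+h)-F(z)=\int_{[z,z+h]} f(\zeta)\,d\zeta .
\]
Writing $\int_{[z,z+h]} f(\zeta)\,d\zeta = h f(z)+\int_{[z,z+h]}\big(f(\zeta)-f(z)\big)d\zeta$, the remainder is bounded by $|h|\sup_{\zeta\in[z,z+h]}|f(\zeta)-f(z)|$, which is $o(|h|)$ by continuity of $f$ at $z$. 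Hence $F$ is complex differentiable on $D$ with $F'=f$.

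\emph{From holomorphic to analytic.} Next I will show that $F$ is analytic on $D$. Take any closed disk $\overline{D'}\subset D$ centered at $z_0$. Apply Cauchy's integral theorem \citep{complex_analysis} to the holomorphic function $\zeta\mapsto (F(\zeta)-F(z))/(\zeta-z)$, which has a removable singularity at $z$. This yields the integral formula
\[
F(z)=\frac{1}{2\pi i}\oint_{\partial D'}\frac{F(\zeta)}{\zeta-z}\,d\zeta .
\]
Then expand $\frac{1}{\zeta-z}=\sum_n \frac{(z-z_0)^n}{(\zeta-z_0)^{n+1}}$. This series converges uniformly for $\zeta\in\partial D'$ when $|z-z_0|$ is less than the radius of $D'$, so term-by-term integration gives a convergent power series for $F$ around $z_0$. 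A convergent power series can be differentiated term by term inside its disk of convergence, so $f=F'$ also has a convergent power series there. Since $z_0$ was arbitrary, $f$ is complex analytic on $K$.

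\emph{Main obstacle.} The delicate step is the passage from holomorphic to analytic. It requires Cauchy's theorem in a form that handles the quotient with its removable singularity, and that must be justified with some care. Alternatively, one could observe that $F$ has a primitive-type structure on the convex set $D$ and quote Goursat's version of Cauchy's theorem. The primitive construction itself is routine once the triangle contours are known to lie in $K$, and that is exactly why I restrict to a disk (a convex set) first.
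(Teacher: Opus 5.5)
Your argument is correct, and it is the standard proof. The paper does not prove Morera's theorem itself but cites the textbook \citep{complex_analysis}, whose proof follows exactly your route: build a primitive $F$ with $F'=f$ from the vanishing contour integrals, then use the Cauchy integral formula to show that the derivative of a holomorphic function is analytic. The one point to keep straight, which you already flag, is that the Cauchy theorem you invoke must be the Goursat form for merely complex-differentiable functions, allowing an exceptional point $z$ where the quotient is only known to be continuous. It cannot be Theorem~\ref{thm:Cauchy} as stated in the paper, since that version presupposes power-series analyticity and would make the step circular.
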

\begin{theorem}[Cauchy's integral theorem]
    \label{thm:Cauchy}
    Let $f:\CC\to\CC$ be a complex analytic function on $K\subset\CC$ an open set in the complex plane. Then for any closed piecewise smooth contour $\Gamma$ in $K$
    $$\oint_\Gamma f(x)dx = 0.$$
\end{theorem}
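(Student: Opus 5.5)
We will prove the statement in the form in which it is used later, namely for $f$ complex analytic on an open set $K$ and a closed piecewise smooth contour $\Gamma$ that is contractible in $K$. This holds automatically when $K$ is convex, when $K$ is simply connected, or when $K=\CC$ (entire functions such as Gaussian kernels times polynomials or exponentials). This restriction is necessary: for $f(z)=1/z$ on $K=\CC\setminus\{0\}$ and $\Gamma$ the unit circle, $\oint_\Gamma f = 2\pi i\neq 0$. So we read ``open set'' as ``open simply connected set'' (or $\Gamma$ null-homotopic in $K$), and note this caveat explicitly.

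\textbf{Step 1 (local primitives).} By the definition of complex analyticity, every $z_0\in K$ has a disc $D(z_0,r)\subset K$ on which $f(z)=\sum_{n\ge0} c_n (z-z_0)^n$. The series $F(z)=\sum_{n\ge0} \frac{c_n}{n+1}(z-z_0)^{n+1}$ has the same radius of convergence. Termwise differentiation gives $F'=f$ on the disc. Hence for any piecewise smooth path $\gamma:[s,t]\to D(z_0,r)$, the chain rule and the fundamental theorem of calculus, applied on each smooth piece, give $\int_\gamma f = F(\gamma(t))-F(\gamma(s))$. In particular, the integral over a closed path contained in one disc vanishes. This step avoids Goursat's lemma because we start from power series rather than mere complex differentiability.

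\textbf{Step 2 (globalization along a homotopy).} Let $H:[0,1]^2\to K$ be a continuous homotopy from $\Gamma$ to a constant loop, through closed curves. The image $H([0,1]^2)$ is compact, so by a Lebesgue number argument there is a grid $0=s_0<\dots<s_N=1$, $0=t_0<\dots<t_N=1$ such that each small square $Q_{jk}$ is mapped into a single disc $D_{jk}$ from Step 1.

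For each $k$, define the ``integral'' of $f$ along the intermediate closed curve $H(\cdot,t_k)$ as the sum of primitive differences $F_{jk}(H(s_{j+1},t_k))-F_{jk}(H(s_j,t_k))$. This makes sense even though the intermediate curves are only continuous, and for $k=0$ it equals $\oint_\Gamma f$ by Step 1.

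Two local primitives on overlapping discs differ by a constant on each connected component of the intersection, since their difference has zero derivative; discs are convex, so the intersection is connected. Using this, compare the sums at levels $t_k$ and $t_{k+1}$ square by square. The contributions of the vertical sides telescope, and the closed-curve condition $H(0,t)=H(1,t)$ kills the boundary terms. Hence the value is independent of $k$, and at the last level the curve is constant, so the value is $0$.

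\textbf{Main obstacle.} The delicate part is this bookkeeping in Step 2: checking that the discrepancy constants cancel around each square, and that the definition via primitives agrees with the actual contour integral at $k=0$. If one prefers to avoid homotopies, use the simpler route for convex $K$ (including $K=\CC$). There $F(z)=\int_{[z_0,z]} f$ is well defined. One shows $F'=f$ by applying Step 1 to small triangles, after subdividing each triangle finely enough that every piece lies in a disc of analyticity. Then $\oint_\Gamma f=\oint_\Gamma F'=0$ by the fundamental theorem of calculus along $\Gamma$. This convex case is all that the Weierstrass-transform arguments appear to require, so it is the version we would write out in full.
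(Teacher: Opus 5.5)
The paper gives no proof of this result. It quotes it from Ahlfors (Theorem~2, p.~109) and uses it as a black box, alongside Morera's theorem, in Lemma~\ref{lemma:analytic_W}. Your caveat is correct and necessary: with $K$ an arbitrary open set the statement as printed is false, and your example $f(z)=1/z$ on $\CC\setminus\{0\}$ with the unit circle shows this.

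Your argument is a standard and complete proof of the corrected statement. You first build local primitives by termwise integration of the power series, which legitimately sidesteps Goursat because the paper's definition of analyticity already supplies local power series. You then globalize, either by the Lebesgue-number and homotopy bookkeeping or, more simply on a convex set, by building $F(z)=\int_{[z_0,z]}f$ and checking $F'=f$ through fine subdivision of triangles. The telescoping in Step~2 works as you describe. The intersection of two discs is convex, so two local primitives differ by a single constant there. That constant cancels between adjacent squares, and between the sides $s=0$ and $s=1$, which have the same image. Compared with the citation, your route buys a correct hypothesis and self-containedness.

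The convex case is indeed all the paper needs. The set $\Omega=\{t\in\CC : \Re(1/4t)>a\}$ in Lemma~\ref{lemma:analytic_W} is the open disc with centre $1/(8a)$ and radius $1/(8a)$, hence convex. It lies in the right half-plane, where $t\mapsto(4\pi t)^{-p/2}\exp(-\|x-w\|^2/4t)$ is analytic with the principal branch. One small correction to your parenthetical: this integrand is not entire. It is singular at $t=0$ and, for odd $p$, needs a branch of $t^{-p/2}$. So the application is justified by the convexity of $\Omega$, not by the case $K=\CC$.
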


In particular, a corollary of these two theorems is that for a continuous complex function $f:\CC\to\CC$, it holds that
$$ f \text{ is analytic on $K \subset \CC$} 
\textit{ if and only if }
\oint_\Gamma f(x)dx = 0,\text{ for any closed piecewise smooth contour }\Gamma\subset K\eqsp.$$

\section{The Weierstrass Transform}
\label{appendix:weierstrass_expression}

\subsection{Expression of the Weierstrass Transform -- Proof of \Cref{thm:expressions_weierstrass}}
We provide a proof of the following theorem, which gives an expression of the Weierstrass transform.
\ExprWeier*

To prove this theorem, we use the following lemma from \citet[Chapter 7, Problem 3]{PDE}. No proof is given for this problem in the initial textbook, we thus provide one in the following.
\begin{lemma}[$t\mapsto\WW_{2t}$ is analytic for continuous Gaussian growing functions]
    Let $f:\RR^p\to\RR$ a continuous function satisfying Equation~\eqref{eq:gaussian_growth} we refer to as Gaussian growth and recall: $|f(x)|\le M\exp(a\norm{x}^2)$, for any $x\in\RR^p$. Then for any $x\in\RR^p$, $t\mapsto\WW_{2t}[f](x)$ is real analytic on $]0,1/4a[$.
    \label{lemma:analytic_W}
\end{lemma}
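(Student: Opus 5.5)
The plan is to write $\WW_{2t}[f](x)$ as a Gaussian integral, extend it to complex $t$, and prove holomorphy there using Morera's theorem (Theorem~\ref{thm:Morera}). Real analyticity on $]0,1/4a[$ then follows because a holomorphic function is locally given by convergent power series, and these restrict to the real axis. Concretely, I would define, for $\tau\in\CC$ with $\Re(\tau)>0$,
\begin{equation*}
F(\tau)=(4\pi\tau)^{-p/2}\int_{\RR^p} f(x-z)\exp\left(-\frac{\norm{z}^2}{4\tau}\right)dz,
\end{equation*}
where $\tau^{-p/2}$ is the principal branch. For real $\tau=t$ this is exactly $\WW_{2t}[f](x)$, since $\sN(0,2t\Id{p})$ has density $(4\pi t)^{-p/2}e^{-\norm{z}^2/4t}$.

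The core of the argument is to find a complex neighbourhood of every point of $]0,1/4a[$ on which the integral converges uniformly. The integrand's modulus is at most $M e^{a\norm{x-z}^2}\exp\left(-\norm{z}^2\Re(1/4\tau)\right)$. Using $\norm{x-z}^2\le(1+\eta)\norm{z}^2+(1+1/\eta)\norm{x}^2$, this is dominated by an integrable function of $z$ as soon as $\Re(1/4\tau)>a(1+\eta)$. The condition $\Re(1/\tau)>4a$ describes the open disk centred at $1/8a$ with radius $1/8a$, and this disk contains the whole interval $]0,1/4a[$.

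So for any compact subset $K$ of that disk, I can pick $\eta>0$ small enough that $\Re(1/4\tau)\ge a(1+\eta)+c$ for some $c>0$, uniformly for $\tau\in K$. With that choice, one fixed integrable function $g(z)=C_K e^{-c\norm{z}^2}$ dominates the integrand for all $\tau\in K$. This uniform domination is the step I expect to be the main obstacle. The bounds must be uniform in $\tau$ while $\Re(1/\tau)$ degrades near the boundary of the disk, and the prefactor $|\tau|^{-p/2}$ must stay bounded on $K$, which it does because $K$ stays away from $0$.

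Given the domination, two conclusions follow. First, continuity of $F$ on the disk follows from dominated convergence, since the integrand is continuous in $\tau$ for each fixed $z$. Second, for any closed piecewise smooth contour $\Gamma$ in the disk, Fubini applies because $g$ is integrable and $\Gamma$ has finite length. It lets me swap $\oint_\Gamma$ and $\int dz$. For each fixed $z$, the map $\tau\mapsto(4\pi\tau)^{-p/2}e^{-\norm{z}^2/4\tau}$ is holomorphic on $\Re(\tau)>0$, so its contour integral vanishes by Cauchy's theorem (Theorem~\ref{thm:Cauchy}). Morera's theorem then shows that $F$ is complex analytic on the disk. Restricting to the real segment $]0,1/4a[$ gives convergent power series there, which is the claimed real analyticity of $t\mapsto\WW_{2t}[f](x)$. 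If the paper's derivative-bound definition is preferred instead, Cauchy estimates on small disks around each point of a compact $K\subset]0,1/4a[$ give $\sup_K |d^kF/dt^k|\le A^{k+1}k!$ directly.
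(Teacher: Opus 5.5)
Your proposal is correct and follows essentially the same route as the paper. Both extend $t$ to the complex domain $\{\Re(1/4t)>a\}$, which you correctly identify as the disk of center and radius $1/8a$. Both then dominate the integrand uniformly on compact subsets by a Gaussian, and conclude through continuity under the integral, Fubini, Cauchy's theorem and Morera's theorem. Your explicit principal-branch choice for $\tau^{-p/2}$ and your closing remark on Cauchy estimates are small clarifications the paper leaves implicit; the latter recovers the paper's derivative-bound definition of real analyticity.
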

\begin{proof}
    Let $f$ satisfying $|f(x)|\le M\exp(a\norm{x}^2)$ on $\RR^p$ and a fixed $x\in \RR^p$ throughout the proof. The overall goal is to show the analyticity of $t\mapsto\WW_{2t}[f](x)$ on the larger complex domain $\Omega = \{t\in\mathbb{C}\;s.t.\;\Re(1/4t)>a\}$ which contains the real interval $]0,1/4a[$. Indeed, the complex analyticity on a larger open set implies the real analyticity on the contained real open interval $]0,1/4a[$. The analyticity of $t\mapsto\WW_{2t}[f](x)$ on $\Omega$ is shown by verifying it is analytic on any compact subset $K\subset\Omega$. This latter objective is done using Morera's theorem (see Theorem~\ref{thm:Morera}). We must then verify two properties:
    \begin{itemize}
        \item[(i)] $t\mapsto\WW_{2t}[f](x)$ is continuous on $K$,
        \item[(ii)] $\oint_\Gamma \WW_{2t}[f](x)dt =0$ for any closed piecewise smooth contour $\Gamma$ in $K$.
    \end{itemize}
    By Morera's theorem, if these conditions are met, the function $t\mapsto\WW_{2t}[f](x)$ is analytic on $K$.
    
    \textbf{(i) Continuity of  $\boldsymbol{t\mapsto\WW_{2t}[f](x)}$ on $K$.}
    We prove this using continuity under the integral. We show that $\WW_{2t}[f](x)$ can be written as the integral of a continuous dominated function $F(t,x,y)$ with respect to its integration variable $y$. Under these conditions, $\WW_{2t}[f](x)$ is then continuous.
    
    Denote $\psi_{2t}(w) = \frac{1}{\sqrt{4\pi t}^p}\exp\big(-\frac{\norm{w}^2}{4t}\big)$, the probability density function of a centered isotropic Gaussian distribution of variance $2t\Id{p}$. We first define $F(t,x,y) = f(y)\psi_{2t}(x-y)$ for any $(t,y)\in]0,1/4a[\times\RR^p$. By definition of $\WW_{2t}$,
    \begin{align*}
        \WW_{2t}[f](x) &= \EE_{w\sim\sN(0,2t\Id{p})}\left[f(x+w)\right] = \int_{\RR^p} f(x+w)\psi_{2t}(w)dw = \int_{\RR^p} F(t,x,w)dw.
    \end{align*}
    We use the dominated convergence of $t\mapsto F(t,x,w)$ on $K$ for a fixed $x\in\RR^p$ by a function of $w$ to show that $t\mapsto\WW_{2t}[f](x)$ is continuous.
    Let $w\in \RR^p$ and $t\in K$,
    \begin{align*}
        |F(t,x,w)| &\le M\exp(a\norm{w}^2)|4\pi t|^{-p/2} \exp(-\Re(\norm{x-w}^2/4t))\\
        \cause{Denote $C_K = \sup_{t\in K}|4\pi t|^{-p/2}$.}
        &\le M C_K \exp(a\norm{w}^2 -\Re(\norm{x-w}^2/4t))
        \eqsp.
    \end{align*}
    Expanding the squared norm $\norm{x-w}^2$ gives $-\Re(\norm{x-w}^2/4t) \le \big(- \norm{x}^2 + 2\norm{x}\norm{w} - \norm{w}^2\big) \Re(1/4t)$.
    Denoting $\delta = \inf_{t\in K} \{\Re(1/4t)-a\}$, which is positive by definition of $\Omega$, we obtain
    \begin{align*}
        |F(t,x,w)| 
        &\le M C_K \exp(a\norm{w}^2 - (a+\delta)(\norm{w}^2 -2\norm{x}\norm{w}+\norm{x}^2))\\
        &= M C_K \exp(-\delta\norm{w}^2) \exp(2(a+\delta)\norm{x}\norm{w} - (a+\delta)\norm{x}^2)\\
        \cause{Denoting $D_{K,x,a} = M C_K\exp\left((a+\delta)\norm{x}^2\frac{2a+\delta}{\delta}\right)$}
        &\le D_{K,x,a} \exp(-(\delta/2)\norm{w}^2) \eqsp, %
    \end{align*}
    which is a scaled Gaussian function, and is thus integrable. Then, for the given $x\in \RR^p$,  $w\mapsto F(t,x,w)$ is uniformly dominated by an integrable function for any $t\in K$. So $t\mapsto \WW_{2t}[f]$ is continuous on $K$.
    
    \textbf{(ii) Morera's criterion.}
    Let $\Gamma$ be a closed piecewise smooth contour in $K$.
    We write $\oint_\Gamma \WW_{2t}[f](x)dt$ as the double integral $\oint_\Gamma \left(\int_{\RR^p}F(t,x,w)dw\right)dt$. 
    Since $w \mapsto | F(t, x, w) |$ is dominated by an integrable function, Fubini's theorem gives%
    \begin{align*}
        \oint_\Gamma \WW_{2t}[f](x)dt &= \oint_\Gamma \left(\int_{\RR^p}F(t,x,w)dw\right)dt
=\int_{\RR^p}\left(\oint_\Gamma F(t,x,w)dt \right)dw
      .
    \end{align*}   
    Remark that, for any $w\in\RR^p$, $t\mapsto F(t,x,w)$ is analytic on $K$ because it is the product of $t\mapsto\psi_{2t}(x-y)$ that is analytic on $K$ for any $x,y\in\RR^p$ and the function $f$ that does not depend on $t$. Then by Cauchy's integral theorem,
    $\oint_\Gamma F(t,x,w)dt = 0.$
    Consequently, it holds that
    \begin{align*}
        \oint_\Gamma \WW_{2t}[f](x)dt
        =\int_{\RR^p}\left(\oint_\Gamma F(t,x,w)dt \right)dw 
        =0
        \eqsp.
    \end{align*}
    Then, by Morera's theorem $\WW_{2t}[f](x)$ is analytic on any arbitrary $K\subset\Omega$. Finally, $t\mapsto\WW_{2t}[f](x)$ is analytic on $\Omega$ (and in particular on the real interval $]0,1/4a[$).
\end{proof}
Using this lemma, we now prove Theorem~\ref{thm:expressions_weierstrass}.
\begin{proof}[Proof of Theorem~\ref{thm:expressions_weierstrass}.]
    In this proof, we use the parameterization $\sigma^2=2t$. Then, we work on $\WW_{2t}$ and we finally re-inject $\sigma^2$ to obtain the desired result.
    By \citet[Chapter 7, Equation 1.11]{PDE}, if $f$ satisfies $|f(x)|\le M \exp(a\|x\|^2)$ on $\RR^p$, then $u(x,t)=\WW_{2t}[f]$ is an infinitely continuously differentiable solution of the following Heat equation:
    \begin{equation}
        \label{eq:heat_equation}
        \partial_t u(x,t) = \Delta_x u(x,t), \quad u(x,0)= f(x),\quad (x,t)\in\RR^p\times ]0,1/4a[,
    \end{equation}
    where $\Delta_x u(x, t)$ is the Laplacian of the function $x \mapsto u(x, t)$, and the constraint on $u(x,0)$ follows from $u(x,0)=\lim_{t\to0}\WW_{2t}[f](x)$.
    Furthermore, by Lemma~\ref{lemma:analytic_W}, the function $t\mapsto u(x,t)$ is analytic on $]0,1/4a[$. Then, for $x\in\RR^p$ and $t\in]0,1/4a[$, we have the following Taylor expansion around $t_0\in]0,1/4a[$, assuming that the series converge,
    \begin{align*}
        \WW_{2t}[f](x) = u(x,t) &= \sum_{k=0}^\infty \frac{\partial_t^k u(x,t_0)}{k!}(t-t_0)^k 
        = \sum_{k=0}^\infty \frac{\Delta^k u(x,t_0)}{k!}(t-t_0)^k
        \eqsp,
    \end{align*}
    where the second equality comes from the fact that $u$ is solution of the Heat equation \eqref{eq:heat_equation}.
    Taking the limit $t_0 \rightarrow 0$, we obtain the following Taylor expansion around $0$,
    \begin{align}
    \label{eq:expr-ww-in-2t}
        \WW_{2t}[f](x)
        &= \sum_{k=0}^\infty \frac{\Delta^k u(x,0)}{k!}t^k= \sum_{k=0}^\infty \frac{\Delta^k f(x)}{k!}t^k \eqsp.
    \end{align}
    It remains to check if this series converges for a given $x\in\RR^p$.
    As $f$ is in $\funcspace{a}$, we have $|\Delta^k[f](x)|\le A_x(4a)^k k!$.
    Consequently, the 
    \emph{root test} condition 
    $$\lim_k\sup \left| \frac{\Delta^k[f](x)}{k!}\right|^{1/k}<\infty\eqsp.$$
    is met, and the series converges for $|t|<\frac{1}{r}$, where $r=\lim_k\sup \big| \frac{\Delta^k[f](x)}{k!}\big|^{1/k}$, with the convention that $1/0 = \infty$. 
    Since $f$ is in $\funcspace{a}$, $r\le 4a$ and the series converges for $|t| \le \tfrac{1}{4a} \le \frac{1}{r}$.
    The result follows from plugging $t = \sigma^2/2$ in \eqref{eq:expr-ww-in-2t}.
\end{proof}

\subsection{Weierstrass Transform Properties}
\label{app:weierstrass_properties}
We now give two useful properties of the Weierstrass transform.
\begin{proposition}[Linearity of $\WW_{\sigma^2}$]
    Let $\sigma>0$, the Weierstrass transform is linear with respect to the function it applies to. Let $f$ and $g$ be two functions from $\RR^p$ to $\RR$ and let $\alpha\in\RR$,
    $$\WW_{\sigma^2}[\alpha f + g] = \alpha \WW_{\sigma^2}[f] + \WW_{\sigma^2}[g].$$
\end{proposition}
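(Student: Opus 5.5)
The plan is to argue directly from the definition of $\WW_{\sigma^2}$ as an expectation, using only linearity of the Lebesgue integral against the Gaussian density. Write $\psi_{\sigma^2}(w)=(2\pi\sigma^2)^{-p/2}\exp(-\norm{w}^2/2\sigma^2)$ for the density of $\sN(0,\sigma^2\Id{p})$. Then, for any fixed $x\in\RR^p$,
\begin{equation*}
\WW_{\sigma^2}[\alpha f+g](x)=\int_{\RR^p}\big(\alpha f(x+w)+g(x+w)\big)\psi_{\sigma^2}(w)\,dw .
\end{equation*}
Splitting this integral gives $\alpha\WW_{\sigma^2}[f](x)+\WW_{\sigma^2}[g](x)$. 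Since the identity holds pointwise for every $x$, it holds as an equality of functions.

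The only step needing care is justifying the split. It is valid as soon as both $w\mapsto f(x+w)\psi_{\sigma^2}(w)$ and $w\mapsto g(x+w)\psi_{\sigma^2}(w)$ are integrable, that is, whenever $\WW_{\sigma^2}[f](x)$ and $\WW_{\sigma^2}[g](x)$ are well defined as finite expectations. This is the implicit standing assumption whenever the transform is written down, so I would state it explicitly: the identity is asserted for functions whose transforms exist.

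To connect with the function class of interest, I would also note a sufficient condition. Suppose $f,g\in\funcspace{a}$ with $\sigma^2<1/2a$. The Gaussian growth bound \eqref{eq:gaussian_growth}, combined with the decay of $\psi_{\sigma^2}$, gives $|f(x+w)|\psi_{\sigma^2}(w)\le C_x\exp\big(-(\tfrac{1}{2\sigma^2}-a-\eta)\norm{w}^2\big)$ for a small $\eta>0$. This follows from the same completion-of-squares estimate used in Lemma~\ref{lemma:analytic_W}. The right-hand side is integrable, so integrability holds automatically in that setting.

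There is no genuine obstacle here; the integrability caveat is the only point to address. I would also remark that on $\funcspace{a}$ the result is consistent with Theorem~\ref{thm:expressions_weierstrass}, since each term $\Delta^k$ of the series is linear, though the direct integral argument does not depend on this.
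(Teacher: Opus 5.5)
Your proof is correct and matches the paper's argument, which is simply linearity of the expectation $\EE_{w\sim\sN(0,\sigma^2\Id{p})}$ applied pointwise in $x$. Your integrability caveat, with the sufficient condition from the Gaussian growth bound when $\sigma^2<1/2a$, is a useful precision that the paper leaves implicit.
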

\begin{proof}
    By linearity of the expectation, for a given $x\in\RR^p$,
    \begin{align*}
        \WW_{\sigma^2}[\alpha f + g](x) &= \EE_{w\in\sN(0,\sigma^2\Id{p})}\left[ \alpha f(x+w) + g(x+w)\right]\\
        &=\alpha \EE_{w\in\sN(0,\sigma^2\Id{p})}\left[  f(x+w) \right] + \EE_{w\in\sN(0,\sigma^2\Id{p})}\left[  g(x+w)\right]\\
        &=\alpha \WW_{\sigma^2}[f](x) + \WW_{\sigma^2}[g](x)
        \eqsp,
    \end{align*}
    which is the result.
\end{proof}

\begin{proposition}[$\WW_{\sigma^2}$ is increasing]
    Let $\sigma>0$, the Weierstrass transform is increasing with respect to the function it applies to. Let $f$ and $g$ be two functions from $\RR^p$ to $\RR$ such that for any $x\in\RR^p$, $f(x) \le g(x)$, then for any $x\in\RR^p$,
    $$\WW_{\sigma^2}[f](x) \le \WW_{\sigma^2}[g](x) .$$
\end{proposition}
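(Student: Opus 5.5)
The proof is a direct monotonicity argument from the definition of $\WW_{\sigma^2}$ as an expectation. It uses no series representation, so no regularity of $f$ or $g$ is needed beyond the expectations being well defined. Fix $x\in\RR^p$ and write $\WW_{\sigma^2}[f](x)=\EE_{w\sim\sN(0,\sigma^2\Id{p})}[f(x+w)]$, and likewise for $g$. The hypothesis $f\le g$ holds pointwise on all of $\RR^p$. Hence for every realization $w$ we have $f(x+w)\le g(x+w)$, that is, the random variable $g(x+w)-f(x+w)$ is almost surely nonnegative.

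The argument then proceeds in two steps. First, I use the linearity of $\WW_{\sigma^2}$ proved in the preceding proposition, taking $\alpha=-1$, to write $\WW_{\sigma^2}[g](x)-\WW_{\sigma^2}[f](x)=\WW_{\sigma^2}[g-f](x)$. Second, I observe that $\WW_{\sigma^2}[g-f](x)=\int_{\RR^p}(g-f)(x+w)\,\psi_{\sigma^2}(w)\,dw$, where $\psi_{\sigma^2}$ is the Gaussian density. The integrand is a product of a nonnegative function with a positive density, so by monotonicity of the Lebesgue integral the integral is nonnegative. This gives the claim $\WW_{\sigma^2}[f](x)\le \WW_{\sigma^2}[g](x)$ for every $x$.

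The only delicate point is integrability. The identity in the first step requires both $\WW_{\sigma^2}[f](x)$ and $\WW_{\sigma^2}[g](x)$ to be finite, so that the subtraction is legitimate. I would handle this by assuming, as everywhere else in the paper, that the transforms in question exist; for instance $f,g\in\funcspace{a}$ with $\sigma^2<1/2a$, in which case the Gaussian-growth bound \eqref{eq:gaussian_growth} gives integrability exactly as in the domination step of Lemma~\ref{lemma:analytic_W}. Alternatively, and more generally, one can avoid subtraction altogether: compare $\int f(x+w)\psi_{\sigma^2}(w)\,dw$ and $\int g(x+w)\psi_{\sigma^2}(w)\,dw$ directly by monotonicity of the integral for functions whose integrals are defined in the extended sense. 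This version holds whenever both sides make sense.
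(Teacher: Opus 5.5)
Your proof is correct and is essentially the paper's argument: the paper simply invokes monotonicity of the expectation, $\EE[f(x+w)]\le\EE[g(x+w)]$, which is your ``alternative'' direct comparison. The detour through linearity is unnecessary but harmless, and your integrability caveat is a fair point that the paper leaves implicit.
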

\begin{proof}
    By increasing property of the expectation, for a given $x\in\RR^p$,
    \begin{align*}
        \WW_{\sigma^2}[f](x) &=\EE_{w\in\sN(0,\sigma^2\Id{p})}\left[ f(x+w)\right]
        \le\EE_{w\in\sN(0,\sigma^2\Id{p})}\left[ g(x+w)\right]
        =\WW_{\sigma^2}[g](x) ,
    \end{align*}
    and the lemma follows.
\end{proof}

We state and prove Remark~\ref{remark:exp_is_licit}.
\begin{proposition}[The exponential loss is in $\Phi_{M_a,a}(\RR^p)$]
    Let $f(x) = \exp(-\theta^\top xy)$ for a given pair $(\theta,y)\in\Theta\times \sY$. Define $M_a=\exp\left( \frac{\norm{\Theta}^2}{4a}\right)$ for any $a>0$. The function $f$ is in $\Phi_{M_a,a}(\RR^p)$ for any $a>0$.
\end{proposition}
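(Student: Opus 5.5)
The plan is to verify the two conditions of Definition~\ref{def:gaussian_growth} directly. Both follow from one observation: $f(x)=\exp(-\theta^\top x y)$ is an eigenfunction of the Laplacian. Smoothness is immediate, since $f$ is the exponential composed with a linear map of $x$.

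\textbf{Step 1: the Gaussian growth condition \eqref{eq:gaussian_growth}.} By Cauchy--Schwarz and $|y|=1$,
\[
|f(x)|\le \exp(\norm{\theta}\norm{x}).
\]
Next I apply the elementary inequality $uv\le a u^2 + v^2/(4a)$, which is just $(\sqrt{a}\,u - v/(2\sqrt a))^2\ge 0$, with $u=\norm{x}$ and $v=\norm{\theta}$. This gives
\[
\norm{\theta}\norm{x}\le a\norm{x}^2+\frac{\norm{\theta}^2}{4a}\le a\norm{x}^2+\frac{\norm{\Theta}^2}{4a}.
\]
Hence $|f(x)|\le M_a\exp(a\norm{x}^2)$ with $M_a=\exp(\norm{\Theta}^2/4a)$, as claimed. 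This holds for every $a>0$ because $a$ is a free parameter in the inequality.

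\textbf{Step 2: the iterated Laplacian condition \eqref{eq:laplace_iterate_growth}.} Differentiating twice gives $\partial^2_{x_i}f=\theta_i^2y^2 f=\theta_i^2 f$, so $\Delta f=\norm{\theta}^2 f$. By induction, $\Delta^k f=\norm{\theta}^{2k}f$ for every $k\in\NN$. It then remains to choose $A_x$ so that
\[
\norm{\theta}^{2k}|f(x)|\le A_x (4a)^k k!\quad\text{for all } k.
\]
Writing $c=\norm{\theta}^2/(4a)$, this amounts to bounding $\sup_k c^k/k!$. Each term $c^k/k!$ is a single term of the exponential series, so it is at most $e^{c}\le \exp(\norm{\Theta}^2/4a)$. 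Therefore
\[
A_x=|f(x)|\exp\!\left(\frac{\norm{\Theta}^2}{4a}\right)=\exp(-\theta^\top xy)\,M_a
\]
works, and it depends only on $x$ once $(\theta,y)$ is fixed.

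There is no real obstacle here. The only step needing care is Step 2: one must notice that the factorial in \eqref{eq:laplace_iterate_growth} absorbs the geometric growth $\norm{\theta}^{2k}$ for any $a>0$. This is exactly what allows $a$ to be arbitrarily small, at the price of the constant $M_a$ growing as $a\to0$.
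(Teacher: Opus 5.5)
Your proof is correct and takes the same route as the paper. Both arguments use the eigenfunction identity $\Delta^k f = \norm{\theta}^{2k} f$, with the factorial absorbing the geometric growth, and both bound $\norm{\theta}\norm{x}\le a\norm{x}^2+\norm{\Theta}^2/(4a)$; the paper gets this via a discriminant argument, while you complete the square and use $c^k/k!\le e^{c}$ to give the explicit constant $A_x = M_a f(x)$ in place of the paper's threshold-$k_0$ choice.
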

\begin{proof}
    We first verify the property of Equation~\eqref{eq:laplace_iterate_growth}. Let an arbitrary $a>0$. Let $k\in\NN$ and $x\in\RR^p$, we have the following Laplacian identity 
    $$\Delta^k_x f(x) = |\Delta^k_x f(x)| =  \norm{\theta}^{2k} f(x).$$
    Factorials are increasing faster than any power of a positive number. Then, denoting $k_0$ such that for any $k\in\NN$,
    $$k>k_0 \implies k! > (\norm{\theta}/4a)^{2k},$$
    we have
    $$|\Delta^k_x f(x)| \le A_x( 4a)^kk!\eqsp,$$
    with $A_x = f(x) (\norm{\theta}^2/4a)^{k_0}$.
    We now verify that the property of Equation~\eqref{eq:gaussian_growth} holds with $M_a = \exp\left( \frac{\norm{\Theta}^2}{4a}\right)$. Let $x\in\RR^p$,
    $$f(x) = \exp(-\theta^\top xy) \le \exp(\|\theta\| \|x\|)\le \exp(\|\Theta\| \|x\|).$$
    To prove that that $\exp(\|\Theta\| \|x\|) \le M_a \exp(a\norm{x}^2)$, we need that
    $$a\|x\|^2 - \norm{\Theta} \|x\| +\log(M_a) \ge 0.$$
    It forms a second degree polynomial in $\norm{x}$ with a positive quadratic constant. The inequality is then true for any $\norm{x}>0$ if the discriminant $\norm{\Theta}^2-4a\log(M_a)$ is positive. In other words, if
    $$M_a \le \exp\left( \frac{\norm{\Theta}^2}{4a}\right).$$
    Thus, for any $a > 0$, Equation~\eqref{eq:gaussian_growth} holds with $M_a = \exp\left( \frac{\norm{\Theta}^2}{4a}\right)$, and $f$ is in $\Phi_{M_a,a}(\RR^p)$ for any $a>0$.
\end{proof}

\section{Bias Characterization -- Proof of \Cref{thm:generalized_bishop}}
\label{appendix:bias_char}
We provide a proof of the following theorem.
\RegEffect*
\begin{proof}
    For a given feature-label pair $(x,y)\in\sX\times\sY$, with LDP release $(\tilde x, \tilde y)$ defined in \eqref{eq:ldp_release}, and a model $\theta\in\Theta$, we have
    \begin{align*}
        \EE_{\tilde x, \tilde y}\left[ \ell\left(\theta,\;\tilde x,\;\tilde y\right)\right] &=\BB_{\epsilon_y}\big[ z\mapsto \WW_{\sigma^2}[\ell(\theta,\cdot,z) ] (x)\big](y)\\
        &=\sigmoid(\epsilon_y)\WW_{\sigma^2}[\ell(\theta,\cdot,y)](x) + (1-\sigmoid(\epsilon_y))\WW_{\sigma^2}[\ell(\theta,\cdot,-y)](x).
    \end{align*}
    Taking the expectation with respect to $(x,y)\sim\sD$ yields
    \begin{align*}
        \tilde\risk(\theta) &=\EE_{x,y}\left[\sigmoid(\epsilon_y)\WW_{\sigma^2}[\ell(\theta,\cdot,y)](x) + (1-\sigmoid(\epsilon_y))\WW_{\sigma^2}[\ell(\theta,\cdot,-y)](x)\right]\\
        \cause{Using the Theorem~\ref{thm:expressions_weierstrass}.}
        &=\sum_{k=0}^\infty \frac{\sigma^{2k}}{2^k k!}\left\{\sigmoid(\epsilon_y)\EE_{x,y}\left[\Delta^k_x\ell(\theta,x,y)\right] + (1-\sigmoid(\epsilon_y))\EE_{x,y}\left[\Delta^k_x\ell(\theta,x,-y)\right]\right\}\\
        \cause{Isolating the term $k=0$.}
        &=\sigmoid(\epsilon_y)\EE_{x,y}\ell(\theta,x,y) + (1-\sigmoid(\epsilon_y))\EE_{x,y}\ell(\theta,x,-y)\\
        &+\sum_{k=1}^\infty \frac{\sigma^{2k}}{2^k k!}\left\{\sigmoid(\epsilon_y)\EE_{x,y}\left[\Delta^k_x\ell(\theta,x,y)\right] + (1-\sigmoid(\epsilon_y))\EE_{x,y}\left[\Delta^k_x\ell(\theta,x,-y)\right]\right\}.
    \end{align*}
    Based on this, we have
    \begin{align*}
        \tilde\risk(\theta)
        &=
        \EE_{x,y}\ell(\theta,x,y)
        +
        (\sigmoid(\epsilon_y)-1)\EE_{x,y}\ell(\theta,x,y) + (1-\sigmoid(\epsilon_y))\EE_{x,y}\ell(\theta,x,-y)\\
        &+\sum_{k=1}^\infty \frac{\sigma^{2k}}{2^k k!}\left\{\sigmoid(\epsilon_y)\EE_{x,y}\left[\Delta^k_x\ell(\theta,x,y)\right] + (1-\sigmoid(\epsilon_y))\EE_{x,y}\left[\Delta^k_x\ell(\theta,x,-y)\right]\right\}
        \\
        &=
        \EE_{x,y}\ell(\theta,x,y)
        +
        (1-\sigmoid(\epsilon_y))\big( \EE_{x,y}\ell(\theta,x,-y) - \EE_{x,y}\ell(\theta,x,y) \big) 
        \\
        &+\sum_{k=1}^\infty \frac{\sigma^{2k}}{2^k k!}\left\{\sigmoid(\epsilon_y)\EE_{x,y}\left[\Delta^k_x\ell(\theta,x,y)\right] + (1-\sigmoid(\epsilon_y))\EE_{x,y}\left[\Delta^k_x\ell(\theta,x,-y)\right]\right\}
        ,
    \end{align*}
    and the result follows by identifying $\risk(\theta) = \EE_{x,y}\ell(\theta,x,y)$.
\end{proof}

\section{Bias Correction Proofs}
\label{appendix:bias_correct}

\subsection{Inverse of Transforms -- Proof of \Cref{thm:inv_RR_weier}}
\label{appendix:inverse_transforms}

We provide a proof of the following theorem.
\InvWeierRR*
\begin{proof}
We first prove (i). Let $\tilde y\in\sY$,
    \begin{align*}
        \BB^{-1}_\epsilon[\BB_\epsilon[g]](\tilde y) &= \BB^{-1}_\epsilon\left[\sigmoid(\epsilon)g(\cdot) +(1-\sigmoid(\epsilon))g(-\:\cdot) \right](\tilde y)\\
        &= \tilde \sigmoid(\epsilon)\left\{ \sigmoid(\epsilon)g(\tilde y) +(1-\sigmoid(\epsilon))g(-\tilde y)\right\}
        +(1-\tilde \sigmoid(\epsilon))\left\{ \sigmoid(\epsilon)g(-\tilde y) +(1-\sigmoid(\epsilon))g(\tilde y)\right\}\\
        \cause{Group $g(\tilde y)$ and $g(-\tilde y)$ terms.}
        &= \left\{ \tilde \sigmoid(\epsilon)\sigmoid(\epsilon) +(1-\tilde \sigmoid(\epsilon))(1-\sigmoid(\epsilon))\right\}g(\tilde y)
        +\left\{ (1-\tilde \sigmoid(\epsilon))\sigmoid(\epsilon)+ (1- \sigmoid(\epsilon))\tilde \sigmoid(\epsilon)\right\}g(-\tilde y)\\
        &= \left\{ 1 + 2\tilde \sigmoid(\epsilon)\sigmoid(\epsilon) -\tilde \sigmoid(\epsilon) -\sigmoid(\epsilon)\right\}g(\tilde y)
        +\left\{\tilde \sigmoid(\epsilon)+ \sigmoid(\epsilon) -2\tilde \sigmoid(\epsilon)\sigmoid(\epsilon)\right\}g(-\tilde y)\\
        \cause{Developing $\sigmoid(\cdot)$ and $\tilde \sigmoid(\cdot)$.}
        &=\{1\}g(\tilde y) + \{0\}g(-\tilde y) = g(\tilde y).
    \end{align*}
    
Now we prove (ii).
    We use the parameterization $\sigma^2=2t$, it remains to reinject $\sigma^2$ to finish the proof. By Theorem~\ref{thm:expressions_weierstrass} proof, the series $t\mapsto\sum_k \frac{\Delta_x^k f(x)}{k!}t^k$ converges absolutely for $t$ in $]0,1/4a[$ for any $x\in\RR^p$. 
    Since considering the series for $-t$ leads to the same coefficients in absolute value, then the series $t\mapsto\sum_k \frac{(-1)^k\Delta_x^k f(x)}{k!}t^k$ also converges for $0<t<1/4a$.

    We thus consider this series as a candidate for the inverse of the Weierstrass transform.
    For $f$ in $\funcspace{a}$, $x\in\RR^p$ and $t\in]0,1/8a[$, using the expression of the Weierstrass transform and its candidate inverse,
    \begin{align*}
        (\WW_{2t} \circ \WW_{2t}^{-1})[f](x) &= \WW_{2t}\left[\sum_{j=0}^\infty \frac{(-t)^j}{j!} \Delta^{j}[f](\cdot)\right](x)
        =\sum_{k=0}^\infty \frac{t^k}{k!} \Delta^j\left[\sum_{k=0}^\infty \frac{(-t)^j}{j!}\Delta^{j}[f](\cdot)\right](x).
    \end{align*}
    
    We want to reorder the series and swap $\sum_{k=0}^\infty$ with $\Delta^j$ to obtain
    $$(\WW_{2t} \circ \WW_{2t}^{-1})[f](x) =\sum_{j=0}^\infty \sum_{k=0}^\infty \frac{(-t)^jt^k}{j!k!} \Delta^{k+j}[f](x).$$
    That is valid if the resulting series converges absolutely. This is indeed the case since
    \begin{align*}
        \sum_{j=0}^\infty \sum_{k=0}^\infty \frac{t^{j+k}}{j!k!} \left|\Delta^{k+j}[f](x)\right|&\le \sum_{j=0}^\infty \sum_{k=0}^\infty \frac{t^{j+k}}{j!k!} A_x (4a)^{j+k} (j+k)!\\
        \cause{Reordering (valid by positivity) with $n=k+j$.}
        &=\sum_{n=0}^\infty \sum_{j=0}^n \frac{t^n}{(n-j)!j!} A_x (4a)^n n!\\
        &=A_x \sum_{n=0}^\infty (4at)^n \sum_{j=0}^n \binom{n}{j}\\
        \cause{As $\sum_{j=0}^n \binom{n}{j}=2^n$.}
        &=A_x \sum_{n=0}^\infty (8at)^n\\
        \cause{As $8at<1$.}
        &=\frac{A_x}{1-8at}<\infty.
    \end{align*}
    Then the reordering and swap of derivative and series are valid. We can thus write
    \begin{align*}
        (\WW_{2t} \circ \WW_{2t}^{-1})[f](x) &= \sum_{j=0}^\infty \sum_{k=0}^\infty \frac{(-t)^jt^k}{j!k!} \Delta^{k+j}[f](x)\\
            \cause{Same reordering with $n=k+j$.}
        &= \sum_{n=0}^\infty \sum_{j=0}^n \frac{(-1)^j t^{n}}{(n-j)!j!} \Delta^{n}[f](x)\\
        &=\sum_{n=0}^\infty \frac{t^n\Delta^{n}[f](x)}{n!}\sum_{j=0}^n \binom{n}{j}(-1)^{j}1^{n-j} =\sum_{n=0}^\infty \frac{t^n\Delta^{n}[f](x)}{n!} (1-1)^n \\
        &=\frac{0^n\Delta^{0}[f](x)}{0!} \\
        \cause{With the convention that $0^0=1$.}
        &= f(x).
    \end{align*}
    We reparameterize with $\sigma^2=2t$, then for $a<1/4\sigma^2$, $(\WW_{\sigma^2} \circ \WW_{\sigma^2}^{-1})[f] = f$.
\end{proof}

\subsection{Commutativity of the gradient operator with the transforms}
\label{appendix:IWP_grad_swap}
\begin{proposition}
Let $\epsilon,\delta>0$, and let $\ell$ be a loss which satisfies~\ref{ass:weierstrass_ok_assumption} with $a<1/2\sigma^2$. Then $\TT_{\epsilon,\delta}$ and $\TT_{\epsilon,\delta}^{-1}$ applied to $\theta \mapsto \ell(\theta, \cdot, \cdot)$, commute with $\nabla_\theta$.
\end{proposition}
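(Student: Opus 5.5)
We need to show $\nabla_\theta \TT[\ell(\theta,\cdot,\cdot)] = \TT[\nabla_\theta \ell(\theta,\cdot,\cdot)]$, and the same identity for $\TT^{-1}$, componentwise in $\theta_j$. Both $\BB_{\epsilon_y}$ and $\BB^{-1}_{\epsilon_y}$ are finite linear combinations of evaluations at $z=\tilde y$ and $z=-\tilde y$ with coefficients independent of $\theta$. They therefore commute with $\partial_{\theta_j}$ by linearity of differentiation. This reduces the claim to showing that, for fixed $y\in\sY$ and $x\in\RR^p$, $\partial_{\theta_j}$ commutes with $\WW_{\sigma^2}$ and with $\WW^{-1}_{\sigma^2}$ applied to $\ell(\theta,\cdot,y)$.

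\emph{Forward transform.} We write $\WW_{\sigma^2}[\ell(\theta,\cdot,y)](x)=\int \ell(\theta,x+w,y)\psi_{\sigma^2}(w)\,dw$ and differentiate under the integral sign. By Assumption~\ref{ass:weierstrass_ok_assumption}, $x\mapsto\partial_{\theta_j}\ell(\theta,x,y)$ satisfies the Gaussian growth bound \eqref{eq:gaussian_growth}. Provided $M$ and $a$ are uniform over $\theta\in\Theta$, as the notation $\funcspace{a}$ suggests, we get $|\partial_{\theta_j}\ell(\theta,x+w,y)|\psi_{\sigma^2}(w)\le M e^{a\|x+w\|^2}\psi_{\sigma^2}(w)$. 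This majorant is integrable in $w$ exactly when $a<1/2\sigma^2$, by the same completing-the-square computation as in Lemma~\ref{lemma:analytic_W}. The mean value theorem then gives a domination of difference quotients, and dominated convergence yields $\partial_{\theta_j}\WW_{\sigma^2}[\ell]=\WW_{\sigma^2}[\partial_{\theta_j}\ell]$.

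\emph{Inverse transform.} Here $\WW^{-1}_{\sigma^2}[\ell(\theta,\cdot,y)](\tilde x)=\sum_k \frac{(-\sigma^2/2)^k}{k!}\Delta_x^k\ell(\theta,\tilde x,y)$. The steps are as follows.
\begin{enumerate}
\item[(a)] Use smoothness (the $C^\infty$ requirement, taken jointly in $(\theta,x)$) to swap $\partial_{\theta_j}\Delta_x^k=\Delta_x^k\partial_{\theta_j}$ via Schwarz's theorem.
\item[(b)] Justify term-by-term differentiation of the series in $\theta_j$. This requires uniform convergence, on a neighbourhood of $\theta$, of $\sum_k \frac{(\sigma^2/2)^k}{k!}|\Delta_x^k\partial_{\theta_j}\ell|$.
\end{enumerate}
Assumption~\ref{ass:weierstrass_ok_assumption} applied to $\partial_{\theta_j}\ell$ gives $|\Delta^k_x\partial_{\theta_j}\ell|\le A_{x}(4a)^k k!$. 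The series is then dominated by $A_x\sum_k(2a\sigma^2)^k$, which converges when $a<1/2\sigma^2$. This is a Weierstrass M-test, provided $A_x$ can be taken locally uniform in $\theta$. The derivative series is then $\WW^{-1}_{\sigma^2}[\partial_{\theta_j}\ell]$.

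The main obstacle is this uniformity in $\theta$ of the constants $M$ and $A_x$ in Definition~\ref{def:gaussian_growth}, which the assumption states only pointwise. I would read Assumption~\ref{ass:weierstrass_ok_assumption} as holding with constants independent of $\theta\in\Theta$, or at least locally bounded. If that reading is not available, I would instead derive it from continuity of $\theta\mapsto\Delta_x^k\partial_{\theta_j}\ell$ on the compact closure of $\Theta$. Once this is granted, both dominated-convergence arguments are routine, and assembling them with the linearity of $\BB$ and $\BB^{-1}$ gives the claim.
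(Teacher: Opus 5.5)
Your proof is correct. For $\TT^{-1}_{\epsilon,\delta}$ it is essentially the paper's argument: pass $\partial_{\theta_j}$ through the finite combination $\BB^{-1}_{\epsilon_y}$, expand $\WW^{-1}_{\sigma^2}$ as its series, commute $\partial_{\theta_j}$ with $\Delta_x^k$, and dominate the differentiated series by $A_x\sum_k(2a\sigma^2)^k$.

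For the forward transform you take a genuinely different route. The paper reruns the same series argument with $\sigmoid$ in place of $\tilde\sigmoid$ and $(-1)^k$ replaced by $1$, so it relies on Theorem~\ref{thm:expressions_weierstrass} and on the iterated-Laplacian bound \eqref{eq:laplace_iterate_growth}. You instead differentiate under the Gaussian integral. That uses only the growth bound \eqref{eq:gaussian_growth} on $\partial_{\theta_j}\ell$, and it is clean because $M$ and $a$ are fixed indices of the class $\funcspace{a}$, hence uniform in $\theta$. Your route is more elementary and rests on a hypothesis that is actually stated uniformly. The paper's route treats $\TT$ and $\TT^{-1}$ identically, and the series route is the only one that carries over to $\WW^{-1}_{\sigma^2}$, which is not an average over real Gaussian noise.

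Your concern about local uniformity of $A_x$ in $\theta$ is well founded, and it exposes a gap in the paper's own write-up. The paper invokes a term-by-term differentiation theorem but only checks absolute convergence of the differentiated series at the fixed $\theta$, with $A_x$ allowed to depend on $\theta$. That theorem requires uniform convergence of the differentiated series on a neighbourhood. So reading Assumption~\ref{ass:weierstrass_ok_assumption} with $A_x$ locally bounded in $\theta$ is genuinely needed. The same goes for joint smoothness in $(\theta,x)$ for the Schwarz step, which the paper also uses tacitly.

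Your fallback, however, does not supply this uniformity. Continuity of each map $\theta\mapsto\Delta_x^k\partial_{\theta_j}\ell(\theta,x,y)$ on a compact set bounds each term for fixed $k$. It gives no bound of the form $A(4a)^k k!$ that is uniform in $k$. The optimal constant $\sup_k |\Delta_x^k\partial_{\theta_j}\ell(\theta,x,y)|/((4a)^k k!)$ is only lower semicontinuous in $\theta$, and nothing forces it to be bounded. You should state the local uniformity as part of the hypothesis rather than claim to derive it.
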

\begin{proof}
    Let a tuple $(\theta,x,y)\in\Theta\times\sX\times\sY$, we express $\TT^{-1}$ with $\BB^{-1}$ and $\WW^{-1}$:
    \begin{align}
    \nonumber
        \nabla_\theta \TT_{\epsilon,\delta}^{-1} & [\ell(\theta,\cdot,\cdot)](x,y) =\nabla_\theta\left[ \BB_{\epsilon_y}^{-1}\left[ z\mapsto\WW_{\sigma^2}^{-1}[\ell(\theta,\cdot,z)]( x) \right]( y)\right]\\
    \nonumber
        \cause{Replacing $\WW^{-1}$ with its expression.}
    \nonumber
        &= \nabla_\theta\left[ \BB_{\epsilon_y}^{-1}\left[ z\mapsto \sum_{k=0}^\infty \frac{(-1)^k\sigma^{2k}}{2^k k!} \Delta^k_x \ell(\theta,x,z) \right]( y)\right]\\
    \nonumber
        \cause{Replacing $\BB^{-1}$ with its expression.}
    \nonumber
        &= \nabla_\theta \left[ \tilde S(\epsilon_y)\sum_{k=0}^\infty \frac{(-1)^k\sigma^{2k}}{2^k k!} \Delta^k_x \ell(\theta,x,y) + \left(1-\tilde S(\epsilon_y)\right)\sum_{k=0}^\infty \frac{(-1)^k\sigma^{2k}}{2^k k!} \Delta^k_x \ell(\theta,x,-y)\right]\\
        \cause{By linearity of gradients with the finite sum.}
        \label{eq:before_grad_swap}
        &=\tilde S(\epsilon_y)\nabla_\theta \left[ \sum_{k=0}^\infty \frac{(-1)^k\sigma^{2k}}{2^k k!} \Delta^k_x \ell(\theta,x,y)\right] +  \left(1-\tilde S(\epsilon_y)\right)\nabla_\theta \left[\sum_{k=0}^\infty \frac{(-1)^k\sigma^{2k}}{2^k k!} \Delta^k_x \ell(\theta,x,-y)\right].
    \end{align}
    We now check for $y$ and $-y$ if we can swap the series and gradients. The reasoning is the same for both and we then present it only for $y$. By series differentiation \citep[Theorem 7.17]{math_analysis}, if the series $ \sum_{m=1}^\infty \frac{\sigma^{2m}}{2^m m!} |\partial_{\theta_j}\Delta^m_x \ell(\theta,x,y)|$ converges for each component $j\in\{1,\dots,k\}$, then
    \begin{align}
        \label{eq:swap_series_grad}
        \nabla_\theta \left[ \sum_{k=0}^\infty \frac{(-1)^k\sigma^{2k}}{2^k k!} \Delta^k_x \ell(\theta,x,y)\right] = \sum_{k=0}^\infty \frac{(-1)^k\sigma^{2k}}{2^k k!} \nabla_\theta\Delta^k_x \ell(\theta,x,y).
    \end{align}
     As $\Delta^k$ is a finite sum of iterated derivatives, it commutes with $\nabla_\theta$. We then need to check that for $j\in\{1,\dots,k\}$,
     $$ \sum_{m=0}^\infty \frac{\sigma^{2m}}{2^m m!} \Delta^m_x \left[ \partial_{\theta_j}\ell(\theta,x,y)\right]$$
     is a convergent series. By hypothesis on the loss, we have $\partial_{\theta_j}\ell(\theta,x,y)\in\funcspace{a}$, then 
    \begin{align*}
        \sum_{m=0}^\infty \frac{\sigma^{2m}}{2^m m!} \left|\Delta^m_x\left[\partial_{\theta_j} \ell(\theta,x,y)\right]\right| 
        \le \sum_{m=0}^\infty \frac{\sigma^{2m}}{2^m m!} A_x (4a)^m
        =A_x \sum_{m=0}^\infty (2a\sigma^2)^m
        =\frac{A_x}{1-2a\sigma^2} <\infty,
    \end{align*}
    where the last inequality follows from $2a\sigma^2<1$.
    Injecting Equation~\ref{eq:swap_series_grad} in Equation~\eqref{eq:before_grad_swap} and swapping $\Delta^k$ and $\nabla_\theta$ yields
    \begin{align}
    \nonumber
        \nabla_\theta \TT_{\epsilon,\delta}^{-1}  [\ell(\theta,\cdot,\cdot)](x,y)
        & =
        \tilde S(\epsilon_y) \sum_{k=0}^\infty \frac{(-1)^k\sigma^{2k}}{2^k k!} \nabla_\theta \Delta^k_x \ell(\theta,x,y) +  \left(1-\tilde S(\epsilon_y)\right)\sum_{k=0}^\infty \frac{(-1)^k\sigma^{2k}}{2^k k!} \nabla_\theta \Delta^k_x \ell(\theta,x,-y)
        \\
    \nonumber
        & = \BB_{\epsilon_y}^{-1}\left[ z\mapsto\WW_{\sigma^2}^{-1}[\nabla_\theta\ell(\theta,\cdot,z)]( x) \right]( y)
        .
    \end{align}
    The exact same reasoning can be carried out with $\TT$ replacing $\tilde \sigmoid$ by $\sigmoid$ and the terms $(-1)^k$ by $1$ which does not affect the convergence of series involved.
\end{proof}

\subsection{Unbiasedness of $\tilde\ell_{\epsilon,\delta}$ and $\nabla_\theta\tilde\ell_{\epsilon,\delta}$ -- Proof of \Cref{thm:unbiased}}
\label{app:unbias_IWP}
We prove the following theorem.
\UnbiasedGrad*
\begin{proof}
    We first prove that for any $h:\sX\times\sY\to\RR$ such that for any $y\in\sY$, $x\mapsto h(x,y)\in\funcspace{a}$, 
    \begin{equation}
        \label{eq:inverse_T_on_h}
        \TT_{\epsilon,\delta}[\TT^{-1}_{\epsilon,\delta}[h(\cdot,\cdot)]](x,y) = h(x,y).
    \end{equation}
    We can write
    \begin{align*}
        \TT_{\epsilon,\delta}[\TT^{-1}_{\epsilon,\delta}[h(\cdot,\cdot)]](x,y) &= \BB_{\epsilon_y}\left[ \WW_{\sigma^2}\left[ \BB^{-1}_{\epsilon,\delta} \left[\WW^{-1}_{\sigma^2}[h(\cdot,\cdot)]\right]\right]\right](x,y)\\
        \cause{As $\BB^{-1}_{\epsilon_y}$ forms the sum of two functions in $\funcspace{a}$, it then commutes with $\WW^{-1}_{\sigma^2}$.}
        &=\BB_{\epsilon_y}\left[ \WW_{\sigma^2}\left[\WW^{-1}_{\sigma^2} \left[\BB^{-1}_{\epsilon,\delta} [h(\cdot,\cdot)]\right]\right]\right](x,y)\\
        \cause{Simplifying $\WW\circ\WW^{-1}$.}
        &=\BB_{\epsilon_y}\left[\BB^{-1}_{\epsilon,\delta} [h(\cdot,\cdot)]\right](x,y)\\
        \cause{Simplifying $\BB\circ\BB^{-1}$.}
        &=h(x,y).
    \end{align*}
    Considering Equation~\eqref{eq:inverse_T_on_h} with $h:(x,y)\mapsto\ell(\theta,x,y)$ for a given $\theta\in\Theta$ yields
    \begin{align*}
        \EE_{(\tilde x,\tilde y)} \left[\tilde \ell_{\epsilon,\delta}(\theta,\tilde x,\tilde y)\right] &= \TT_{\epsilon,\delta}[\TT^{-1}_{\epsilon,\delta}[\ell(\theta,\cdot,\cdot)]](x,y)\\
        \cause{By Equation~\eqref{eq:inverse_T_on_h}.}
        &=\ell(\theta,x,y).
    \end{align*}
    Now, for the gradient, we also have the following.
    \begin{align*}
        \EE_{(\tilde x,\tilde y)}\left[ \nabla_\theta \tilde \ell_{\epsilon,\delta}(\theta,\tilde x,\tilde y)\right] &= \TT_{\epsilon,\delta}\left[ \TT^{-1}_{\epsilon,\delta}[\nabla_\theta\ell(\theta,\cdot,\cdot)\right](x,y)\\
        \cause{By commutativity of $\nabla_\theta$ and $\TT^{-1}_{\epsilon,\delta}$.}
        &=\TT_{\epsilon,\delta}\left[ \nabla_\theta\TT^{-1}_{\epsilon,\delta}[\ell(\theta,\cdot,\cdot)\right](x,y)\\
        \cause{By commutativity of $\nabla_\theta$ and $\TT_{\epsilon,\delta}$.}
        &=\nabla_\theta \TT_{\epsilon,\delta}\left[ \TT^{-1}_{\epsilon,\delta}[\ell(\theta,\cdot,\cdot)\right](x,y)\\
        \cause{By Equation~\eqref{eq:inverse_T_on_h}.}
        &=\nabla_\theta\ell(\theta,x,y).
    \end{align*}
\end{proof}
\subsection{Variance of $\boldsymbol{\nabla\tilde\ell_{\epsilon,\delta}(\theta,\tilde x, \tilde y)}$ -- Proof of \Cref{thm:our_gradient_variance}}
\label{appendix:variance_IWP}
We provide a proof of the following result.
\begin{lemma}[Variance of $\WW^{-1}_{\sigma^2} \lbrack f\rbrack(x+w)$]
    \label{lemma:weierstrass_variance}
    Let $f\in \funcspace{a}$ such that $\left(\WW^{-1}_{\sigma^2}[f]\right)^2$ is in $\funcspace{a}$. Let $\sigma^2\in]0,1/4a[$, $x\in\RR^p$ and $w \sim \sN(0, \sigma^2 \Id{p})$. The variance of $\WW^{-1}_{\sigma^2}[f](x+w)$ is
    \begin{align*}
    \VV_w\left(\WW^{-1}_{\sigma^2}[f](x+w)\right) &= \WW_{\sigma^2}\left[\left(\WW^{-1}_{\sigma^2}[f]\right)^2 \right](x)-f^2(x)\\
    &= 2\int_0^{\sigma^2/2} \WW_{2s} \norm{\WW^{-1}_{2s} [\nabla f](\cdot)}^2(x)ds.
    \end{align*}
    
    Furthermore, if there exists $C \ge 0$ such that  $\sup_{x\in\sX}\sup_{0<s<t}\norm{ \WW_{2s}^{-1} [\nabla f](x) }\le C$, it holds that
    \begin{align*}
    \VV_w\left(\WW^{-1}_{\sigma^2}[f](x+w)\right) \le C^2\sigma^2.
    \end{align*}
\end{lemma}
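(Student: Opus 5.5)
The first identity follows from the definition of variance and Theorem~\ref{thm:inv_RR_weier}. Since $x+w$ with $w\sim\sN(0,\sigma^2\Id{p})$ is exactly the random input averaged by $\WW_{\sigma^2}$, we have $\EE_w[\WW^{-1}_{\sigma^2}[f](x+w)]=\WW_{\sigma^2}[\WW^{-1}_{\sigma^2}[f]](x)=f(x)$, using $\sigma^2<1/4a$ as in Theorem~\ref{thm:inv_RR_weier}(ii). Similarly, $\EE_w[(\WW^{-1}_{\sigma^2}[f](x+w))^2]=\WW_{\sigma^2}[(\WW^{-1}_{\sigma^2}[f])^2](x)$ is the definition of the transform; it is finite because $(\WW^{-1}_{\sigma^2}[f])^2$ has Gaussian growth of rate $a<1/2\sigma^2$. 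Subtracting $f(x)^2$ gives the first expression.

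For the integral representation, use the parameterization $\sigma^2=2t$ and interpolate. Define, for $s\in[0,t]$,
\begin{equation*}
\phi(s)=\WW_{2s}\left[\left(\WW^{-1}_{2s}[f]\right)^2\right](x).
\end{equation*}
Then $\phi(0)=f(x)^2$ and $\phi(t)$ is the second moment above, so the variance equals $\int_0^t\phi'(s)\,ds$. To differentiate, write $g_s=\WW^{-1}_{2s}[f]$. From the series of Theorem~\ref{thm:inv_RR_weier}(ii), $\partial_s g_s=-\Delta g_s$ (backward heat equation). From the proof of Theorem~\ref{thm:expressions_weierstrass}, $\partial_s\WW_{2s}[h]=\WW_{2s}[\Delta h]=\Delta\WW_{2s}[h]$. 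Hence
\begin{equation*}
\phi'(s)=\WW_{2s}\left[\Delta(g_s^2)-2g_s\Delta g_s\right](x)=2\,\WW_{2s}\left[\norm{\nabla g_s}^2\right](x),
\end{equation*}
using $\Delta(g^2)=2g\Delta g+2\norm{\nabla g}^2$. Finally, $\nabla$ commutes with the series defining $\WW^{-1}_{2s}$, so $\nabla g_s=\WW^{-1}_{2s}[\nabla f]$ componentwise. Integrating over $s\in[0,\sigma^2/2]$ gives the stated formula.

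The bound then follows from monotonicity and linearity of $\WW_{2s}$ (Appendix~\ref{app:weierstrass_properties}). If $\norm{\WW^{-1}_{2s}[\nabla f]}\le C$ pointwise, then $\WW_{2s}$ applied to its square is at most $C^2$, and $2\int_0^{\sigma^2/2}C^2\,ds=C^2\sigma^2$. Note that the integrand is evaluated at points $x+w$ ranging over all of $\RR^p$, not just $\sX$. So the hypothesis must be read as a bound on the support of the Gaussian, or the supremum must be taken over $\RR^p$; I would state the latter (or note it holds for the relevant losses) to make the monotonicity step legitimate.

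The main obstacle is justifying the differentiation of $\phi$ under the transform and the interchanges of series and derivatives. Specifically, one needs $s\mapsto g_s$ to satisfy the backward heat equation with locally uniform convergence of the differentiated series, and $g_s^2$ to satisfy the Gaussian-growth conditions so that the heat-semigroup derivative formula applies. I would get the first by termwise differentiation of $\sum_k(-s)^k\Delta^k f/k!$, using the bound $A_x(4a)^kk!$ on $\Delta^k f$ for $s<1/8a$, as in the absolute-convergence argument of Theorem~\ref{thm:inv_RR_weier}. For the second I would invoke the hypothesis that $(\WW^{-1}_{\sigma^2}[f])^2\in\funcspace{a}$, extended to all $s\le t$. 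If that extension is delicate, an alternative is to compare power series in $t$ directly: expand both sides using the Leibniz-type formula for $\Delta^k(g^2)$ and match coefficients.
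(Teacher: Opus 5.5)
Your proposal is correct and follows essentially the same route as the paper: you interpolate via $s\mapsto\WW_{2s}[(\WW^{-1}_{2s}[f])^2](x)$, differentiate using the forward and backward heat equations together with $\Delta(g^2)=2g\Delta g+2\norm{\nabla g}^2$, commute $\nabla$ with $\WW^{-1}_{2s}$, integrate over $[0,\sigma^2/2]$, and bound by monotonicity of $\WW_{2s}$. Your caveat on the last step is well founded, and so is your remark that the hypothesis $(\WW^{-1}_{2s}[f])^2\in\funcspace{a}$ is tacitly needed for every $s\le\sigma^2/2$: the paper's proof bounds $\WW_{2s}[\norm{\WW^{-1}_{2s}[\nabla f]}^2](x)$ by the supremum over $\sX$ even though the integrand is evaluated on all of $\RR^p$, so the supremum should indeed be taken over $\RR^p$.
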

\begin{proof}
    For the proof, we use the parameterization $2t=\sigma^2$. Denote $g_t(x) = \WW_{2t}^{-1}[f](x)$ and $v(x,t) = \EE_w[g_t^2(x+w)] = \WW_{2t}[g_t^2](x)$. We first have that $\EE_w\left( g_t(x+w)-f(x)\right)^2 = v(x,t)-f^2(x)$. So we focus on the term $v(x,t)$ that we will compute in the following integral form by integrating over $t$:
    \begin{align}
        \label{eq:integral_rep}
        v(x,t)
        &= 
        v(x, 0)+\int_0^t \partial_s v(x,s)ds
        = 
        f^2(x)+\int_0^t \partial_s v(x,s)ds
        .
    \end{align}
    We develop $\partial_s v(x,s)$:
    \begin{align*}
        \partial_s v(x,s) &= \partial_s \WW_{2s}[g_s^2](x)\\
        \cause{By hypothesis, $g_s^2 = (\WW_{2s}[f])^2$ is in $\funcspace{a}$}
        &=\partial_s\left[ \sum_{k=0}^\infty \frac{s^k}{k!} \Delta^k[g_s^2](x)\right]\\
        &=\sum_{k=0}^\infty \frac{s^{k-1}}{(k-1)!} \Delta^k[g_s^2](x) + \sum_{k=0}^\infty \frac{s^k}{k!} \Delta^{k}[\partial_s g_s^2](x)\\
        &=\sum_{k=0}^\infty \frac{s^{k}}{k!} \Delta^{k+1}[g_s^2](x) + \sum_{k=0}^\infty \frac{s^k}{k!} \Delta^{k}[\partial_s g_s^2](x)\\
        &=\WW_{2s}[\Delta g_s^2](x) + \WW_{2s}[\partial_s g_s^2](x) =\WW_{2s}[\Delta g_s^2 + \partial_s g_s^2](x) .\\
    \end{align*}
    Now we develop the derivative $\partial_s g_s^2 = 2g_s\partial_s g_s = -2g_s\Delta g_s$ and $\Delta g_s^2 = 2g_s\Delta g_s + 2\|\nabla g_s\|^2$ to reinject it in the previous derivation:
    \begin{align*}
        \partial_s v(x,s) &=\WW_{2s}[2g\Delta g_s + 2\|\nabla g_s\|^2-2g_s\Delta g_s](x) = 2\WW_{2s}[\|\nabla g_s\|^2](x).\\
    \end{align*}
    Since $\nabla g_s = \nabla \WW_{2s}^{-1}[f] = \WW_{2s}^{-1}[\nabla f]$, we further simplify the expression of $\partial_s v(x,s)$ in the integral representation \eqref{eq:integral_rep}, which yields the desired result:
    \begin{align*}
        v(x,t) &= f^2(x)+\int_0^t 2\WW_{2s}[\|\WW_{2s}^{-1}[\nabla f](\cdot)\|^2](x)ds.
    \end{align*}
    Now, assume $\sup_{x\in\sX}\sup_{0<s<t}\|\WW_{2s}^{-1}[\nabla f](x)\|\le C$, denoting $\psi_{2s}(w) = \frac{1}{\sqrt{4\pi s}}\exp\left(-\frac{\norm{w}^2}{4s}\right)$ the probability density function of a centered isotropic Gaussian distribution of variance $2s\Id{p}$, we have 
    \begin{align*}
        \WW_{2s}[\|\WW_{2s}^{-1}[\nabla f](\cdot)\|^2](x) = \int \|\WW_{2s}^{-1}[\nabla f](x-w)\|^2\psi_{2s}(w)dw \le \sup_{x\in\sX}\sup_{0<s<t}\|\WW_{2s}^{-1}[\nabla f](x)\|^2\int \psi_{2s}(w)dw = C^2.
    \end{align*}
    And the result follows from $v(x,t) \le f^2(x)+2\int_0^t C^2ds =  f^2(x)+2tC^2$.
\end{proof}

We now have an unbiased estimator of any function $f\in \funcspace{a}$ at any point $x\in \RR^p$ from a Gaussian-perturbed release of the point $x+w$ for which we can compute the variance exactly.

The variance can be derived in closed-form for known functions like:
\begin{itemize}
    \item if $f(x)=\frac{1}{2} x^\top A x + b^\top x + c$, then $\VV_w(\WW^{-1}_{2t}[f](x+w)) =  f^2(x) + 2t\norm{\Sigma x+b}^2 + 2t^2 \Tr(\Sigma^2)$ with $\Sigma = (A+A^\top)/2$,
    \item if $f(x) = \exp(\alpha^\top x),\;\alpha\in\RR^p$, then  $\VV_w(\WW^{-1}_{2t}[f](x+w)) =  \exp(2a^\top x + 2t \|a\|^2)$.
\end{itemize}
Particular cases of the examples are respectively the mean squared error and the exponential loss (see subsection~\ref{subsec:gen_lin_loss}).

Similarly, the existence of an inverse $\BB^{-1}_\epsilon$, means that for any function $g:\{-1,1\}\to\RR$, any $y\in\{-1,1\}$ and $\epsilon>0$, 
$$\BB^{-1}_\epsilon[g](\tilde y),\quad \tilde y \sim \sB_\epsilon(y)$$
is an unbiased estimator of $g(y)$. We give an exact expression for its variance in the following theorem.
\begin{lemma}[Variance of $\BB^{-1}_\epsilon \lbrack g \rbrack (\sB_\epsilon(y))$]
    \label{lemma:RR_variance}
    Let $g \colon \{-1,1\} \rightarrow \RR$, $\epsilon>0$ and $y\in\{-1,1\}$. The variance of $\BB^{-1}_\epsilon[g](\tilde y)$  with $\tilde y = \sB_\epsilon(y)$ is
    \begin{align*}
    \VV_{\sB_\epsilon}\left(\BB^{-1}_\epsilon[g](\tilde y)\right) = \tilde\sigmoid(\epsilon)(\tilde\sigmoid(\epsilon)-1)(g(1)-g(-1))^2\eqsp.
    \end{align*}
\end{lemma}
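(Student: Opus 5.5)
Since $\tilde y$ takes only two values, the argument is a direct finite computation. Write $Z=\BB^{-1}_\epsilon[g](\tilde y)$ and use the identity $\VV(Z)=\EE[Z^2]-(\EE Z)^2$. By part (i) of Theorem~\ref{thm:inv_RR_weier}, $\EE_{\sB_\epsilon}[Z]=\BB_\epsilon[\BB^{-1}_\epsilon[g]](y)$. In that proof the composition $\BB^{-1}_\epsilon\circ\BB_\epsilon$ was shown to be the identity; because these are $2\times 2$ linear maps on functions of $\{-1,1\}$, the reverse composition $\BB_\epsilon\circ\BB^{-1}_\epsilon$ is also the identity, so $\EE Z=g(y)$.

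It is cleaner to avoid expanding $\EE[Z^2]$ and instead center directly. The random variable $Z$ takes the value $Z_+=\tilde\sigmoid g(y)+(1-\tilde\sigmoid)g(-y)$ with probability $\sigmoid$, and $Z_-=\tilde\sigmoid g(-y)+(1-\tilde\sigmoid)g(y)$ with probability $1-\sigmoid$. Any two-point variable has variance $\sigmoid(1-\sigmoid)(Z_+-Z_-)^2$. Here $Z_+-Z_-=(2\tilde\sigmoid-1)(g(y)-g(-y))$, and $(g(y)-g(-y))^2=(g(1)-g(-1))^2$ because $y=\pm1$. This reduces the statement to the scalar identity
\begin{equation*}
\sigmoid(\epsilon)\bigl(1-\sigmoid(\epsilon)\bigr)\bigl(2\tilde\sigmoid(\epsilon)-1\bigr)^2=\tilde\sigmoid(\epsilon)\bigl(\tilde\sigmoid(\epsilon)-1\bigr).
\end{equation*}

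To verify this identity, substitute $u=e^{-\epsilon}$. Then $\sigmoid(1-\sigmoid)=u/(1+u)^2$, $2\tilde\sigmoid-1=(1+u)/(1-u)$, and the left side becomes $u/(1-u)^2$. On the right, $\tilde\sigmoid-1=u/(1-u)$, so $\tilde\sigmoid(\tilde\sigmoid-1)=u/(1-u)^2$, and the two sides match.

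None of these steps is a real obstacle. The only point needing care is the unbiasedness claim $\EE Z=g(y)$, which relies on the invertibility established in Theorem~\ref{thm:inv_RR_weier}. Even that can be bypassed, because the two-point variance formula never uses the mean; so the key step is simply the algebraic identity above.
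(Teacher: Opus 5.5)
Your proof is correct and follows essentially the same route as the paper: a direct computation over the two outcomes of $\tilde y$, reducing to the scalar identity $\frac{e^{\epsilon}}{(e^{\epsilon}-1)^2}=\tilde\sigmoid(\epsilon)(\tilde\sigmoid(\epsilon)-1)$. The only difference is in how you center. The paper computes $\EE[(Z-g(y))^2]$, which implicitly relies on unbiasedness ($\BB_\epsilon\circ\BB_\epsilon^{-1}=\mathrm{id}$, a step you justify explicitly), and obtains the coefficient $\sigmoid(1-\tilde\sigmoid)^2+(1-\sigmoid)\tilde\sigmoid^2$; your two-point formula $\sigmoid(1-\sigmoid)(Z_+-Z_-)^2$ never needs the mean at all.
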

\begin{proof}
    For clarity, we denote $\sigmoid\equiv\sigmoid(\epsilon)$ and $\tilde \sigmoid\equiv\tilde\sigmoid(\epsilon)$.
    \begin{align*}
        \VV_{\sB_\epsilon}\left(\BB^{-1}_\epsilon[g](\tilde y)\right) &=\EE_{\sB_\epsilon}\left[ \left(\BB^{-1}_\epsilon[g](\tilde y) - g(y)\right)^2\right]\\
        &=\sigmoid\left(\BB^{-1}_\epsilon[g](y) - g(y)\right)^2+(1-\sigmoid)\left(\BB^{-1}_\epsilon[g](- y) - g(y)\right)^2\\
        &=\sigmoid\left(\tilde \sigmoid g(y) + (1-\tilde\sigmoid)g(-y) - g(y)\right)^2+(1-\sigmoid)\left(\tilde \sigmoid g(-y)+(1-\tilde\sigmoid)g(y) - g(y)\right)^2\\
        &=\sigmoid\left((\tilde \sigmoid -1) g(y) + (1-\tilde\sigmoid)g(-y)\right)^2+(1-\sigmoid)\left(\tilde \sigmoid g(-y)-\tilde\sigmoid g(y)\right)^2\\
        &=\sigmoid(1-\tilde\sigmoid)^2\left(g(y) -g(-y)\right)^2+(1-\sigmoid)\tilde \sigmoid^2\left(g(-y)-g(y)\right)^2\\
        \cause{$\sigmoid$ and $\tilde \sigmoid$ are replaced by their expressions and $(g(y)-g(-y))^2 = (g(1)-g(-1))^2$ for any $y\in\{1,-1\}$.}
        &=\left\{ \sigmoid(1-\tilde\sigmoid)^2 + (1-\sigmoid)\tilde\sigmoid^2\right\}(g(1)-g(-1))^2\\
        &=\left\{ \frac{e^\epsilon}{e^\epsilon+1}\frac{1}{(e^\epsilon-1)^2} + \frac{1}{e^\epsilon+1}\frac{e^{2\epsilon}}{(e^\epsilon-1)^2}\right\}(g(1)-g(-1))^2\\
        &=\frac{e^\epsilon}{(e^\epsilon-1)^2}\left\{ \frac{1}{e^\epsilon+1}+ \frac{e^\epsilon}{e^\epsilon+1}\right\}(g(1)-g(-1))^2\\
        \cause{As $\frac{e^\epsilon}{(e^\epsilon-1)^2}=\tilde\sigmoid(\epsilon)(\tilde\sigmoid(\epsilon)-1)$.}
        &= \tilde\sigmoid(\epsilon)(\tilde\sigmoid(\epsilon)-1)(g(1)-g(-1))^2 \eqsp.
    \end{align*}
\end{proof}
Let us now restate the variance of the IWP gradient estimator and provide the proof.
\OurGradientVar*
\begin{proof}
    We first decompose the variance for each component of $\nabla\tilde\ell_{\epsilon,\delta}(\theta,\tilde x, \tilde y)$.
    \begin{align*}
        \EE \|\nabla_\theta \tilde \ell(\theta, \tilde x, \tilde y) \|^2 - \|\nabla\ell(\theta, x,  y)\|^2&= \sum_{j=1}^k \left( \EE\bigabs{\partial_{\theta_j} \tilde\ell_{\epsilon,\delta}(\theta, \tilde x, \tilde y)}^2 - \bigabs{\partial_{\theta_j} \ell(\theta, x,  y)}^2 \right)\\
        &= \sum_{j=1}^k \underbrace{\VV\left( \partial_{\theta_j} \tilde\ell_{\epsilon,\delta}(\theta, \tilde x, \tilde y)\right)}_{(A_j)}\eqsp.\\
    \end{align*}
    We are interested in computing $(A_j)$ for any $j\in\{1,\dots,k\}$. We can decompose it using total variance law:
    \begin{align*}
        (A_j) = \underbrace{\EE_{\tilde y} \left[ \VV_{\tilde x}\left( \partial_{\theta_j} \tilde\ell_{\epsilon,\delta}(\theta, \tilde x, \tilde y)\;\Bigg|\;\tilde y\right)\right]}_{(a_j)} \;+\; \underbrace{\VV_{\tilde y} \left( \EE_{\tilde x}\left[\partial_{\theta_j} \tilde\ell_{\epsilon,\delta}(\theta, \tilde x, \tilde y)\;\Bigg|\;\tilde y\right]\right)}_{(b_j)}\eqsp.
    \end{align*}
    We start with the term $(a_j)$. For that, we need the following expression of $\partial_{\theta_j} \tilde\ell_{\epsilon,\delta}(\theta, \tilde x, \tilde y)$:
    \begin{align*}
        \partial_{\theta_j} \tilde\ell_{\epsilon,\delta}(\theta, \tilde x, \tilde y) &=\partial_{\theta_j}\left[\WW^{-1}_{\sigma^2} \left[z\mapsto\BB^{-1}_{\epsilon_y}[\ell(\theta,z,\cdot)](\tilde y)\right](\tilde x)\right]\\
        &= \WW^{-1}_{\sigma^2} \left[z\mapsto \BB^{-1}_{\epsilon_y}[\partial_{\theta_j}\ell(\theta,z,\cdot)](\tilde y)\right](\tilde x).
    \end{align*}
    Then, using Lemma~\ref{lemma:weierstrass_variance} with $f(x) = \BB_{\epsilon_y}^{-1}\left[\partial_{\theta_j}\ell(\theta,x,\cdot)\right](\tilde y)$ and $t={\sigma^2/2}$, we have
    \begin{align*}
        \VV_{\tilde{x}}\left(\partial_{\theta_j}\ell(\theta,\tilde x ,\tilde y)\;\big|\;\tilde y\right) &=\WW_{\sigma^2}\left[\left( \WW^{-1}_{\sigma^2} \left[z\mapsto \BB^{-1}_{\epsilon_y}[\partial_{\theta_j}\ell(\theta,z,\cdot)](\tilde y)\right](\tilde x)\right)^2 \right] - \left(\BB_{\epsilon_y}^{-1}\left[\partial_{\theta_j}\ell(\theta,x,\cdot)\right](\tilde y)\right)^2\\
        &=2\int_0^{\sigma^2/2} \WW_{2s}[\|\WW_{2s}^{-1}[\nabla_x \BB_{\epsilon_y}^{-1}(\partial_{\theta_j}\ell(\theta,\cdot,\tilde y))]\|^2](x)ds.
    \end{align*}
    We now inject this variance term in $(a_j)$, which gives
    \begin{align*}
        (a_j)
        & = \EE_{\tilde y} \left[ 2\int_0^{\sigma^2/2} \WW_{2s}[\|\WW_{2s}^{-1}[\nabla_x \BB_{\epsilon_y}^{-1}(\partial_{\theta_j}\ell(\theta,\cdot,\tilde y))]\|^2](x)ds\right]\\
        \cause{Developping the gradient norm $\norm{ \nabla_x h(x) }^2 = \textstyle{\sum_{i=1}^p} (\partial_{x_i} h(x))^2$, and swapping $\partial_{x_i}$ and $\BB^{-1}$.}
        &=\sum_{i=1}^p \EE_{\tilde y} \left[ 2\int_0^{\sigma^2/2} \WW_{2s}[(\WW_{2s}^{-1}[\BB_{\epsilon_y}^{-1}(\partial_{x_i}\partial_{\theta_j}\ell(\theta,\cdot,\tilde y))])^2](x)ds\right]\\
        \cause{Swapping $\WW$ and $\EE_{\tilde y}$.}
        &=\sum_{i=1}^p 2\int_0^{\sigma^2/2} \WW_{2s}\left[\EE_{\tilde y} \Big[ \left(\WW_{2s}^{-1}[\BB_{\epsilon_y}^{-1}(\partial_{x_i}\partial_{\theta_j}\ell(\theta,\cdot,\tilde y))]\right)^2 \Big] \right](x)ds\\
        &=\sum_{i=1}^p 2\int_0^{\sigma^2/2} \WW_{2s}\left[\EE_{\tilde y} \Big[ \left(\BB_{\epsilon_y}^{-1}[\WW_{2s}^{-1}(\partial_{x_i}\partial_{\theta_j}\ell(\theta,\cdot,\tilde y))]\right)^2 \Big] \right](x)ds\eqsp.\\
    \end{align*}
    Now we can use the formula of $\VV(\BB^{-1}(g(\tilde y))$ from Lemma~\ref{lemma:RR_variance} with $g(y) = \WW_{2s}^{-1}(\partial_{x_i}\partial_{\theta_j}\ell(\theta,\cdot,y))$ and the fact that $\EE_{\tilde y} [\BB_{\epsilon_y}^{-1}[\WW_{2s}^{-1}[\partial_{x_i}\partial_{\theta_j}\ell(\theta,\cdot,\tilde y)]] ] = \WW_{2s}^{-1}[\partial_{x_i}\partial_{\theta_j}\ell(\theta,\cdot,\tilde y)]$, which gives
    \begin{align*}
        (a_j) 
        & =\sum_{i=1}^p 2\int_0^{\sigma^2/2} \WW_{2s}\left[
         \VV_{\tilde y} (\BB_{\epsilon_y}^{-1}[\WW_{2s}^{-1}[\partial_{x_i}\partial_{\theta_j}\ell(\theta,\cdot,\tilde y)]])
         +(\EE_{\tilde y} [\BB_{\epsilon_y}^{-1}[\WW_{2s}^{-1}[\partial_{x_i}\partial_{\theta_j}\ell(\theta,\cdot,\tilde y)]] ])^2
         \right](x)ds\\
         &=\sum_{i=1}^p 2\int_0^{\sigma^2/2} \tilde\sigmoid(\epsilon_y)\left(\tilde\sigmoid(\epsilon_y)-1\right)\WW_{2s}\left[
          \left(\WW_{2s}^{-1}[\partial_{x_i}\partial_{\theta_j}\ell(\theta,\cdot,1)] - \WW_{2s}^{-1}[\partial_{x_i}\partial_{\theta_j}\ell(\theta,\cdot,-1)]\right)^2\right](x)ds\\
         &\quad +\sum_{i=1}^p 2\int_0^{\sigma^2/2} \WW_{2s}\left[\left(\WW_{2s}^{-1}[\partial_{x_i}\partial_{\theta_j}\ell(\theta,\cdot,y)]\right)^2
         \right](x)ds.
    \end{align*}
    We rearrange the terms to make the squared norm appear:
    \begin{align*}
        (a_j) 
        & =2\tilde\sigmoid(\epsilon_y)\left(\tilde\sigmoid(\epsilon_y)-1\right)\int_0^{\sigma^2/2} \WW_{2s}\big[ \|\WW_{2s}^{-1}(\nabla_x \partial_{\theta_j}\ell(\theta,\cdot,1) - \WW_{2s}^{-1}(\nabla_x \partial_{\theta_j}\ell(\theta,\cdot,-1)\|^2 \big] (x)ds\\
         &\quad+2\int_0^{\sigma^2}\WW_{2s} \big[ \|\WW_{2s}^{-1}[\nabla_x \partial_{\theta_j}\ell(\theta,\cdot,y)]\|^2 \big] (x)ds.
    \end{align*}
    Finally, remarking that $\EE_{\tilde x}[\partial_{\theta_j} \tilde\ell_{\epsilon,\delta}(\theta, \tilde x, \tilde y)\;\big|\;\tilde y] = \partial_{\theta_j} \tilde\ell_{\epsilon,\delta}(\theta, x, \tilde y)$ and using \Cref{lemma:RR_variance} again gives
    \begin{align*}
        (b_j) &= \VV_{\tilde y} \left( \BB^{-1}_{\epsilon_y}\left[ \partial_{\theta_j} \ell(\theta,x,\cdot)\right](\tilde y)\right)=\tilde\sigmoid(\epsilon_y)\left(\tilde\sigmoid(\epsilon_y)-1\right)\left( \partial_{\theta_j} \ell(\theta,x,1) -\partial_{\theta_j} \ell(\theta,x,-1)\right)^2.
    \end{align*}
    Plugging the results of $(a_j)$ and $(b_j)$ in the formula of $(A_j)$ and summing over $j\in\{1.\dots,k\}$ yields the following expression of the variance of the IWP gradient estimator:
    \begin{align*}
    \nonumber
    &\EE\norm{\nabla_\theta \tilde\ell_{\epsilon,\delta}(\theta,\tilde x,\tilde y)-\nabla_\theta \ell(\theta,x,y)}^2
    \\
     & \quad = 2\int_0^{\sigma^2/2} \WW_{2s} \| \WW_{2s}^{-1}[\nabla_x\nabla_\theta\ell(\theta,\cdot,y)] \|^2(x)ds\\
     \nonumber
     & \qquad + \tilde\sigmoid(\epsilon_y)\left(\tilde\sigmoid(\epsilon_y)-1\right) \| \nabla_\theta\ell(\theta,x,1) - \nabla_\theta\ell(\theta,x,-1) \|^2\\
     \nonumber
    & \qquad + 2\tilde\sigmoid(\epsilon_y)\left(\tilde\sigmoid(\epsilon_y)-1\right)\int_0^{\sigma^2/2} \WW_{2s} \| \WW_{2s}^{-1}[\nabla_x\nabla_\theta\ell(\theta,\cdot,1)-\nabla_x\nabla_\theta\ell(\theta,\cdot,-1)]\|^2(x)ds\eqsp,
    \end{align*}
    where the matrix norm is the Frobenius norm.
    Recalling the definition
    \begin{align*}
        C =\sup_{(\theta,x,y)\in\Theta\times\sX\times\sY,s<\sigma^2/2}\max\big\{ \norm{\nabla_\theta \ell(\theta,x,y)},\norm{ \WW_{2s}^{-1}\left[\nabla_\theta \nabla_x \ell(\theta,\cdot,y)\right](x)}\big\},
    \end{align*}
    we can bound the integrands with $C$ and use the increasing property of $\WW$ to bound the variance and get the result.
\end{proof}

\subsection{Application to Generalized Linear Models (GLM)}
\label{app:GLM}
We provide proofs of derivations for the case of GLM (subsection~\ref{subsec:gen_lin_loss}). 

\paragraph{Loss.}
Recall that in GLM the loss is $\ell(\theta,\tilde x,\tilde y) = f(\theta^\top \tilde x\tilde y)$. It simplifies the expression of iterated Laplacians:
$$ \Delta^k_xf(\theta^\top \tilde x\tilde y) = \norm{\theta}^{2k} f^{(2k)}(\theta^\top \tilde x\tilde y)\eqsp.$$
Then, the Weierstrass inverse has the following expression.
\begin{align*}
    \WW_{2t}^{-1} [\ell(\theta,\cdot,\tilde y)](\tilde x) &= \sum_{k=0}^\infty \frac{\Delta^k_x \ell(\theta,\tilde x,\tilde y) }{k!} (-t)^k = \sum_{k=0}^\infty \frac{\Delta^k_xf(\theta^\top \tilde x\tilde y)}{k!} (-t)^k\\
    &=\sum_{k=0}^\infty \frac{\norm{\theta}^{2k} f^{(2k)}(\theta^\top \tilde x\tilde y)}{k!} (-t)^k = \sum_{k=0}^\infty \frac{(-t\norm{\theta}^2)^k }{k!} f^{(2k)}(\theta^\top \tilde x\tilde y)\\
    &=\WW_{2t\norm{\theta}^2}^{-1}[f](\theta^\top \tilde x\tilde y).
\end{align*}
Recall Equation~\eqref{eq:loss_estim}:
\begin{align*}
    \tilde\ell_{\epsilon,\delta}(\theta,\tilde x,\tilde y) &= \TT_{\epsilon,\delta}^{-1}[\ell(\theta,\cdot,\cdot)](\tilde x,\tilde y) = \WW_{\sigma^2}^{-1}\left[\BB_{\epsilon_y}[\ell(\theta,\cdot,\cdot)](\cdot,\tilde y)\right](\tilde x)\\
    \cause{Applying $\BB^{-1}_{\epsilon_y}$ first.}
    &=\WW_{\sigma^2}^{-1} \left[\tilde \sigmoid(\epsilon_y) \ell(\theta,\cdot,\tilde y)  +\left(1-\tilde\sigmoid(\epsilon_y)\right)\ell(\theta,\cdot,-\tilde y)\right](\tilde x)\\
    \cause{By linearity of $\WW_{2t}^{-1}$.}
    &=\tilde \sigmoid(\epsilon_y)\WW_{\sigma^2}^{-1} [\ell(\theta,\cdot,\tilde y)](\tilde x)+\left(1-\tilde\sigmoid(\epsilon_y)\right)\WW_{\sigma^2}^{-1} [\ell(\theta,\cdot,-\tilde y)](\tilde x)\eqsp.
\end{align*}
Replacing $\WW_{2t}^{-1} [\ell(\theta,\cdot,\tilde y)](\tilde x) = \WW_{2t\norm{\theta}^2}^{-1}[f](\theta^\top \tilde x\tilde y)$ with $2t=\sigma^2$ yields
$$\tilde\ell_{\epsilon,\delta}(\theta,\tilde x, \tilde y) =\tilde \sigmoid(\epsilon_y)\WW_{\sigma^2 \norm{\theta}^2}^{-1}[f]\left(\theta^\top \tilde x \tilde y\right)+\left(1-\tilde\sigmoid(\epsilon_y)\right)\WW_{\sigma^2 \norm{\theta}^2}^{-1}[f]\left(-\theta^\top \tilde x \tilde y\right).$$

\paragraph{Gradient.}
We can't directly differentiate Equation~\eqref{eq:loss_estim} with respect to $\theta$ by swaping Weierstrass transform and gradient ($\nabla_\theta\WW_{2t}^{-1}[\ell(\theta,\cdot,\tilde y)]=\WW_{2t}^{-1}[\nabla_\theta\ell(\theta,\cdot,\tilde y)]$) because here $t$ depends on $\theta$. We thus write the derivative explicitly,
\begin{align*}
    \nabla_\theta \tilde\ell_{\epsilon,\delta}(\theta,\tilde x, \tilde y) &= \tilde \sigmoid(\epsilon_y)\nabla_\theta \WW_{\sigma^2 \norm{\theta}^2}^{-1}[f]\left(\theta^\top \tilde x \tilde y\right)+\left(1-\tilde\sigmoid(\epsilon_y)\right)\nabla_\theta\WW_{\sigma^2 \norm{\theta}^2}^{-1}[f]\left(-\theta^\top \tilde x \tilde y\right).
\end{align*}
It boils down to computing $\nabla_\theta \WW_{\sigma^2 \norm{\theta}^2}^{-1}[f]\left(\theta^\top \tilde x \tilde y\right)$ for a given $\tilde y\in\{-1,1\}$. Let $j\in\{1,\dots,p\}$,
\begin{align*}
    \nabla_\theta \WW_{\sigma^2 \norm{\theta}^2}^{-1}[f]\left(\theta^\top \tilde x \tilde y\right) &= \nabla_\theta \left[  \sum_{k=0}^\infty \frac{(-\tau\norm{\theta}^2)^k}{k!} f^{(2k)}(\theta^\top \tilde x \tilde y)\right],\quad \color{gray}{\text{Denoting } \tau = \sigma^2/2.}\\
    &=\sum_{k=0}^\infty  \frac{(-\tau)^k}{k!} \nabla_\theta \left[\norm{\theta}^{2k} f^{(2k)}(\theta^\top \tilde x \tilde y)\right]\\
    &=\sum_{k=0}^\infty  \frac{(-\tau)^k}{k!} \left\{ 2k\norm{\theta}^{2k-2}\theta f^{(2k)}(\theta^\top \tilde x \tilde y) 
    + \norm{\theta}^{2k} \tilde x \tilde y f^{(2k+1)}(\theta^\top \tilde x \tilde y) \right\}\\
    &=2\theta \sum_{k=0}^\infty  \frac{k(-\tau\norm{\theta}^2)^k}{\norm{\theta}^2 k!} f^{(2k)}(\theta^\top \tilde x \tilde y) 
    + \tilde x \tilde y\sum_{k=0}^\infty  \frac{(-\tau\norm{\theta}^2)^k}{k!} f^{(2k+1)}(\theta^\top \tilde x \tilde y)\\
    &=-2\tau\theta \sum_{k=1}^\infty  \frac{(-\tau\norm{\theta}^2)^{k-1}}{(k-1)!} f^{(2k)}(\theta^\top \tilde x \tilde y) 
    + \tilde x \tilde y\sum_{k=0}^\infty  \frac{(-\tau\norm{\theta}^2)^k}{k!} f^{(2k+1)}(\theta^\top \tilde x \tilde y)\\
    &=-2\tau\theta \sum_{k=0}^\infty  \frac{(-\tau\norm{\theta}^2)^k}{k!} f^{(2k+2)}(\theta^\top \tilde x \tilde y) 
    + \tilde x \tilde y\sum_{k=0}^\infty  \frac{(-\tau\norm{\theta}^2)^k}{k!} f^{(2k+1)}(\theta^\top \tilde x \tilde y).
\end{align*}
Recognizing the Weierstrass transforms and replacing $\tau$, we obtain
\begin{align}
\label{eq:expression-grad-wminusone}
    \nabla_\theta \WW_{\sigma^2 \norm{\theta}^2}^{-1}[f]\left(\theta^\top \tilde x \tilde y\right) 
    &=-\sigma^2\theta \WW_{\sigma^2\norm{\theta}^2}^{-1}[f''](\theta^\top \tilde x \tilde y) + \tilde x \tilde y\WW_{\sigma^2\norm{\theta}^2}^{-1}[f'](\theta^\top \tilde x \tilde y).
\end{align}

\subsection{Uniform Bounds of $\boldsymbol{\| \WW_{2s}^{-1}\nabla_\theta \nabla_x \ell(\theta,x,y) \|}$}
\label{app:C_bound_GLM}
In this subsection, we derive uniform bounds on $\norm{ \WW_{2s}^{-1}\nabla_\theta \nabla_x \ell(\theta,x,y) }$. From the expression \eqref{eq:expression-grad-wminusone}, we have
$$\nabla_\theta \WW_{\sigma^2 \norm{\theta}^2}^{-1}[f]\left(\theta^\top x y\right) = x y\WW_{\sigma^2\norm{\theta}^2}^{-1}[f'](\theta^\top x y) -\sigma^2\theta \WW_{\sigma^2\norm{\theta}^2}^{-1}[f''](\theta^\top x y).$$
It remains to differentiate again with respect to $x$:
\begin{align}
    \nonumber
    \nabla_x\nabla_\theta \WW_{\sigma^2 \norm{\theta}^2}^{-1}[f]\left(\theta^\top x y\right) &=\nabla_x\left[ x y\WW_{\sigma^2\norm{\theta}^2}^{-1}[f'](\theta^\top x y)\right] - \sigma^2\theta\nabla_x \left[ \WW_{\sigma^2\norm{\theta}^2}^{-1}[f''](\theta^\top x y)\right]\\
    &=y\WW_{\sigma^2\norm{\theta}^2}^{-1}[f'](\theta^\top x y)\Id{p} + x \theta^\top\WW_{\sigma^2\norm{\theta}^2}^{-1}[f''](\theta^\top x y)%
    - \sigma^2 y \theta\theta^\top \WW_{\sigma^2\norm{\theta}^2}^{-1}[f'''](\theta^\top x y).
    \label{eq:derivative_w_inv_x_theta}
\end{align}
We can now distinguish multiple choices for the function $f$.
\paragraph{Quadratic: $\boldsymbol{f(z) = \frac{1}{2} (z-1)^2}$.}
Here, $f'(z) = (z-1)$, $f''(z)=1$ and $f'''(z)=0$, thus~\eqref{eq:derivative_w_inv_x_theta} gives
$$\nabla_x\nabla_\theta \WW_{\sigma^2 \norm{\theta}^2}^{-1}[f]\left(\theta^\top x y\right) = x\theta^\top + y(\theta^\top x y - 1)\Id{p}.$$
So we can derive the following bound:
\begin{align*}
    \norm{ \nabla_x\nabla_\theta \WW_{\sigma^2 \norm{\theta}^2}^{-1}[f]\left(\theta^\top x y\right)} &\le \norm{x}\norm{\theta} + p(\norm{x}\norm{\theta}+1)\\
    &\le \sO\left( p\norm{\sX}\norm{\Theta}\right),
\end{align*}
where the matrix norm is the Frobenius norm.

\paragraph{Exponential: $\boldsymbol{f(z) = \exp(-z)}$.}
Here, $f'(z) = f'''(z) = -\exp(-z)$ and $f''(z) = \exp(-z)$, thus~\eqref{eq:derivative_w_inv_x_theta} gives
$$\nabla_x\nabla_\theta \WW_{\sigma^2 \norm{\theta}^2}^{-1}[f]\left(\theta^\top x y\right) 
= e^{-\sigma^2 \norm{\theta}^2/2}e^{-\theta^\top x y}\left( x\theta^\top -y\Id{p} + \sigma^2\theta \theta^\top y\right).$$
So we can derive the following bound:
\begin{align*}
    \norm{ \nabla_x\nabla_\theta \WW_{\sigma^2 \norm{\theta}^2}^{-1}[f]\left(\theta^\top x y\right)} &\le e^{-\sigma^2 \norm{\theta}^2/2}e^{\norm{\theta}\norm{x}}\left( \norm{x}\norm{\theta} + p + \sigma^2\norm{\theta}^2\right)\\
    \cause{Using $\norm{x}\norm{\theta}-\sigma^2 \norm{\theta}^2/2 \le \norm{x}^2/2\sigma^2$.}
    &\le \exp\left(\frac{\norm{x}^2}{2\sigma^2}\right)\left( \norm{x}\norm{\theta} + p + \sigma^2\norm{\theta}^2\right)\\
    &\le \exp\left(\frac{\norm{\sX}^2}{2\sigma^2}\right)\left(p+\norm{\sX}\norm{\Theta}+ \sigma^2\norm{\Theta}^2\right)\\
    \cause{Replacing $\sigma^2$ with its expression, the $\norm{\sX}^2$ simplify.}
    &\le \sO\left(\exp\left(\frac{\epsilon_x^2}{\log(1.25/\delta)}\right)\left(p+\norm{\sX}\norm{\Theta}+ \sigma^2\norm{\Theta}^2\right)\right),
\end{align*}
where the matrix norm is the Frobenius norm.

\subsection{In Absence of Closed-form Expression for $\WW^{-1}$}
\label{app:no_closed_form}
When no closed-form expression of $\WW^{-1}[f]$ is given, we fall into two cases
\begin{enumerate}
    \item $f$ is in $\funcspace{a}$,
    \item $f$ is not in $\funcspace{a}$.
\end{enumerate}
In the two cases, we use the following approximation of $\WW_{2t}^{-1}[f]$:
$$ g^K_t(x) = \sum_{k=0}^K \frac{\Delta^k f(x)}{k!}(-t)^k.$$
In case 1 ($f$ is in $\funcspace{a}$), we can bound the bias of $g^K_t$ uniformly on $\sX$ using the fact that $\lim_{K\to \infty}g^K_t(x) = \WW_{2t}^{-1}[f](x)$,
\begin{align*}
    \text{bias}_K &= \sup_{x\in\sX} |\WW_{2t}\left[g^K_t\right](x)-f(x)| = \sup_{x\in\sX} \left|\WW_{2t}\left[g^K_t - \WW_{2t}^{-1}[f]\right](x)\right|\\
    &\le \sup_{x\in\sX} \left|g^K_t(x) - \WW_{2t}^{-1}[f](x)\right| = \sup_{x\in\sX} \left| \sum_{k=K+1}^\infty \frac{\Delta^k f(x)}{k!}(-t)^k\right|\\
    \cause{By Equation~\eqref{eq:laplace_iterate_growth}, it exist $D>0$ such that}
    &\le D \sup_{x\in\sX} A_x (4at)^{K+1}.
\end{align*}
Then, the bias of using $g^K_t$ instead of $\WW_{2t}^{-1}[f]$ is exponentially decreasing with $K$ since $a<1/4t$.\\
In case 2 ($f$ is not in $\funcspace{a}$), we do not have any guarantee that increasing $K$ will result in a better approximation. Instead, we can estimate the $bias_K$ for small $K\in\{1,2,3,\dots\}$ and take the optimal truncation \citep{devil_invention}:
\begin{align*}
    K^* &= \arg\min_K\left\{ bias_K = \sup_{x\in\sX} \left|\WW_{2t}\left[g^K_t\right](x)-f(x)\right|\right\}\\
    &= \arg\min_K\left\{\sup_{x\in\sX} \left| \int_0^t \frac{(-s)^{K}}{K!} \WW_{2s}\left[ \Delta^{K+1}f\right](x)ds\right|\right\}\eqsp.
\end{align*}
    We present this case for the example of the log loss.
\begin{example}[log loss]
    \label{ex:logloss}
    Consider $f\left(\theta^\top xy\right) =\log(1+\exp(-\theta^\top xy))$ for any $(\theta,x,y)\in\Theta\times\sX\times\sY$. $f$ is not in $\funcspace{a}$. Figure~\ref{fig:trunc_error} shows the estimate of the truncation error $\varepsilon_K$ via numerical integration and Monte-Carlo sampling approximation. We restrict the study to $t\le 25$ which, for the unit ball $\norm{\sX}\le 1$, is true for any $\epsilon\ge 1$. For low $t$, choosing $K\in\{2,3\}$ can be better and for larger one $K=1$ is showing a smaller bias.

\begin{figure}
    \centering
        \includegraphics[trim=0 0 0 0, clip,width=0.6\textwidth]{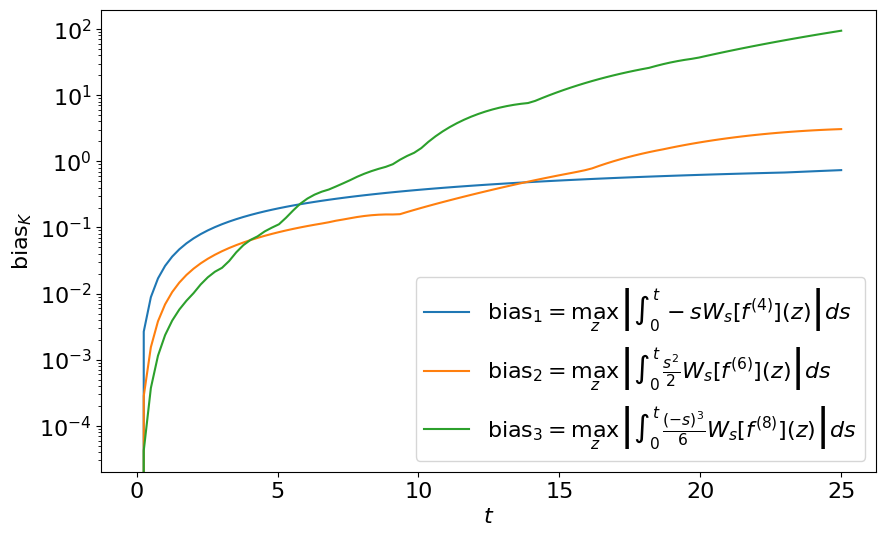}
    \captionof{figure}{Approximation of the truncation error $bias_K$ for $K\in\{1,2,3\}$.}
    \label{fig:trunc_error}
\end{figure}

\end{example}

\subsection{Convergence of IWP-SGD -- Proof of \Cref{thm:conv_IWPSGD}}
\label{appendix:convergence_IWP_SGD}
We provide a proof of the following theorem.
\IWPSGDConv*

\begin{proof}
First, the algorithm is $(\epsilon,\delta)$-LDP by the post-processing theorem as the data is first privatized using the $(\epsilon,\delta)$-LDP release \eqref{eq:ldp_release} before being used for the SGD iterations. The rest of the proof is about the convergence guarantees. For any $t \geq 0$ the iterates of SGD are given by:
\begin{align}
 \theta_{t+1} = \Pi_\Theta(\theta_t- \gamma g_t), \label{eq:sgd}
\end{align}
where
 $\gamma>0$ denotes a step-size and $g_t$ is the IWP gradient estimator computed on the $(\epsilon,\delta)$-LDP release defined in Equation~\eqref{eq:ldp_release}
 $$ g_t = \TT_{\epsilon,\delta}^{-1}[\nabla_\theta\ell(\theta,\cdot,\cdot)](\tilde x,\tilde y).$$ 
For any $t\ge0$, the expectation of $g_t$ is
\begin{align}
 \EE_{(x,y) \sim \sD}\EE_{(\tilde x,\tilde y)}\left[g_t \mid \theta_0,\dots,\theta_{t}\right]= \nabla \risk(\theta_t), \label{def:unbiased}
\end{align}
and the squared gradient satisfies 
\begin{align*}
\EE_{(x,y) \sim \sD}\EE_{(\tilde x,\tilde y)}\left[\|g_t\|^2  \mid \theta_0,\dots,\theta_{t}\right] 
=
\norm{ \nabla \risk(\theta_t) }^2 + 
\EE_{(x,y) \sim \sD} \EE_{(\tilde x,\tilde y)}[ \norm{ g_t - \nabla \risk(\theta_t) }^2 \mid \theta_0,\dots,\theta_{t}] 
\eqsp.
\end{align*}
Denote $\risk^\star = \min_{\theta\in\Theta} \risk(\theta)$. Following \Cref{thm:our_gradient_variance}, and bounding $ \|\nabla \risk(\theta_t)\|^2 \le 2\sK(\risk(\theta_t) - \risk^\star)$, we obtain
\begin{align*}
& \EE_{(x,y) \sim \sD}\EE_{(\tilde x,\tilde y)}\left[\|g_t\|^2 \mid \theta_0,\dots,\theta_{t}\right]
\\ 
& \quad \le \|\nabla \risk(\theta_t)\|^2 + 2\int_0^{\sigma^2/2} \WW_{2s} \| \WW_{2s}^{-1}[\nabla_x\nabla_\theta\ell(\theta,\cdot,y)]\|^2(x)ds
\\
    & \qquad + \frac{2e^{\epsilon_y}}{(e^{\epsilon_y}-1)^2}\int_0^{\sigma^2/2} \WW_{2s} \| \WW_{2s}^{-1}[\nabla_x\nabla_\theta\ell(\theta,\cdot,1) - \nabla_x\nabla_\theta\ell(\theta,\cdot,-1)]\|^2(x)ds\\
    & \qquad + \frac{e^{\epsilon_y}}{(e^{\epsilon_y}-1)^2} \| \nabla_\theta\ell(\theta,x,1) - \nabla_\theta\ell(\theta,x,-1)\|^2\\
   & \quad \le 
   2\sK(\risk(\theta_t) - \risk^\star)
    + 2\int_0^{\sigma^2/2} \WW_{2s} \| \WW_{2s}^{-1}[\nabla_x\nabla_\theta\ell(\theta,\cdot,y)]\|^2(x)ds\\
    & \qquad + \frac{2e^{\epsilon_y}}{(e^{\epsilon_y}-1)^2}\int_0^{\sigma^2/2} \WW_{2s} \| \WW_{2s}^{-1}[\nabla_x\nabla_\theta\ell(\theta,\cdot,1) - \nabla_x\nabla_\theta\ell(\theta,\cdot,-1)]\|^2(x)ds\\
    & \qquad + \frac{e^{\epsilon_y}}{(e^{\epsilon_y}-1)^2} \| \nabla_\theta\ell(\theta,x,1) - \nabla_\theta\ell(\theta,x,-1)\|^2\eqsp.
\end{align*}
Using the assumption that $\sup_{\theta,x,y}\sup_{0<s<\sigma^2/2} \| \WW_{2s}^{-1}\nabla_\theta \nabla_x \ell(\theta,x,y) \| \le C$ and $\sup_{\theta,x,y}\| \nabla_\theta \ell(\theta,x,y) \| \le C$,
\begin{align}
    \label{def:smooth}
    \EE_{(x,y)\sim\sD}\EE_{(\tilde x,\tilde y)}\left[\|g_t\|^2 \mid \theta_0,\dots,\theta_{t}\right] &\le 2\sK(\risk(\theta_t) - \risk^\star) + \frac{4C^2e^{\epsilon_y}}{(e^{\epsilon_y}-1)^2}+C^2\sigma^2\left(1 + \frac{4e^{\epsilon_y}}{(e^{\epsilon_y}-1)^2}\right)\eqsp. 
\end{align}
We denote $A=\frac{4C^2e^{\epsilon_y}}{(e^{\epsilon_y}-1)^2}+C^2\sigma^2\left(1 + \frac{4e^{\epsilon_y}}{(e^{\epsilon_y}-1)^2}\right)$ in the following.
\paragraph{Deriving a Recursion.}

Let $t \geq 0$. Then
\begin{align*}
 \EE \left[\norm{\theta_{t+1}-\theta^\star}^2\mid \theta_0,\dots,\theta_{t}\right] &\stackrel{\eqref{eq:sgd}}{=}\EE \left[\norm{\Pi_\Theta(\theta_{t}-\gamma g_t)-\theta^\star}^2\mid \theta_0,\dots,\theta_{t}\right]\\
 \cause{As $\Theta$ is a convex bounded set and $\theta^*\in\Theta$, we use contraction of the projection.}
 &\le \EE \left[\norm{\theta_{t}-\gamma g_t-\theta^\star}^2\mid \theta_0,\dots,\theta_{t}\right]\\
 &=
 \EE\left[\norm{\theta_{t}-\theta^\star}^2 - 2\gamma \langle g_t,\theta_t-\theta^\star\rangle+ \gamma^2 \norm{g_t}^2 \mid \theta_0,\dots,\theta_{t}\right] \\
 &\stackrel{\eqref{def:unbiased}}{=}\norm{\theta_{t}-\theta^\star}^2 - 2 \gamma \langle\nabla \risk(\theta_t),\theta_t-\theta^\star\rangle + \gamma^2 \EE\left[\norm{g_t}^2 \mid \theta_0,\dots,\theta_{t}\right] \\
& \stackrel{(\ref{def:smooth})}{\leq} \norm{\theta_{t}-\theta^\star}^2 - 2 \gamma \left(\frac{\mu}{2} \norm{\theta_t-\theta^\star}^2 + \risk(\theta_t)- \risk^\star \right) + \gamma^2 \bigl(2\sK(\risk(\theta_t)-\risk^\star)+ \gamma^2A,
\end{align*}
where we also used $\mu$-strong convexity in the last inequality. By re-arranging and taking expectation on both sides, we get:
\begin{align*}
 \EE {\norm{\theta_{t+1}-\theta^\star}^2} & \leq (1-\mu \gamma) \EE {\norm{\theta_{t}-\theta^\star}^2} - 2 \gamma (1- \sK\gamma) (\EE{\risk(\theta_t)} - \risk^\star) + \gamma^2A,
\end{align*}
and by observing $(1-\sK\gamma) \geq \frac{1}{2}$ for $\gamma \leq \frac{1}{2\sK}$,
\begin{align}
 \EE {\norm{\theta_{t+1}-\theta^\star}^2} & \leq (1-\mu \gamma) \EE {\norm{\theta_{t}-\theta^\star}^2} - \gamma (\EE{\risk(\theta_t)} - \risk^\star) + \gamma^2A \eqsp. 
 \label{eq:one_step}
\end{align}

\paragraph{Unrolling the Recurrence.}
We can relax \eqref{eq:one_step} to $\EE{\norm{\theta_{t+1}-\theta^\star}^2} \leq (1-\mu \gamma) \EE{\norm{\theta_t - \theta^\star}^2} + \gamma^2A$ and obtain after unrolling the recurrence for any $n\ge1$,
\begin{align}
\nonumber
\EE{\norm{\theta_{n}-\theta^\star}^2} &\leq (1-\mu \gamma)^n \norm{\theta_0 - \theta^\star}^2 + \gamma^2A\sum_{i=0}^{n-1} (1-\mu\gamma)^i\\
\nonumber
&\leq (1-\mu \gamma)^n \norm{\theta_0 - \theta^\star}^2 + \frac{\gamma}{\mu}A\\
&\leq (1-\mu \gamma)^n \norm{\theta_0 - \theta^\star}^2 + \frac{\gamma C^2}{\mu}\left(\frac{4e^{\epsilon_y}}{(e^{\epsilon_y}-1)^2}+\sigma^2\left(1 + \frac{4e^{\epsilon_y}}{(e^{\epsilon_y}-1)^2}\right)\right) \eqsp.
\label{eq:sgd_norm}
\end{align} 
This intermediate results shows that SGD with constant stepsizes reduces the initial error term $\norm{\theta_0 - \theta^\star}^2$ linearly, but only converges towards a $\sO\left(\frac{\gamma}{\mu}\frac{C^2e^{\epsilon_y}}{(e^{\epsilon_y}-1)^2}(\sigma^2+1) \right)$-neighborhood of $\theta^\star$.

\paragraph{Choosing the Step-size.}
To obtain a convergence guarantee that holds for arbitrary accuracy, we need to choose the stepsize $\gamma$ carefully:
\begin{itemize}
 \item If $\frac{1}{2\sK} \geq \frac{1}{\mu n}\log \max\left(2,\frac{\mu^2 \norm{\theta_0-\theta^\star}^2 n}{A}\right)$ then we choose $\gamma = \frac{1}{\mu n}\log \max\left(2,\frac{\mu^2 \norm{\theta_0-\theta^\star}^2 n}{A}\right)$.
 \item If otherwise $\frac{1}{2\sK} < \frac{1}{\mu n}\log \max\left(2,\frac{\mu^2 \norm{\theta_0-\theta^\star}^2 n}{A}\right)$ then we pick $\gamma = \frac{1}{2\sK}$.
\end{itemize}
With these choices of $\gamma$, we can show
 \begin{align}
 \nonumber
 \EE{\norm{\theta_{n}-\theta^\star}^2}  &=  \tilde \sO \left(  \norm{\theta_0-\theta^\star}^2 \exp \left[- \frac{\mu n}{2\sK} \right] + \frac{A}{\mu^2 n} \right)\\
 &=  \tilde \sO \left(  \norm{\theta_0-\theta^\star}^2 \exp \left[- \frac{\mu n}{2\sK} \right] + \frac{C^2e^{\epsilon_y}}{\mu^2n(e^{\epsilon_y}-1)^2}(\sigma^2+1) \right).
 \label{eq:sgd_conv_app_gamma}
 \end{align}
Where the $\tilde \sO(\cdot)$ notation hides logarithmic factors in $n$.
Replacing $\sigma^2$ with its value $\frac{8C^2\log(1.25/\delta)}{\epsilon_x^2}$ in Equations~\eqref{eq:sgd_norm}~and~\eqref{eq:sgd_conv_app_gamma} yields the desired results.
\end{proof}

\section{Experiments}
\label{appendix:experiments}
In this section, we give more details about experiments of Section~\ref{sec:experiments}. Given a test dataset of $m$ samples $D'_m = \{ (x_i,y_i)\}_{i=1}^m$, the accuracy of the linear classification model $\theta$ on the test set is denoted $\sA(\theta)$ and defined as
$$\sA(\theta) = \frac{1}{m}\sum_{i=1}^m \mathbbm{1}(y_i\theta^\top x_i >0).$$

\paragraph{Synthetic Data.} Recall that we study two synthetic binary classification problems in dimension $p=2$ and $p=10$ generated with the \verb|make_classification| routine of \verb|scikit-learn| having features within $[-1,1]^p$. We conduct the experiments on $n=10^6$ samples for two privacy guarantees : $(2,10^{-5})$-LDP for $p=2$ and $(5,10^{-5})$-LDP for $p=10$. The $\ell_2$ regularization constant is $\lambda=5$ with the regularized loss $\ell(\theta,x,y) +\lambda\norm{\theta}^2/2$. We average batches of size 128 and use a common learning rate of $\gamma=10^{-4}$.

\paragraph{Real Data.} Recall that, we study the ACSIncome and ACSPublicCoverage problems of the Folktables dataset \citep{folktables}.  Both are based on ACS data (like UCI Adult), illustrating the fact that we can reuse, in a task-agnostic way, the same private releases when reusing the same data points. ACSIncome consists of predicting whether an individual’s income is above \$50~000 and ACSPublicCoverage consists of predicting individual coverage from health insurance. For both problems, we select the two variables \emph{AGEP} (age in years) and \emph{SCHL} (educational attainment). For ACSIncome we add \emph{WKHP} (usual hours worked per week over the past year) and for ACSPublicCoverage we add \emph{PINCP} (total annual income). All features are continuous or ordinal, allowing the use of the Gaussian mechanism. We merge the data of the five largest states yielding datasets of respectively 668~859 rows and 883~984 rows for ACSIncome and ACSPublicCoverage. The data is then randomly split into training (80\%) and test (20\%) sets. The $\ell_2$ regularization constant is $\lambda=10$ with the regularized loss $\ell(\theta,x,y) +\lambda\norm{\theta}^2/2$. We average batches of size 50 and use a common learning rate of $\gamma=2\cdot10^{-5}$ for ACSPublicCoverage and ACSIncome. Figure~\ref{fig:Acc_exp_comp_folktables} is showing the accuracy convergence across batches for these experiments.

\begin{figure}
   \centering
    \begin{subfigure}[b]{0.48\textwidth}
        \centering
        \includegraphics[trim=0 0 0 0, clip,width=0.95\textwidth]{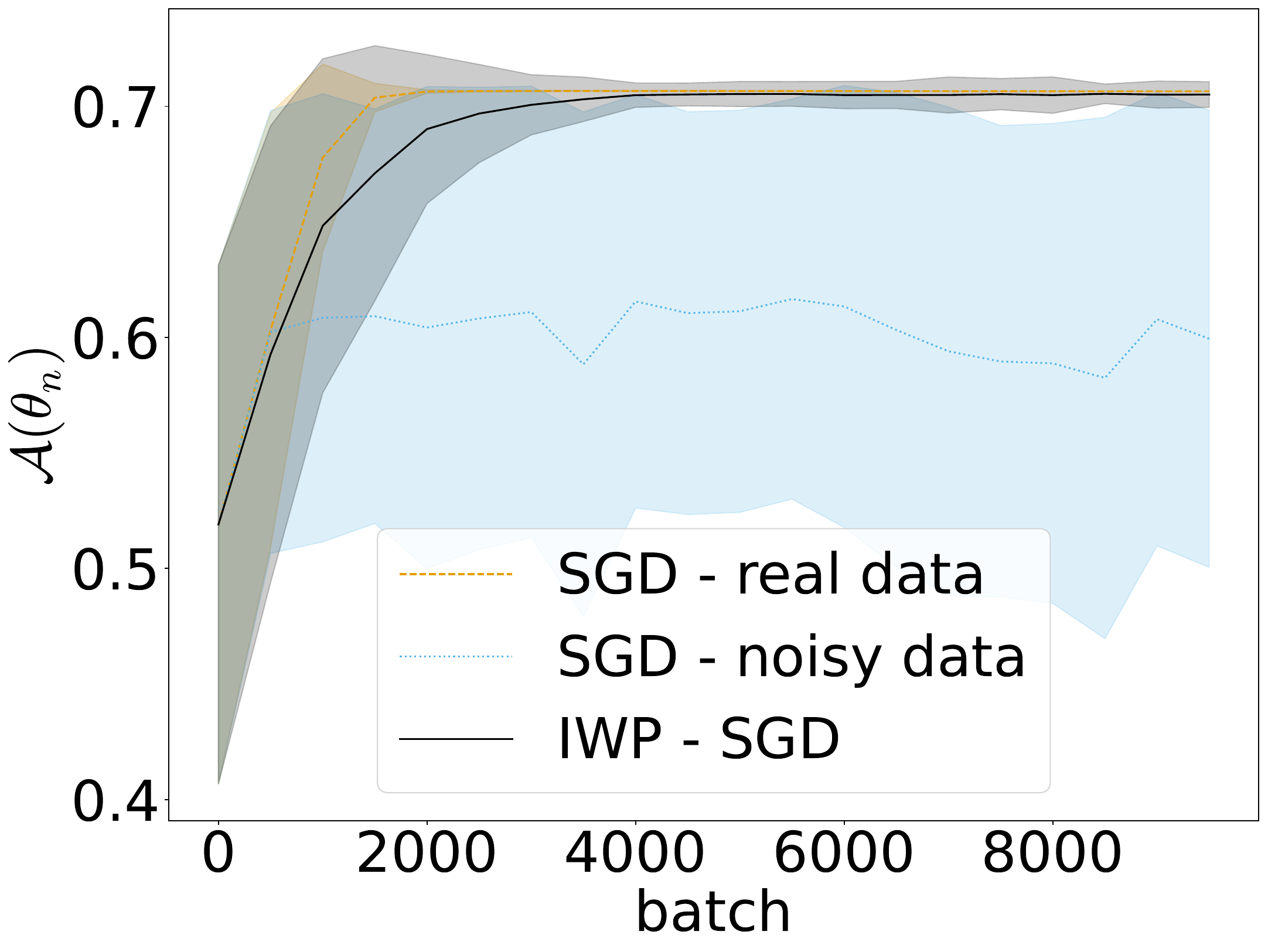}
        \caption{ACSPublicCoverage}
        \label{fig:Acc_exp_comp_folktables_ACSPublicCoverage}
    \end{subfigure}
    \begin{subfigure}[b]{0.48\textwidth}
        \centering
        \includegraphics[trim=0 0 0 0, clip,width=0.95\textwidth]{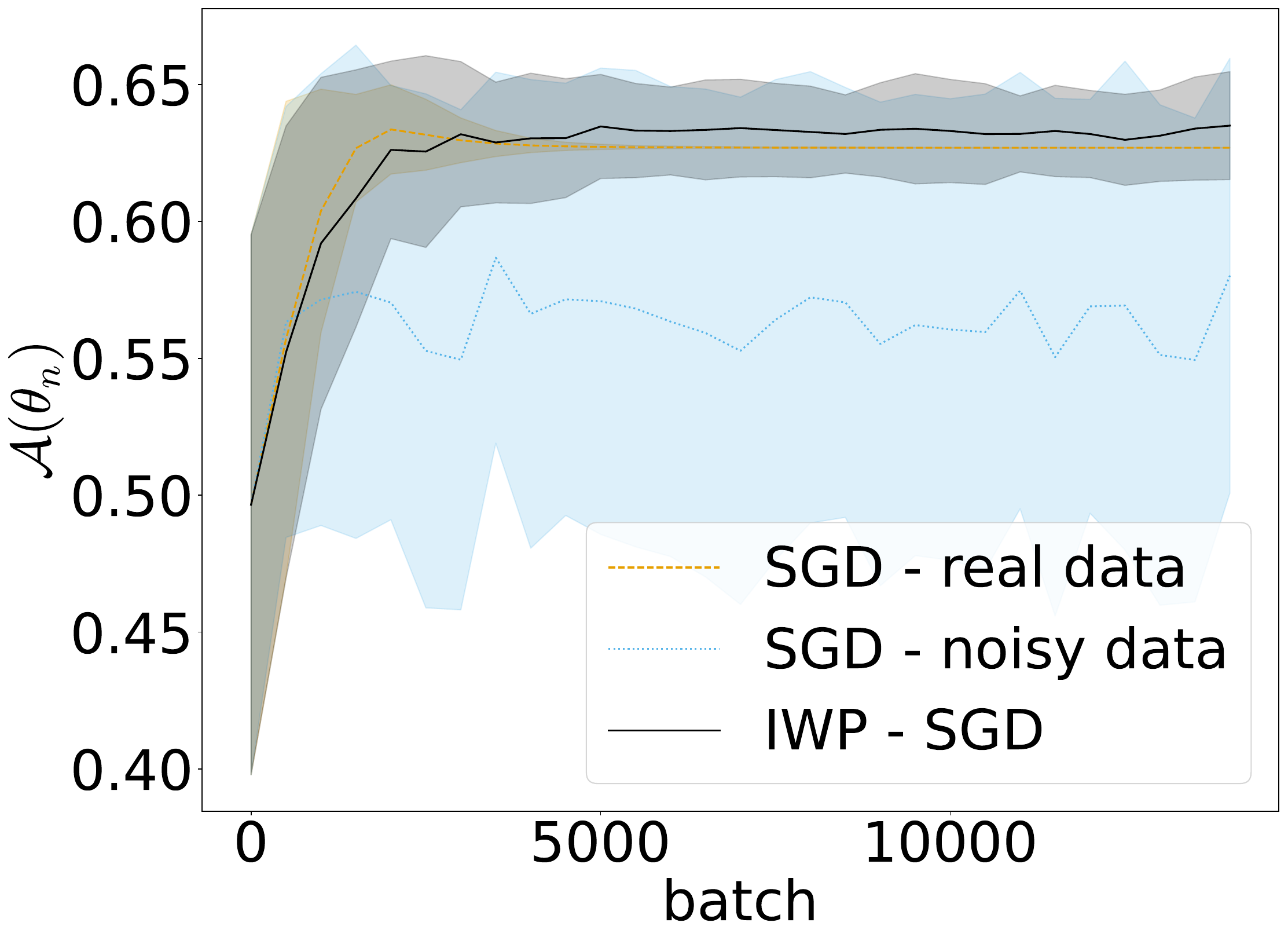}
        \caption{ACSIncome}
        \label{fig:Acc_exp_comp_folktables_ACSIncome}
    \end{subfigure}

    \caption{Comparison of Accuracy convergence of the model fitted on exp loss under $(2,10^{-5})$-LDP on ACSPublicCoverage and ACSIncome.}
    \label{fig:Acc_exp_comp_folktables}
\end{figure}

\subsection{Experiments Using the Log Loss}

Using the approximation of $\WW^{-1}$ described in Appendix~\ref{app:no_closed_form}, we applied our experiments on synthetic and real-world datasets to the log loss (with same batch sizes, regularization constants, and learning rates). Figures~\ref{fig:logloss_comp},~\ref{fig:logloss_comp_folktables}~and~\ref{fig:Acc_logloss_comp_folktables} show similar results compared to the experiments on the exponential loss in Section~\ref{sec:experiments}.
\begin{figure}
    \centering
        \includegraphics[trim=0 0 0 0, clip,width=0.44\textwidth]{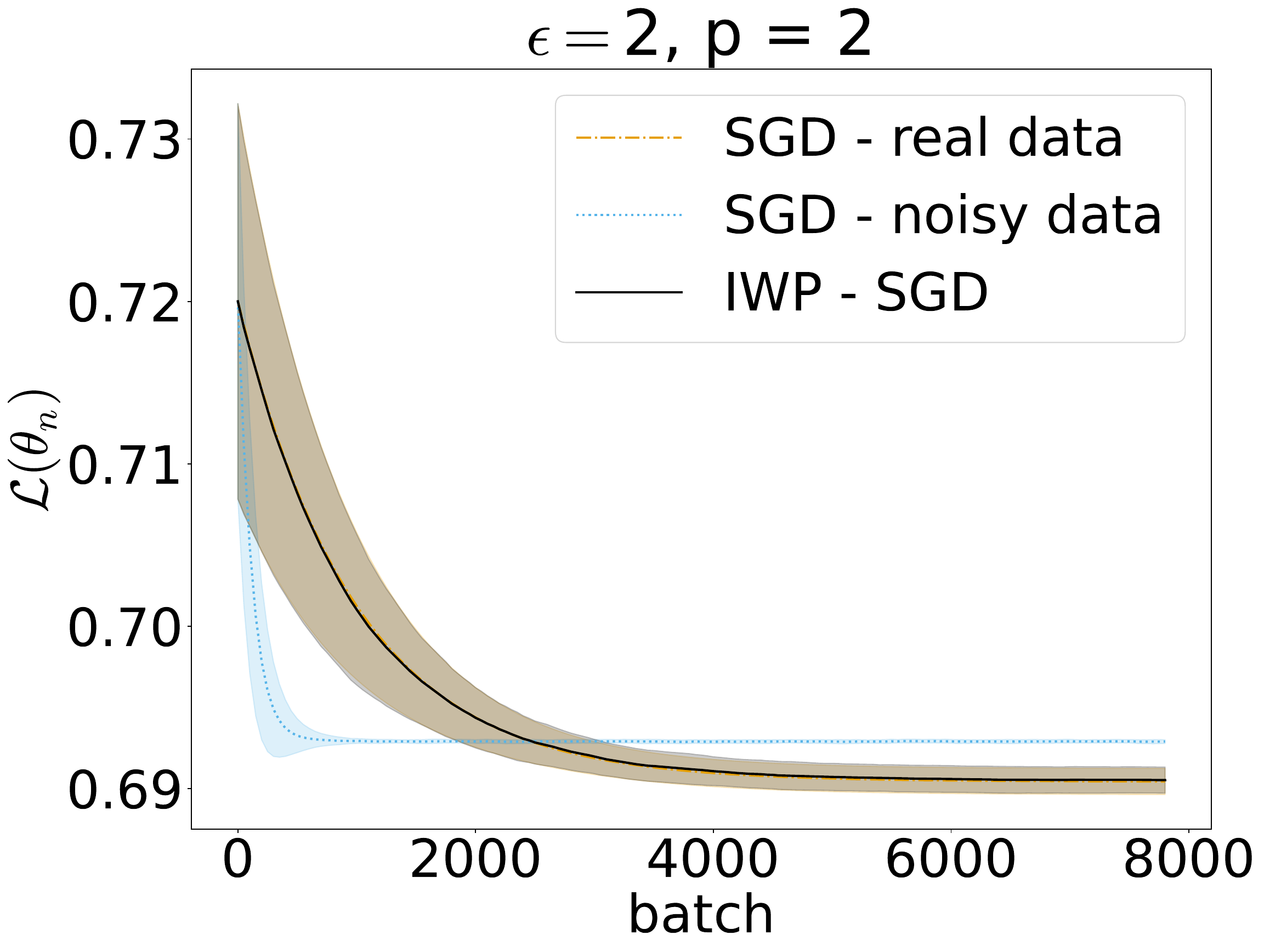}
        \includegraphics[trim=0 0 0 0, clip,width=0.44\textwidth]{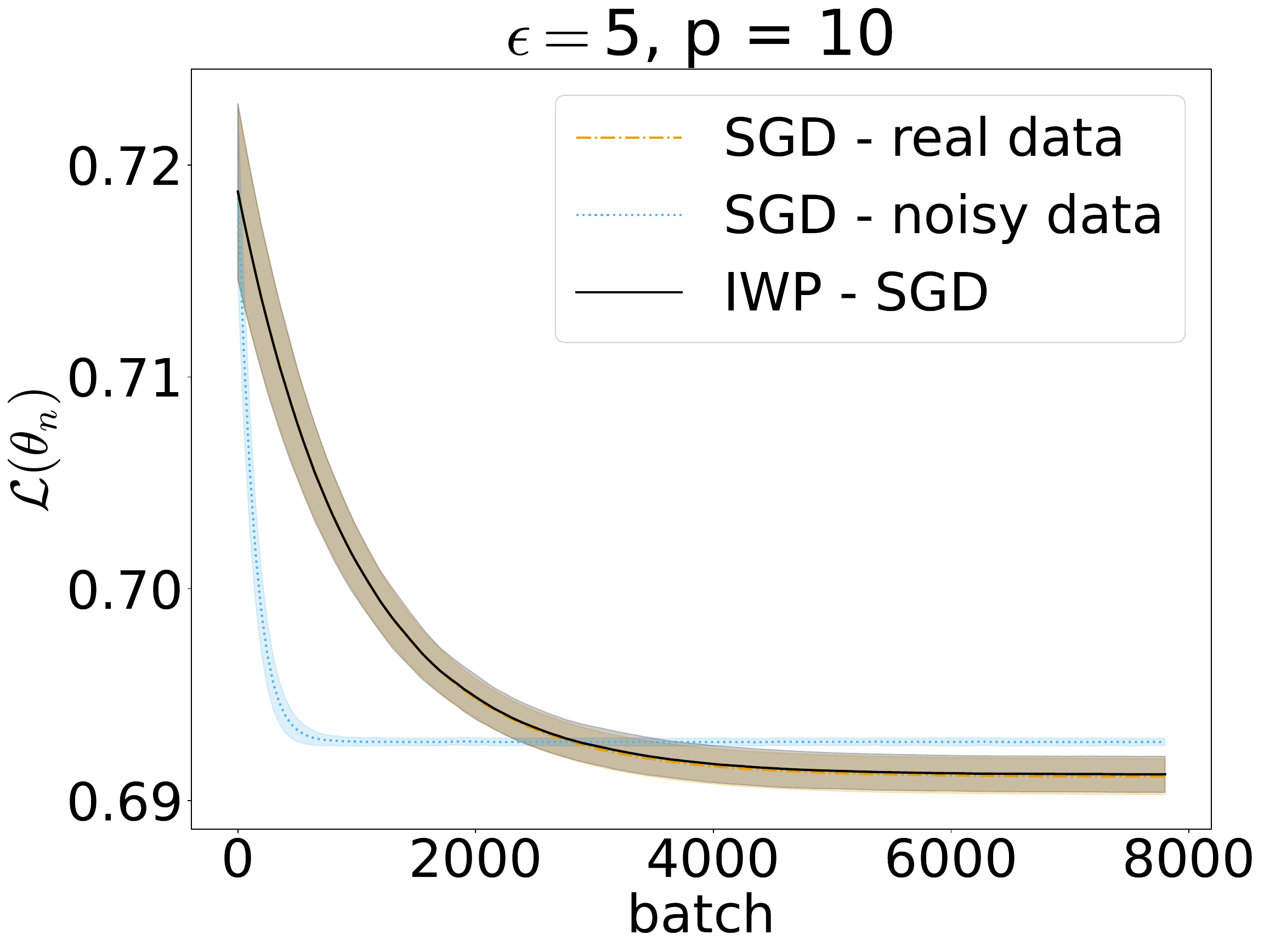}
    \captionof{figure}{Comparison of SGD convergence of the log loss under $(2,10^{-5})$-LDP for the 2-dimensional synthetic data and $(5,10^{-5})$-LDP for the 10-dimensional synthetic data.}
    \label{fig:logloss_comp}
\end{figure}

\begin{figure}
   \centering
    \begin{subfigure}[b]{0.9\textwidth}
        \centering
        \begin{minipage}[b]{0.48\textwidth}
        \includegraphics[trim=0 0 0 0, clip,width=0.9\textwidth]{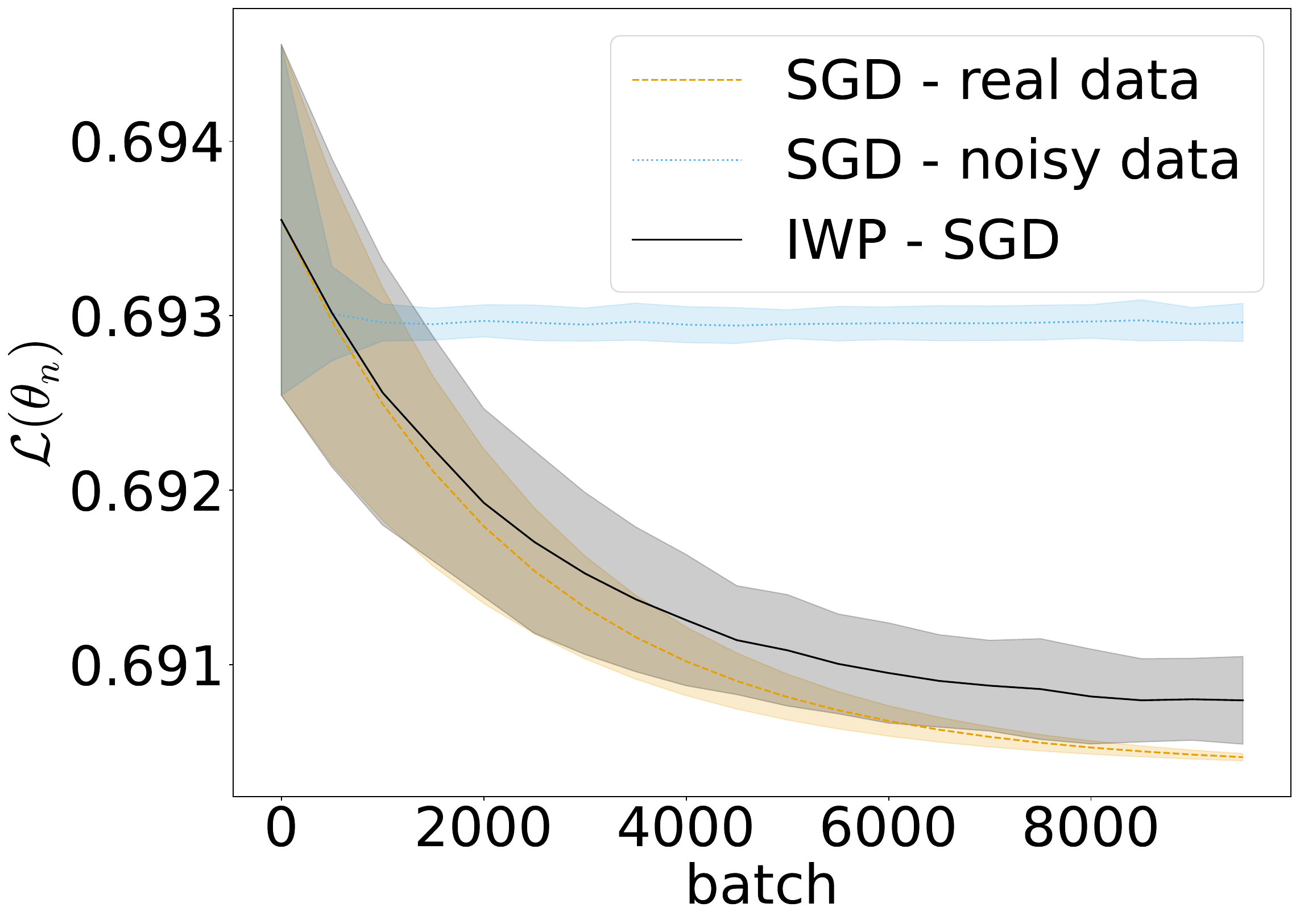}
        \end{minipage}
        \begin{minipage}[b]{0.48\textwidth}
        \includegraphics[trim=0 0 0 0, clip,width=0.9\textwidth]{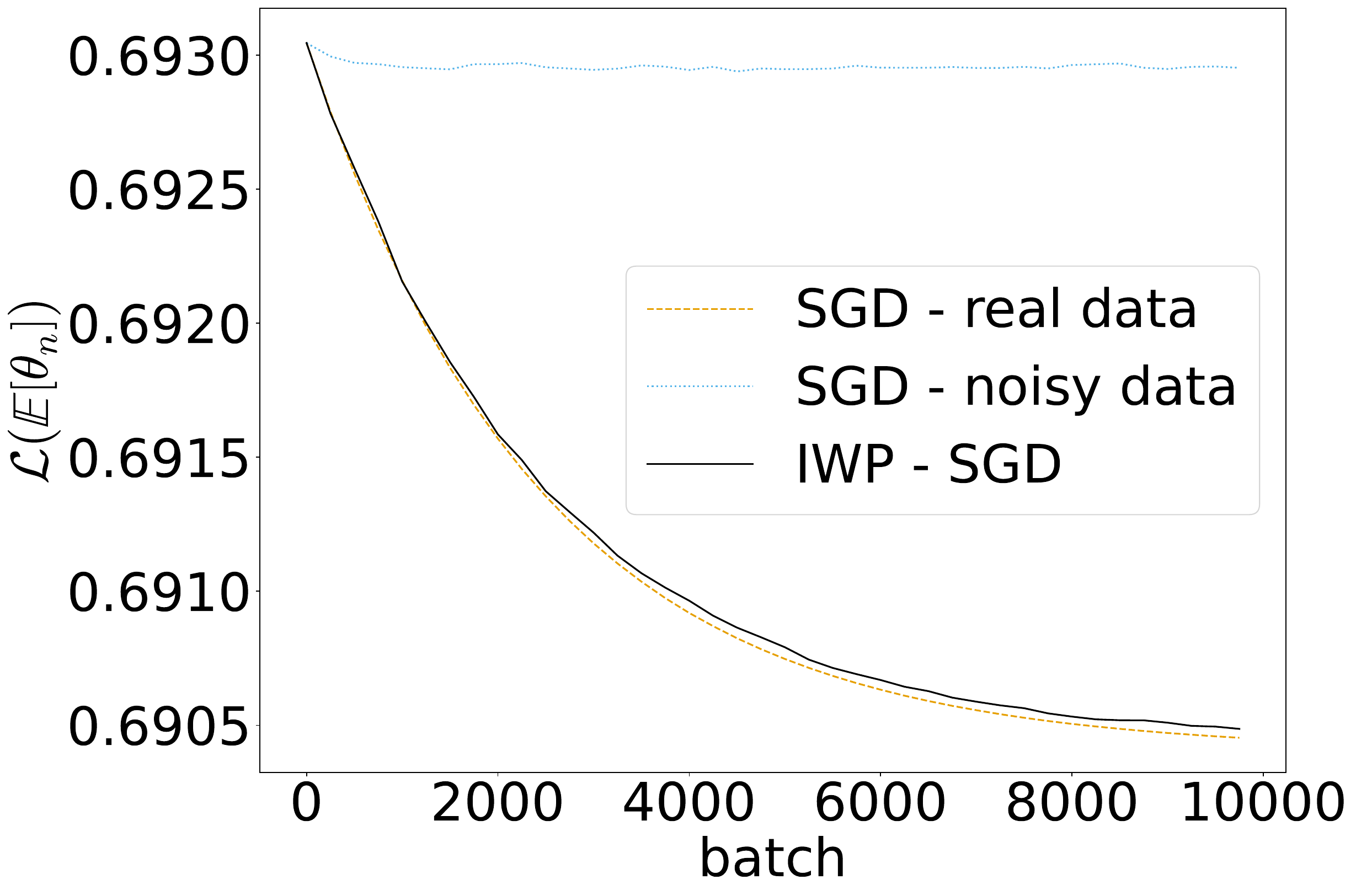}
        \end{minipage}
        \caption{ACSPublicCoverage}
        \label{fig:logloss_comp_folktables_ACSPublicCoverage}
    \end{subfigure}
    \begin{subfigure}[b]{0.9\textwidth}
        \centering
        \begin{minipage}[b]{0.48\textwidth}
        \includegraphics[trim=0 0 0 0, clip,width=0.9\textwidth]{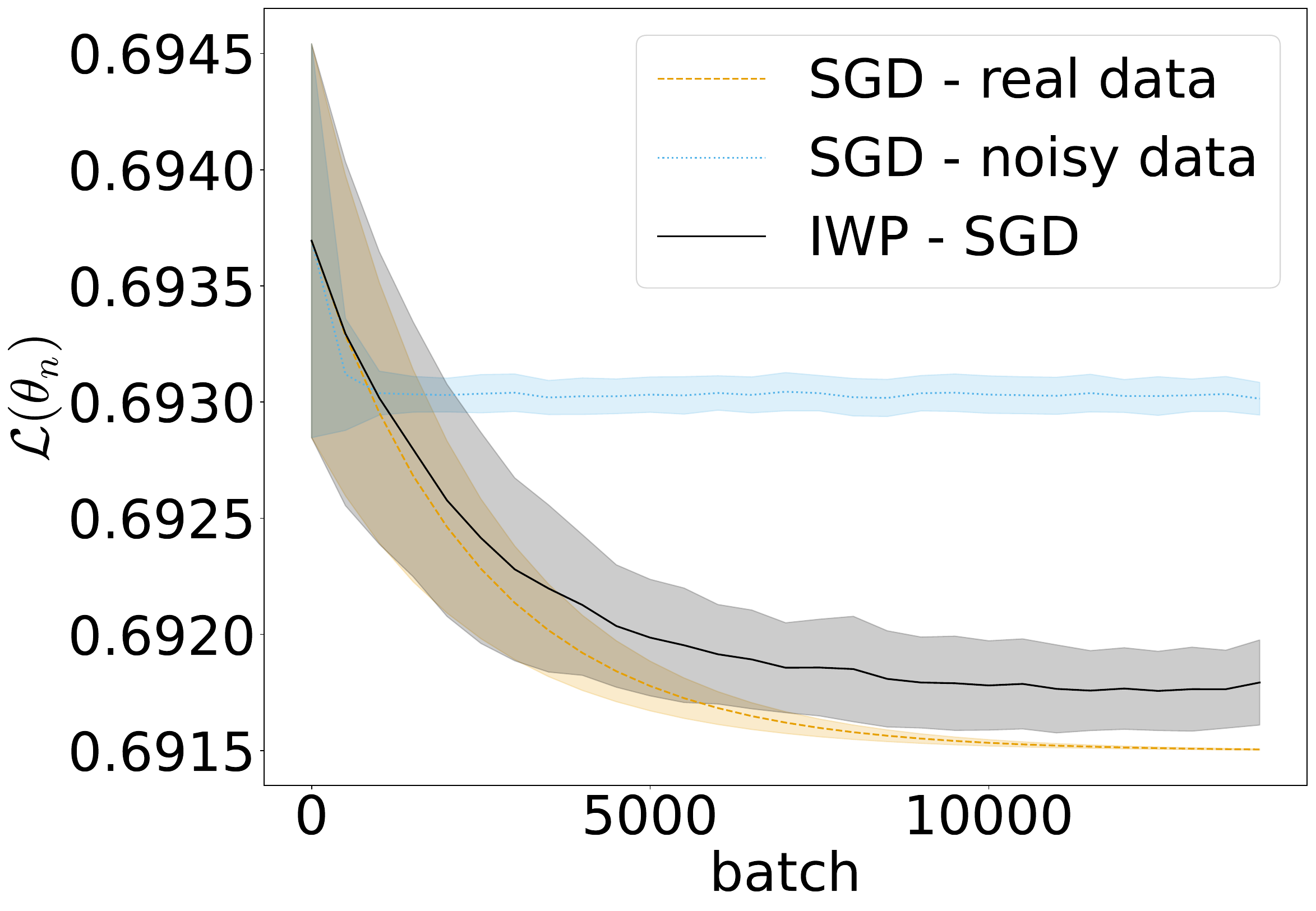}
        \end{minipage}
        \begin{minipage}[b]{0.48\textwidth}
        \includegraphics[trim=0 0 0 0, clip,width=0.9\textwidth]{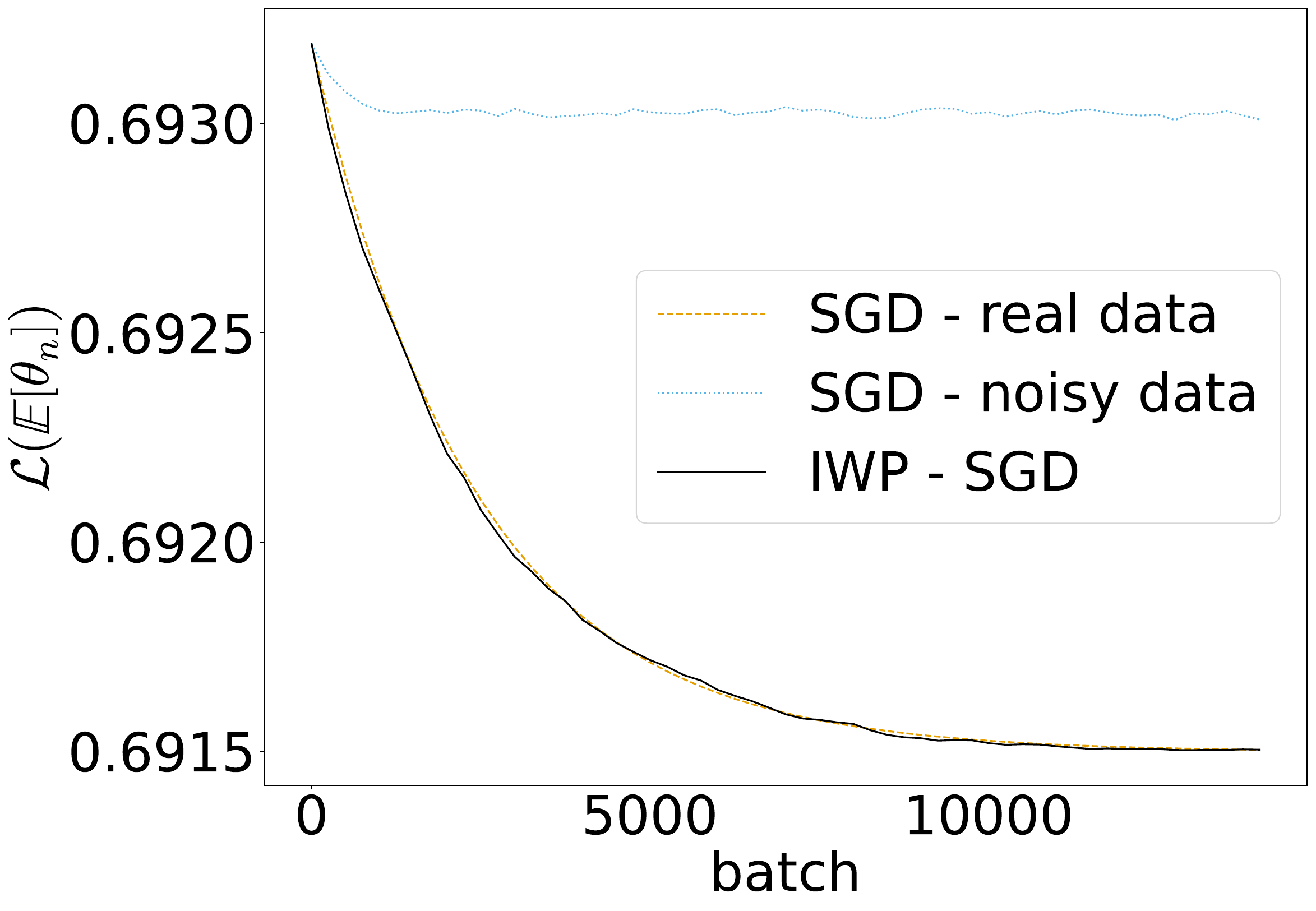}
        \end{minipage}
        \caption{ACSIncome}
        \label{fig:logloss_comp_folktables_ACSIncome}
    \end{subfigure}

    \caption{Comparison of SGD convergence of the log loss under $(2,10^{-5})$-LDP on ACSPublicCoverage and ACSIncome.}
    \label{fig:logloss_comp_folktables}
\end{figure}

\begin{figure}
   \centering
    \begin{subfigure}[b]{0.48\textwidth}
        \centering
        \includegraphics[trim=0 0 0 0, clip,width=0.95\textwidth]{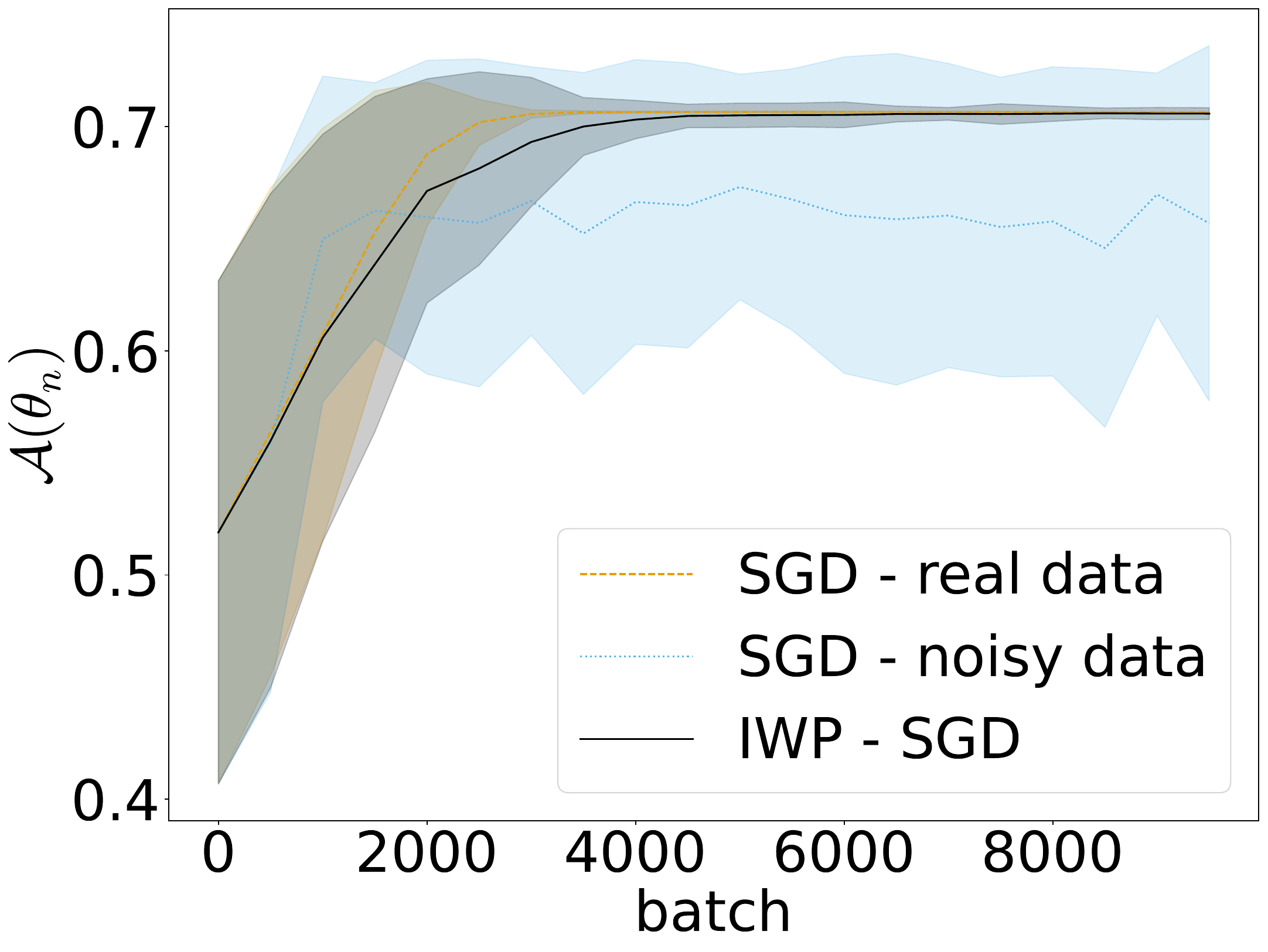}
        \caption{ACSPublicCoverage}
        \label{fig:Acc_logloss_comp_folktables_ACSPublicCoverage}
    \end{subfigure}
    \begin{subfigure}[b]{0.48\textwidth}
        \centering
        \includegraphics[trim=0 0 0 0, clip,width=0.95\textwidth]{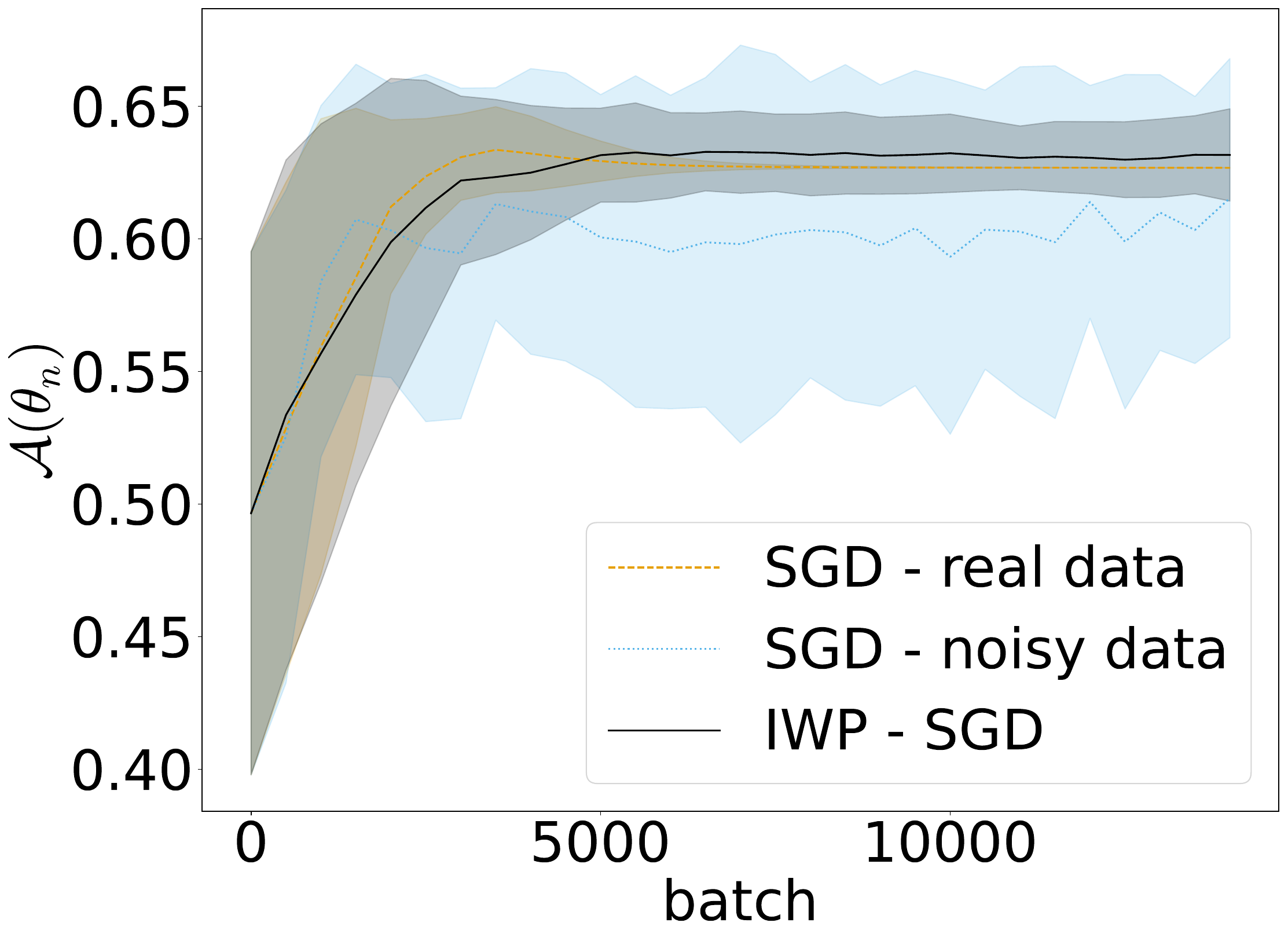}
        \caption{ACSIncome}
        \label{fig:Acc_logloss_comp_folktables_ACSIncome}
    \end{subfigure}

    \caption{Comparison of Accuracy convergence of the model fitted on log loss under $(2,10^{-5})$-LDP on ACSPublicCoverage and ACSIncome.}
    \label{fig:Acc_logloss_comp_folktables}
\end{figure}

\subsection{Experiments on Regression}

The model presented on the paper can be generalized to regression. In this case, we only use the Weierstrass transform because the target is also continuous, and we have:
$$\TT_{\epsilon,\delta}[h](x,y) = \WW_{\sigma_{\epsilon_x,\delta_x}^2}\left[\WW_{\sigma_{\epsilon_y,\delta_y}^2}\left[h \right](\cdot,y) \right](x)$$
with $\epsilon=\epsilon_x+\epsilon_y$ and $\delta = \delta_x+\delta_y$. In the linear regression model where $\sY\subset\RR$, we have $\ell(\theta,x,y) = \frac{1}{2}(\theta^\top x - y)^2$ and there is no bias in the gradient to correct with respect to the labels. Indeed, it is linear with respect to $y$:
$$\nabla_\theta\ell(\theta,x,y) = x\theta^\top x - xy.$$

We then study a variant of the ACSIncome consisting in the prediction of the individual's income as a continuous value instead of the threshold at \$50~000. For this, we adapt our method to continuous output by replacing the Randomized Response transform by a second Weierstrass transform and we consider the Mean Square Error for the loss. We use $\ell_2$ regularization with the constant $\lambda=10$ forming a regularized loss $\ell(\theta,x,y) +\lambda\norm{\theta}^2/2$. We average batches of size 128 and use a learning rate of $\gamma=5\cdot10^{-6}$. Figure~\ref{fig:regression_folktables} shows the results of this experiment. The conclusions are the same as in binary classification, IWP-SGD converges to the same model as SGD - real data but with an increased variance whereas SGD - noisy data converges to a different solution, illustrating the presence of a bias.

\begin{figure}
    \centering
        \includegraphics[trim=0 0 0 0, clip,width=0.8\textwidth]{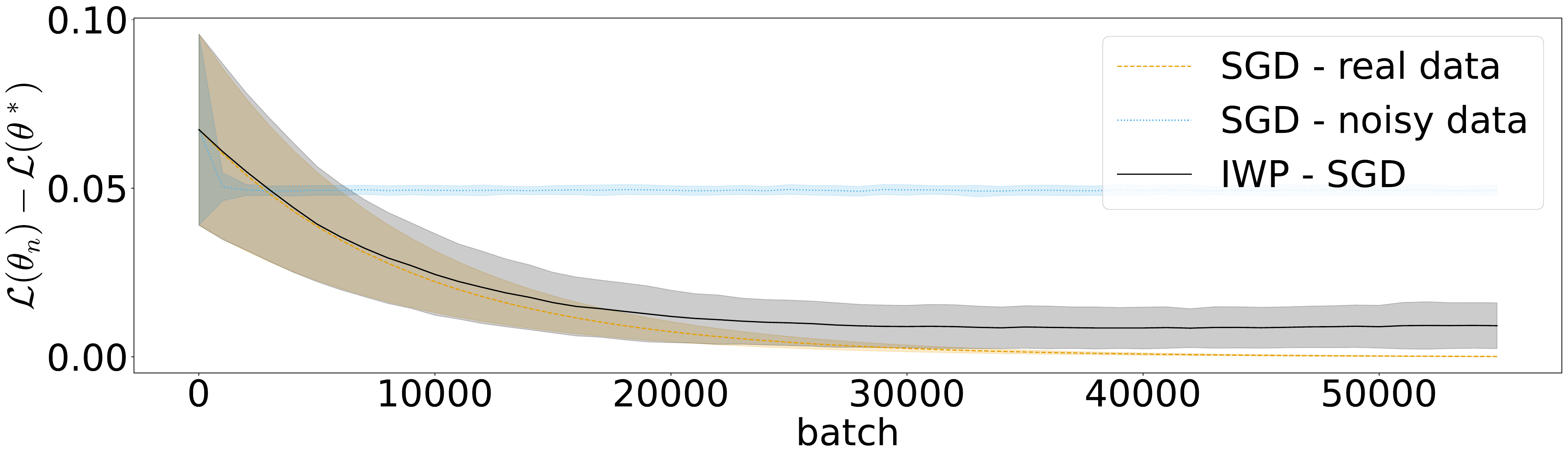}
        \includegraphics[trim=0 0 0 0, clip,width=0.8\textwidth]{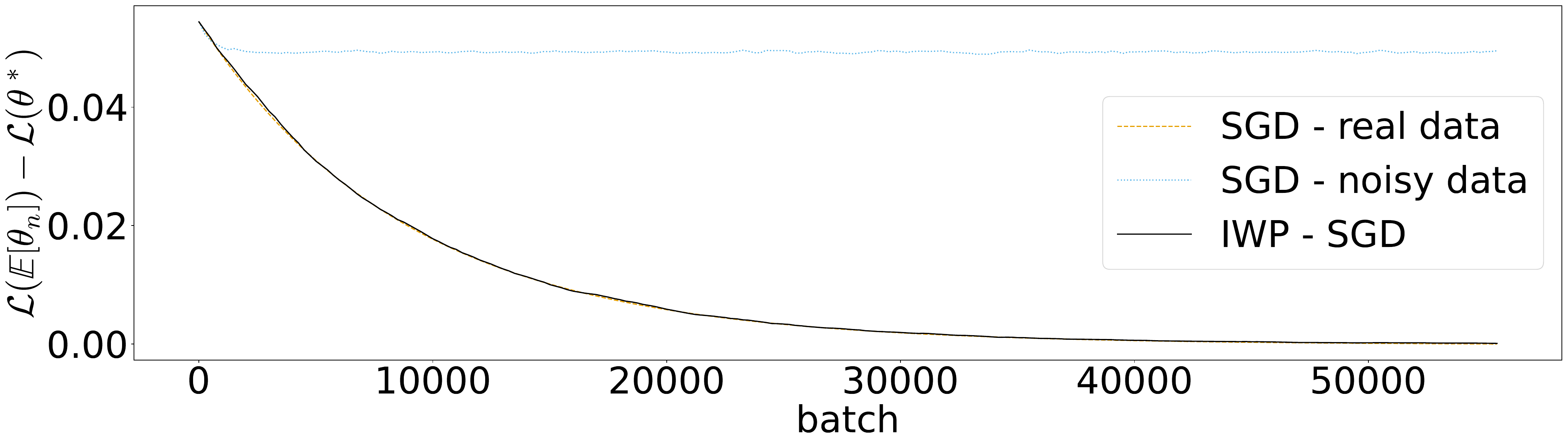}
    \captionof{figure}{Comparison of SGD convergence under $(2,10^{-5})$-LDP on ACSIncome linear regression variant.}
    \label{fig:regression_folktables}
\end{figure}

\end{document}